\newcommand{\cA}{\mathcal{A}}
\newcommand{\cN}{\mathcal{N}}
\newcommand{\cO}{\mathcal{O}}
\newtheorem{theorem}{Theorem}
\newtheorem{lemma}[theorem]{Lemma}
\newtheorem{definition}[theorem]{Definition}
\newcommand{\PP}{\mathbb{P}}
\newcommand{\RR}{\mathbb{R}}
\title{How Over-Parameterization Slows Down Gradient Descent in Matrix Sensing: The Curses of Symmetry and Initialization}
\author{Nuoya Xiong\thanks{IIIS, Tsinghua University. Email: \texttt{nuoyaxiong@gmail.com}. Part of the work was done while Nuoya Xiong was visiting  the University of Washington.} \qquad Lijun Ding\thanks{Wisconsin Institute for Discovery, University of Wisconsin-Madison, Madison.  Email: \texttt{lding47@wisc.edu}}
\qquad Simon S. Du\thanks{Paul G. Allen School of Computer Science and Engineering, University of Washington.  Email: \texttt{ssdu@cs.washington.edu}} }
\date{\today}
\begin{document}

\maketitle

\begin{abstract}
    This paper rigorously shows how over-parameterization dramatically changes the convergence behaviors of gradient descent (GD) for the matrix sensing problem, where the goal is to recover an unknown low-rank ground-truth matrix from near-isotropic linear measurements.
First, we consider the symmetric setting with the symmetric parameterization where $M^\star \in \mathbb{R}^{n \times n}$ is a positive semi-definite unknown matrix of rank $r \ll n$, and one uses a symmetric parameterization $XX^\top$ to learn $M^\star$. Here, $X \in \mathbb{R}^{n \times k}$ with $k > r$ is the factor matrix.
We give a novel $\Omega\left(1/T^2\right)$ \emph{lower bound} of randomly initialized GD for the over-parameterized case ($k >r$) where $T$ is the number of iterations.
This is in stark contrast to the exact-parameterization scenario ($k=r$) where the convergence rate is $\exp\left(-\Omega\left(T\right)\right)$.
Next, we study asymmetric setting where $M^\star \in \mathbb{R}^{n_1 \times n_2}$ is the unknown matrix of rank $r \ll \min\{n_1,n_2\}$, and one uses an asymmetric parameterization $FG^\top$ to learn $M^\star$ where $F \in \mathbb{R}^{n_1 \times k}$ and $G \in \mathbb{R}^{n_2 \times k}$. Building on prior work, 
we give a global exact convergence result of randomly initialized GD for the exact-parameterization case ($k=r$) with an $\exp\left(-\Omega\left(T\right)\right)$ rate.
Furthermore, we give the first global exact convergence result for the over-parameterization case ($k>r$) with an $\exp\left(-\Omega\left(\alpha^2 T\right)\right)$ rate where $\alpha$ is the initialization scale.
This linear convergence result in the over-parameterization case is especially significant because one can apply the asymmetric parameterization to the symmetric setting to speed up from $\Omega\left(1/T^2\right)$ to linear convergence. Therefore, we identify a surprising phenomenon: \emph{asymmetric parameterization can exponentially speed up convergence.}
Equally surprising is our analysis that highlights the importance of \emph{imbalance} between $F$ and $G$. This is in sharp contrast to prior works which emphasize balance. 
We further give an example showing the dependency on $\alpha$ in the convergence rate is unavoidable in the worst case.
On the other hand, we propose a novel method that only modifies one step of GD and obtains a convergence rate independent of $\alpha$, recovering the rate in the exact-parameterization case.
We provide empirical studies to verify our theoretical findings.
\end{abstract}

\section{Introduction}
\label{sec:intro}

A line of recent work showed over-parameterization plays a key role in optimization, especially for neural networks~\citep{allen2019learning, du2018gradient, jacot2018neural, safran2018spurious, chizat2019lazy, wei2019regularization, nguyen2020rigorous, fang2021modeling, lu2020mean,zou2020gradient}.
However, our understanding of the impact of over-parameterization on optimization is far from complete.
In this paper, we focus on the canonical matrix sensing problem and show that over-parameterization qualitatively changes the convergence behaviors of gradient descent (GD).

Matrix sensing aims to recover a low-rank unknown matrix $M^\star$ from $m$ linear measurements, 
\begin{equation}
\label{eq: observation_model}
y_i = \mathcal{A}_i(M^\star)= \langle A_i,M^\star \rangle = \mathrm{tr}(A_i^\top  M^\star), \text{ for } i=1,\ldots,m,
\end{equation} where $\mathcal{A}_i$ is a linear measurement operator and $A_i$ is the measurement matrix of the same size as $M^\star$.
This is a classical problem with numerous real-world applications, including signal processing \citep{weng2012low} and face recognition \citep{chen2012low}, image reconstruction~\citep{zhao2010low,peng2014reweighted}.
Moreover, this problem can serve as a test-bed of convergence behaviors in deep learning theory since it is non-convex and retains many key phenomena \citep{soltanolkotabi2023implicit, jin2023understanding,li2018algorithmic,li2020towards,arora2019implicit}.
We primarily focus on the \emph{over-parameterized} case where we use a model with rank larger than that of $M^\star$ in the learning process. This case is particularly relevant because $\text{rank}(M^\star)$ is usually unknown in practice.

\begin{figure}[t]\label{figure:symmetric}
    \centering
    \subfigure[Loss curve]{\label{figure:loss_symmetric}\includegraphics[scale = 0.28]{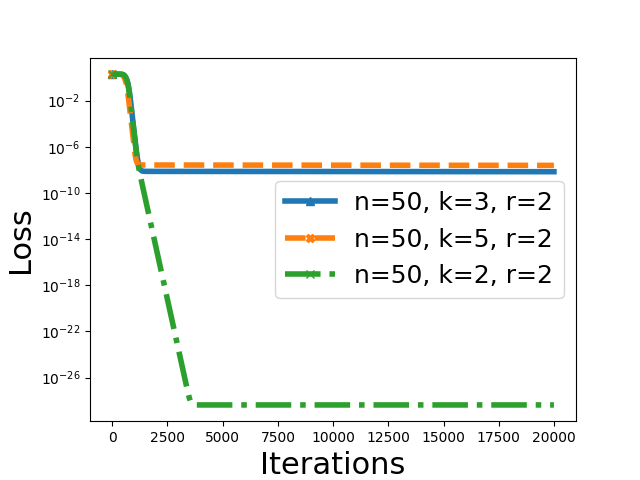}
    }
    \subfigure[Curve of logarithm of loss: $\log_T\|X_tX_t^\top - M^\star\|_F^2$]{\label{figure:logloss_symmetric}\includegraphics[scale=0.28]{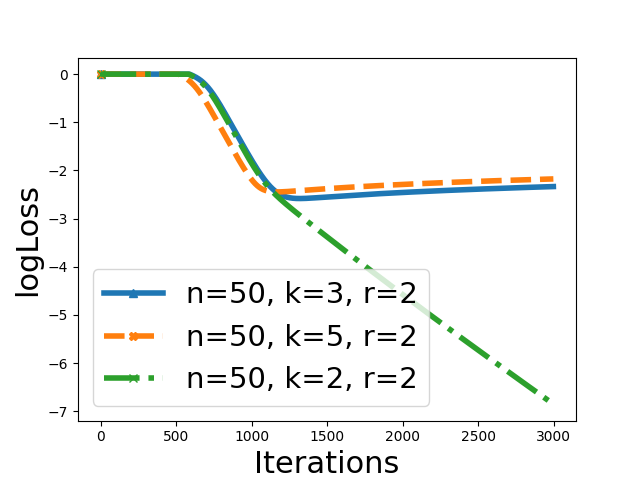}}
    \subfigure[Comparison between symmetric and asymmetric parameterization]{\label{figure:comparison}\includegraphics[scale=0.28]{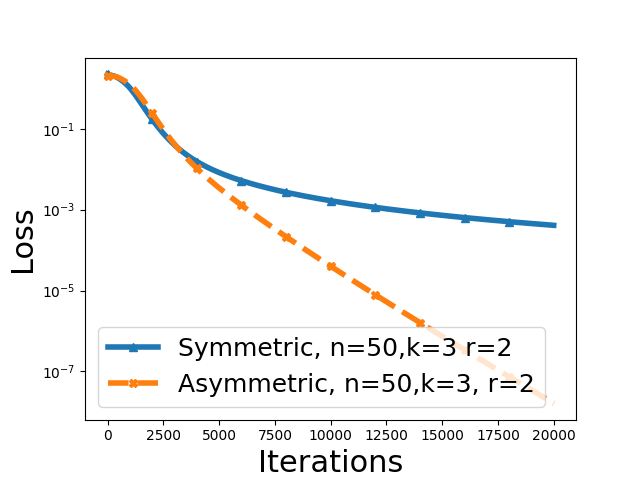}}
            \vspace{-0.4cm}
    \caption{   
    Experiments on symmetric setting. 
    $n$: ambient dimension.
    $k$: rank in our model.
    $r$: true rank.
    The first two figures show that the convergence rate of symmetric matrix factorization in the over-parameterized setting is about $\Theta(1/T^2),$ while the rate of the exact-parameterized setting is linear.  \ref{figure:comparison} shows that using asymmetric parameterization is exponentially faster than symmetric parameterization. 
    See \S \ref{sec:exp} for experimental details. 
    } 
    \vspace{-0.6cm}
\end{figure}

\subsection{Setting 1: Symmetric Matrix Sensing with Symmetric Parameterization}
We first consider the symmetric matrix sensing setting, where $M^\star \in \mathbb{R}^{n \times n}$ is a positive semi-definite matrix of rank $r \ll n$.
A standard approach is to use a factored form $XX^\top$ to learn $M^\star$ where $X \in \mathbb{R}^{n \times k}$. We call this \emph{symmetric parameterization} because $XX^\top$ is always symmetric and positive semi-definite. 
We will also introduce the \emph{asymmetric} parameterization soon. 
We call the case when $k=r$ the \emph{exact-parameterization} because the rank of $XX^\top$ matches that of $M^\star$.
However, in practice, $r$ is often unknown, so one may choose some large enough $k > r$ to ensure the expressiveness of $XX^\top$, and we call this case \emph{over-parameterization}.

We consider using gradient descent to minimize the standard $L_2$ loss for training:
$
    L_{\mathrm{tr}}(X) = \frac{1}{2m}\sum_{i=1}^m \left(y_i- \langle A_i, XX^\top \rangle\right)^2.
$
We use the Frobneius norm of the reconstruction error as the performance metric:
\begin{align}
    L(X) = \frac{1}{2}\Vert XX^\top -M^\star\Vert^2_F. \label{eq: loss_symmetric}
\end{align}
We note that $L(X)$ is also the \emph{matrix factorization} loss and can be viewed as a special case of $L_{\mathrm{tr}}$ when $\{A_i\}_{i=1}^m$ are random Gaussian matrices and the number of linear measurements goes to infinity.

For the exact-parameterization case, one can combine results in \citet{stoger2021small} and \citet{tu2016low} to show that randomly initialized gradient descent enjoys an $\exp\left(-\Omega\left(T\right)\right)$ convergence rate where $T$ is the number of iterations.
For the over-parameterization case, one can combine the results by \citet{stoger2021small} and \citet{zhuo2021computational} to show an $O\left(1/T^2\right)$ convergence rate \emph{upper bound}\footnote{More specifically, one can combine \citep[Theorem 3.3]{stoger2021small} and \citep[Lemma 3]{zhuo2021computational} to achieve the $O(1/T^2)$ rate.  Theorem 3.3 in \citep{stoger2021small} is used to achieve the initial condition in \citep[Lemma 3]{zhuo2021computational}. One can set the noise parameter in \citep[Lemma 3]{zhuo2021computational} as $0$ and replace the subgaussian assumption on $A_i$ there by the restricted isometry property, Definition \ref{def: RIP}).}, which is exponentially worse. This behavior has been empirically observed \citep{zhang2021preconditioned,zhang2023preconditioned,zhuo2021computational} without a rigorous proof. See Figure~\ref{figure:symmetric}.

\noindent 
\textbf{Contribution 1: $\Omega(1/T^2)$ Lower Bound for Symmetric Over-Parameterization}. Our first contribution is a rigorous exponential separation between the exact-parameterization and over-parameterization by proving an $\Omega(1/T^2)$ convergence rate \emph{lower bound} for the symmetric setting with the symmetric over-parameterization.

\begin{theorem}[Informal]\label{informal thm:symmetric lower bound}
Suppose we initialize $X$ with a Gaussian distribution with small enough variance that scales with $\alpha^2$, and use gradient descent with a small enough constant step size to optimize the matrix factorization loss \eqref{eq: loss_symmetric}.
Let $X_t$ denote the factor matrix at the $t$-th iteration. Then, with high probability over the initialization, there exists $T^{(0)} > 0$ such that we have\footnote{For clarity, in our informal theorems in Section~\ref{sec:intro}, we only display the dependency on $\alpha$ and $T$, and ignore parameters such as dimension, condition number, and step size.} 
\footnote{$T^{(0)}$ here and $T^{(1)}$, $T^{(2)}$, $T^{(3)}$ in theorems below represent the burn-in time to get to a neighborhood of an optimum, which can depend on initialization scale $\alpha$, condition number, dimension, and step size.}\begin{equation}\label{eq: symMF.rate}
        \|X_tX_t^\top -M^\star\|_F^2 \ge \left(\frac{\alpha^2}{ t}\right)^2, \forall t \ge T^{(0)}. 
    \end{equation}

\end{theorem}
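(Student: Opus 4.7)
The plan is to decompose $X_t$ orthogonally across the column space of $M^\star$ and its complement, lower bound the loss by the ``off-signal'' block alone, and show that this off-signal block obeys a self-referential quadratic recursion whose solution cannot decay faster than $1/\sqrt{t}$. Write $M^\star = U^\star \Sigma^\star (U^\star)^\top$ with $U^\star \in \mathbb{R}^{n\times r}$ orthonormal, set $P_\parallel = U^\star(U^\star)^\top$, $P_\perp = I - P_\parallel$, and split $X_t = A_t + B_t$ with $A_t = P_\parallel X_t$, $B_t = P_\perp X_t$. Since $P_\perp M^\star = 0$ and the four blocks of $X_tX_t^\top - M^\star$ are Frobenius-orthogonal,
\[
\|X_tX_t^\top - M^\star\|_F^2 \;\ge\; \|B_tB_t^\top\|_F^2 \;\ge\; \sigma_{\max}(B_t)^4,
\]
so it suffices to show $\sigma_{\max}(B_t)^2 \ge \alpha^2/t$ for all $t \ge T^{(0)}$.

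\textbf{Source of the $1/\sqrt{t}$ decay.} A direct computation using $P_\perp M^\star = 0$ and $A_t^\top B_t = 0$ yields the $M^\star$-free recursion $B_{t+1} = B_t(I - \eta X_t^\top X_t)$ together with $X_t^\top X_t = A_t^\top A_t + B_t^\top B_t$. The key observation is that $A_t^\top A_t \in \mathbb{R}^{k\times k}$ has rank at most $r$, so its kernel has dimension at least $k-r \ge 1$, and on this kernel $X_t^\top X_t$ coincides with $B_t^\top B_t$. Letting $\Pi_t$ be the orthogonal projection onto $\ker(A_t^\top A_t)$ and tracking $u_t := \sigma_{\max}(B_t\Pi_t)^2$, the dynamics along a slow direction becomes (modulo perturbations discussed below) the scalar iteration $u_{t+1} \ge u_t(1 - \eta u_t)^2$, the discrete analogue of the ODE $\dot u = -2\eta u^2$. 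Its solution $u(t) = 1/(2\eta t + 1/u_0)$ satisfies $u(t) \ge \alpha^2/t$ as soon as $u_0 = \Theta(\alpha^2)$.

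\textbf{Initial condition from the signal phase.} To seed the iteration with $u_{T^{(0)}} = \Theta(\alpha^2)$, I would invoke the small-initialization signal-phase analysis of \citet{stoger2021small} (their Theorem 3.3, already leveraged in the $O(1/T^2)$ upper-bound footnote): after a burn-in $T^{(0)} = \tilde O(1/(\eta\sigma_r^\star))$, the signal block obeys $A_{T^{(0)}}A_{T^{(0)}}^\top \approx M^\star$, and throughout the burn-in window $\|X_t\|_{op} = O(\alpha)$ keeps the shrinkage $I - \eta X_t^\top X_t$ close to identity, so $B_t$ barely moves and $\sigma_{\max}(B_{T^{(0)}}\Pi_{T^{(0)}}) \ge \tfrac12\sigma_{\max}(B_0\Pi_0) = \Theta(\alpha)$ with high probability over the Gaussian initialization.

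\textbf{Main obstacle.} The crux is making the slow-direction argument rigorous uniformly in $t$. A naive total-norm quantity such as $\|B_t\|_F^2$ is insufficient, because its evolution contains a term $-2\eta\|A_tB_t^\top\|_F^2$ that in the worst case is of order $\sigma_1^\star\|B_t\|_F^2$ and would force exponential decay unless one genuinely restricts to $\ker(A_t^\top A_t)$. But this kernel rotates with $t$, and a top right singular vector of $B_t\Pi_t$ need not be an eigenvector of $B_t^\top B_t$, so reducing to the clean scalar recursion suffers a ``spectral-leakage'' perturbation that one must show is absorbable --- either by establishing convergence of the row-space of $A_t$ in $\mathbb{R}^k$ (so $\Pi_t$ stabilizes) or by bounding $\|\Pi_{t+1} - \Pi_t\|_{op} = O(\eta)$ along the trajectory and folding this into the ODE drift. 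The remaining ingredients (orthogonal decomposition, import of signal-phase bounds, and discrete-to-continuous calibration) are routine by comparison.
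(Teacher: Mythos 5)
Your decomposition and loss lower bound match the paper's: after diagonalizing so that $M^\star=\Sigma$, your $B_t=P_\perp X_t$ is exactly the paper's block of off-signal rows $(x_i^t)_{i>r}$, and both arguments hinge on a recursion of shape $u_{t+1}\ge u_t\bigl(1-O(\eta u_t)\bigr)$. The strategic divergence is in how the cross-interaction with the signal block is neutralized. The paper never introduces a kernel projector; it tracks the aggregate $A_t=\sum_{i>r}\|x_i^t\|^2=\|B_t\|_F^2$ together with the maximum squared cosine $\theta_t^U$ between a signal row and any other row, proves $\theta_t^U$ contracts linearly while $A_t$ contracts only at rate $O(\eta A_t)$, and waits until $\theta_t^U = O\bigl(A_t/(r\sigma_1)\bigr)$ before closing the recursion. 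You instead project onto $\ker(A_t^\top A_t)$ to annihilate the cross-term exactly, but you correctly observe that this subspace rotates, and resolving that rotation is essentially equivalent in difficulty to the paper's angle analysis---it is not a finishing step but the core of the proof. There is also a subtler defect in the scalar reduction even with the rotation frozen: because $\Pi_t$ and $B_t^\top B_t$ do not commute, the top eigenvector $v$ of $\Pi_tB_t^\top B_t\Pi_t$ is generally not an eigenvector of $B_t^\top B_t$, and since $f(\lambda)=\lambda(1-\eta\lambda)^2$ is concave for $\lambda\le 2/(3\eta)$, Jensen gives $v^\top f(B_t^\top B_t)v\le f(v^\top B_t^\top B_tv)=f(u_t)$---the wrong-direction inequality for the claimed $u_{t+1}\ge f(u_t)$. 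The paper sidesteps this entirely by tracking the linear functional $\|B_t\|_F^2$, for which no Jensen step arises.

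Your initial-condition argument is also factually wrong. During the burn-in the signal block grows until $A_tA_t^\top\approx M^\star$, so by the end of that window $\|X_t\|_{op}=\Theta(\sqrt{\sigma_1})$, not $O(\alpha)$, and the shrinkage matrix $I-\eta X_t^\top X_t$ has eigenvalues as small as $1-\Theta(\eta\sigma_1)$ in the $A_t$-directions. Over the $\Theta\bigl(\log(\sqrt{\sigma_1}/\alpha)/(\eta\sigma_r)\bigr)$ burn-in iterations, the component of $B_t$ aligned with those directions would decay by roughly $\alpha^{\Theta(\kappa)}$, so ``$B_t$ barely moves'' does not follow from any bound on $\|X_t\|_{op}$. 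The paper establishes $\|x_i^{T^{(0)}}\|^2\ge\alpha^2/4$ for $i>r$ only after showing that the initial angles satisfy $\theta_0=\widetilde{O}\bigl(1/(r\kappa)^2\bigr)$ (which forces $k$ to be polylogarithmically large in $1/\alpha$, a requirement your sketch drops) and that this smallness propagates through the burn-in, so the leakage into the fast-shrinking $A_t$-directions never accumulates. Without that angle control, neither the seed $u_{T^{(0)}}=\Theta(\alpha^2)$ nor the subsequent slow-decay recursion is established.
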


\paragraph{Technical Insight:} %
We find the root cause of the slow convergence is from the \emph{redundant space} in $XX^\top$, which converges to $0$ at a much slower rate compared to the \emph{signal space} which converges to $M^\star$ with a linear rate.
To derive the lower bound, we construct a potential function and use some novel analyses of the updating rule to show that the potential function decreases slowly after a few rounds.
See the precise theorem and more technical discussions in Section~\ref{sec: lbsymf}.
    



\subsection{Setting 2: Symmetric and Asymmetric Matrix Sensing with Asymmetric Parameterization}

Next, we consider the more general asymmetric matrix sensing problem where the ground-truth $M^\star \in \mathbb{R}^{n_1 \times n_2}$ is an asymmetric matrix of rank $r$.
For this setting, we must use the \emph{asymmetric parameterization}.
Specifically, we use $FG^\top$ to learn $M^\star$ where $F \in \mathbb{R}^{n_1 \times k}$ and $G \in \mathbb{R}^{n_2 \times k}$.
Same as in the symmetric case, exact-parameterization means $k=r$ and over-parameterization means $k > r$.
We still use gradient descent to optimize the $L_2$ loss for training:
\begin{align}L_{\mathrm{tr}}(F,G) &= \frac{1}{2m}\sum_{i=1}^m \left(y_i-\langle A_i, FG^\top \rangle \right)^2, \label{eq: asym_ms_loss}
\end{align}
and the performance metric is still:
$
L(F,G) = \frac{1}{2}\Vert FG^\top -M^\star\|_F^2.
$ To enable the analysis, we assume throughout the paper that the matrices $\{A_i\}_{i=1}^m$ satisfies the Restricted Isometry Property (RIP) of order $2k+1$ with parameter $\delta \leq \tilde{\mathcal{O}}(\frac{1}{\sqrt{kr}})$. (See Definition \ref{def: RIP} for the detailed definition).

Also note that even for the \emph{symmetric matrix sensing problem} where $M^\star$ is positive semi-definite, one can still use \emph{asymmetric parameterization}.
Although doing so seems unnecessary at the first glance, we will soon see using asymmetric parameterization enjoys an \emph{exponential gain}.

\paragraph{Contribution 2: Global Exact Convergence of Gradient Descent for Asymmetric Exact-Parameterization with a Linear Convergence Rate}. Our second contribution is a global exact convergence result for randomly initialized gradient descent, and we show it enjoys a linear convergence rate.\footnote{By exact convergence we mean the error goes to $0$ as $t$ goes to infinity in contrast to prior works, which only guarantee to reach a point with the error proportional to the initialization scale $\alpha$ within a finite number of iterations.}

\begin{theorem}[Informal]\label{informal theorem:main theorem asymmetric exact}
In the exact-parameterization setting ($k=r$), suppose we initialize $F$ and $G$ using a Gaussian distribution with small enough variance $\alpha^2$ and use gradient descent with a small enough constant step size to optimize the asymmetric matrix sensing loss~\eqref{eq: asym_ms_loss}.
Let $F_t$ and $G_t$ denote the factor matrices at the $t$-the iteration. Then, with high probability over the random initialization, there exists $T^{(1)} > 0$ such that we have
\begin{align}\|F_tG_t^\top -M^\star\|_F^2 = \exp\left(-\Omega\left(t\right)\right),\forall t \ge T^{(1)}.
\end{align}

\end{theorem}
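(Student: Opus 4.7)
The plan is to split the trajectory into two regimes and stitch them together: a spectral alignment phase that drives $(F_t,G_t)$ from the small random initialization into a warm-start neighborhood of $M^\star$, followed by a local phase that contracts the error to zero at a linear rate.

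\emph{Phase I (alignment from random init).} Using the RIP of order $2k+1$ to replace $\mathcal{A}^*\mathcal{A}$ by the identity on matrices of rank at most $2k$ (at the cost of a perturbation of size $\delta$), the updates take the form, up to a small error,
\[
F_{t+1} = F_t + \eta (M^\star - F_tG_t^\top)G_t, \qquad G_{t+1} = G_t + \eta (M^\star - F_tG_t^\top)^\top F_t.
\]
Writing the thin SVD $M^\star = U\Sigma V^\top$, I would track the four blocks obtained by projecting $F_t$ onto $U,U_\perp$ and $G_t$ onto $V,V_\perp$. The signal blocks should grow geometrically at rate about $1+\eta\sigma_r(M^\star)$ from scale $\alpha$ up to scale $\Theta(\sqrt{\sigma_r(M^\star)})$, while the perpendicular blocks only pick up cross terms through $F_tG_t^\top-M^\star$ and remain $\tilde{O}(\alpha)$ throughout. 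This adapts the analysis of \citet{stoger2021small} from the symmetric setting to the coupled asymmetric flow. After a burn-in $T^{(1)} = O(\sigma_r^{-1}\eta^{-1}\log(1/\alpha))$, the iterate $F_{T^{(1)}}G_{T^{(1)}}^\top$ lies in a spectral-norm neighborhood of $M^\star$.

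\emph{Phase II (local linear convergence).} From the warm start I would apply a local contraction argument in the spirit of \citet{tu2016low}. The key ingredient is a restricted Polyak-type bound
\[
\|\nabla_F L_{\mathrm{tr}}\|_F^2 + \|\nabla_G L_{\mathrm{tr}}\|_F^2 \;\gtrsim\; \sigma_r(M^\star)\,L(F_t,G_t),
\]
valid on the warm-start basin, which combined with the RIP-based smoothness estimate yields the per-step contraction $L(F_{t+1},G_{t+1}) \le (1-c\eta\sigma_r(M^\star))\,L(F_t,G_t)$ and hence the claimed $\exp(-\Omega(t))$ rate. A supporting invariant is near-balancedness: by induction on $t$ and the (nearly conserved) identity $F_{t+1}^\top F_{t+1} - G_{t+1}^\top G_{t+1} = F_t^\top F_t - G_t^\top G_t + O(\eta\delta)$, the imbalance remains $O(\alpha^2)$ for all $t$, so the iterates behave as if balanced and the PL inequality applies with a constant independent of $\alpha$.

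\emph{Main obstacle.} The hardest part is proving \emph{exact} convergence rather than convergence to a ball of radius $\alpha$. A priori the perpendicular components $U_\perp^\top F_t$ and $V_\perp^\top G_t$, which remain at scale $\alpha$ throughout Phase I, could act as a persistent residual. In the exact-parameterized regime $k=r$ this is ultimately overcome because the restricted Hessian at $M^\star$ is positive definite on every direction (there is no redundant subspace to absorb residuals), so Phase II drives the perpendicular components to zero at the same geometric rate as the signal. Carrying this through the coupled $(F_t,G_t)$ recursion while simultaneously maintaining the alignment and near-balance invariants across the transition from Phase I to Phase II is the most delicate piece, and it must be shown on the high-probability event that the Gaussian initialization is sufficiently generic — e.g., that the top-$r$ signal block of $F_0$ and $G_0$ has smallest singular value of order $\alpha/\mathrm{poly}(n)$, which follows from standard anti-concentration for Gaussian matrices.
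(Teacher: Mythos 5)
Your proposal is sound in outline and reaches the same endpoint, but the local phase takes a different technical route from the paper's. The paper does not invoke a Polyak--\L{}ojasiewicz inequality. Instead, after using \citet{soltanolkotabi2023implicit} to reach the basin $\|F_tG_t^\top - \Sigma\| \le \sigma_r/2$, it works directly with the block decomposition $F = [U;J]$, $G = [V;K]$ (top $r$ rows vs.\ bottom $n-r$ rows after the diagonal reduction $M^\star = \Sigma$), defines $M_t = \max\{\|U_tV_t^\top - \Sigma\|,\, \|U_tK_t^\top\|,\, \|J_tV_t^\top\|\}$, and shows $M_t$ contracts geometrically whenever it dominates $(17\sigma_1\delta_{2k+1}+\alpha^2)\|J_tK_t^\top\|$. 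The punchline for $k=r$ is that $U_t$ is a square $r\times r$ matrix with $\sigma_r(U_t) \gtrsim \sigma_r/\sqrt{\sigma_1}$, so $\|U_tK_t^\top\| \gtrsim \sigma_r(U_t)\|K_t\|$ while $\|J_tK_t^\top\| \le \|J_t\|\|K_t\| \le \sqrt{\sigma_1}\|K_t\|$; hence $M_t$ always dominates $\|J_tK_t^\top\|$ and the condition never fails, giving unconditional geometric decay of $M_t$ and of the full loss $\le 4M_t$. Your PL argument is essentially the variational shadow of the same observation -- full column rank of $F_t,G_t$ (for $k=r$) and bounded imbalance imply a nonvanishing restricted curvature -- and it is the route of \citet{tu2016low,ma2021beyond}, so it can be made to work. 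The tradeoff is that the PL route requires you to explicitly maintain lower bounds on $\sigma_{\min}(F_t),\sigma_{\min}(G_t)$ and to be careful that the PL constant is really $\sim\sigma_r(M^\star)^2/\sigma_1$ (not $\sigma_r$ as you wrote, since $\sigma_{\min}(F),\sigma_{\min}(G)\sim\sqrt{\sigma_r}$ only after balancing), whereas the paper's $M_t$-recursion sidesteps any variational inequality and yields those bounds as byproducts of the block tracking. Two small caveats worth fixing: the restricted Hessian at $M^\star$ is \emph{not} positive definite on every direction -- it vanishes along the gauge orbit $(F,G)\mapsto(FQ,GQ^{-\top})$; the correct statement is that it is positive on the quotient, which is what keeps the PL inequality valid. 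And your Phase I cites \citet{stoger2021small} (symmetric); adapting it to the coupled asymmetric flow is a real lemma and is exactly what \citet{soltanolkotabi2023implicit} supplies, which is what the paper uses as a black box.
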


Compared to our results, prior results either require initialization to be close to optimal~\citep{ma2021beyond}, or can only guarantee to find a point with an error of similar scale as the initialization~\citep{soltanolkotabi2023implicit}.
In contrast, our result only relies on random initialization and guarantees the error goes to $0$ as $t$ goes to infinity.
Notably, this convergence rate is independent of $\alpha$. See Figure~\ref{figure:loss exact scale different}.

Naturally, such a result is expected by the works \citep[Theorem 1]{ma2021beyond} and \citep{soltanolkotabi2023implicit}. Indeed, one should be able to achieve the initial condition of \citep[Theorem 1]{ma2021beyond} by carefully inspecting the proof of \citep{soltanolkotabi2023implicit} and additional work in translating different measures of balancing and closeness. Our proof is very different from \citep{ma2021beyond} as we further decompose the factors $F$ and $G$, and we only need \cite[Theroem 3.3]{soltanolkotabi2023implicit} to deal with the initial phase.
 


\paragraph{Contribution 3: Global Exact Convergence of Gradient Descent for Asymmetric Over-Parameterization with an Initialization-Dependent Linear Convergence Rate.}
Our next contribution is analogue theorem for the over-parameterization case with the caveat that the initialization scale $\alpha$ also appears in the convergence rate.

\begin{theorem}[Informal]\label{informal theorem:main theorem assymmetric}
In the over-parameterization setting ($k>r$), suppose we initialize $F$ and $G$ using a Gaussian distribution with small enough variance $\alpha^2$ and use gradient descent with a small enough constant step size to optimize the asymmetric matrix sensing loss~\eqref{eq: asym_ms_loss}.
Let $F_t$ and $G_t$ denote the factor matrices at the $t$-the iteration. Then, with high probability over the random initialization, there exists $T^{(2)} > 0$ such that we have
\begin{align}\|F_tG_t^\top -M^\star\|_F^2 = \exp\left(-\Omega\left(\alpha^2t\right)\right), \forall t \ge T^{(2)}.
\end{align}

\end{theorem}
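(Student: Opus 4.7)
The plan is to prove the theorem in two phases: an initial alignment phase and a subsequent two-scale convergence phase. In the alignment phase, I would invoke the spectral analysis of \citet{stoger2021small} (Theorem 3.3), adapted to the asymmetric parameterization, to show that after a burn-in time $T^{(2)}$ the iterates enter a ``good'' region: the components of $F_t$ and $G_t$ along the top-$r$ singular subspaces of $M^\star$ are well aligned with the corresponding left/right singular vectors and carry magnitudes of order $\sqrt{\sigma_1(M^\star)}$, while their components in the orthogonal (redundant) directions remain of order $\alpha$. This lets me start the local analysis from a clean reference configuration whose residual error is already controlled by $\alpha$.

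Next, using the SVD $M^\star = U\Sigma V^\top$, I would decompose $F_t = U S_t + U_\perp N_t$ and $G_t = V T_t + V_\perp P_t$, where $S_t,T_t \in \RR^{r \times k}$ are the signal parts and $N_t \in \RR^{(n_1-r)\times k}$, $P_t \in \RR^{(n_2-r)\times k}$ the redundant parts. Under the RIP assumption the sensing gradient approximates the population gradient, so the restricted updates take the clean form $N_{t+1} \approx N_t(I - \eta G_t^\top G_t)$ and $P_{t+1} \approx P_t(I - \eta F_t^\top F_t)$, while $S_t, T_t$ evolve essentially by the exact-parameterization dynamics perturbed by $N_t, P_t$. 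The signal block $S_t T_t^\top - \Sigma$ is then analyzed as in Theorem \ref{informal theorem:main theorem asymmetric exact}: I would adapt the exact-parameterization argument to show it converges to zero at a linear rate independent of $\alpha$.

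The novel part is the redundant-block analysis. The key ingredient is the invariant that $F^\top F - G^\top G$ is exactly preserved by gradient flow and approximately preserved by gradient descent up to accumulated $O(\eta)$ error; projected onto the $(k-r)$-dimensional subspace of $\RR^k$ orthogonal to the row spaces of the converged signal factors, this yields $N_t^\top N_t - P_t^\top P_t \approx N_0^\top N_0 - P_0^\top P_0$ on that subspace. Gaussian initialization produces an \emph{imbalance} of spectral magnitude $\Theta(\alpha^2)$ there, which acts as a positive-definite floor: in each ``bad'' direction $v \in \RR^k$, at least one of $\|N_t v\|^2$ or $\|P_t v\|^2$ must remain of order $\alpha^2$, which forces the complementary factor to contract geometrically with rate $1-\eta\alpha^2$. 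Consequently the product $N_t P_t^\top$ and the cross terms $S_t P_t^\top$, $N_t T_t^\top$ all inherit an $\exp(-\Omega(\alpha^2 t))$ decay, and combining these with the faster-converging signal block via the decomposition $\|F_t G_t^\top - M^\star\|_F \le \|S_t T_t^\top - \Sigma\|_F + \|S_t\|\|P_t\| + \|N_t\|\|T_t\| + \|N_t\|\|P_t\|$ gives the claimed rate.

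The main obstacle, and the bulk of the work, will be an inductive bookkeeping argument that keeps all of the following under simultaneous control throughout the iteration: (i) near-balance and near-optimality of the signal block $S_t T_t^\top$; (ii) boundedness of the redundant components at scale $O(\alpha)$; (iii) the imbalance invariant $\|F_t^\top F_t - G_t^\top G_t\|$ up to accumulated drift, which must be shown not to swamp the $\Theta(\alpha^2)$ floor over the relevant timescale $t = \widetilde{O}(1/\alpha^2)$; and (iv) angular alignment between the row spaces of $S_t, T_t$ and the top-$r$ directions. The coupling between signal and redundant blocks, mediated by the cross terms $S_t^\top N_t$ and $T_t^\top P_t$ inside $F_t^\top F_t$ and $G_t^\top G_t$, is what makes this induction delicate, and it is precisely here that exploiting the \emph{imbalance} (rather than enforcing balance, as in prior work) becomes essential.
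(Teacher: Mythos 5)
Your decomposition into signal blocks $(S_t,T_t)$ and redundant blocks $(N_t,P_t)$ is, modulo the WLOG reduction to $M^\star=\Sigma$, exactly the paper's $(U,V)$ vs.\ $(J,K)$ block decomposition, and your redundant-block updates $N_{t+1}\approx N_t(I-\eta G_t^\top G_t)$, $P_{t+1}\approx P_t(I-\eta F_t^\top F_t)$ and the imbalance-invariant bookkeeping are also the paper's mechanism. However, there is a genuine gap in how you obtain the positive-definite floor, and it is not cosmetic.

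You assert that balanced Gaussian initialization ``produces an imbalance of spectral magnitude $\Theta(\alpha^2)$'' which ``acts as a positive-definite floor.'' This is false. If both $F_0$ and $G_0$ are drawn at scale $\alpha$ with $n\gg k$, then $F_0^\top F_0\approx\alpha^2 I$ and $G_0^\top G_0\approx\alpha^2 I$ individually, so $\Delta_0=F_0^\top F_0-G_0^\top G_0$ concentrates around zero with spectral norm only $\Theta(\alpha^2\sqrt{k/n})$, and with high probability $\Delta_0$ has eigenvalues of both signs. Neither property matches what you need: the magnitude is too small by a factor $\sqrt{k/n}$ (so the best rate you could extract is $\exp(-\Omega(\alpha^2\sqrt{k/n}\,t))$, not the claimed $\exp(-\Omega(\alpha^2 t))$, and the paper's $\Omega(\cdot)$ absorbs only absolute constants), and the sign-indefiniteness means the ``complementary factor contracts'' step changes identity across directions. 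The paper avoids both problems by a \emph{deliberately imbalanced} initialization, e.g.\ $F_0=\alpha\tilde F_0$, $G_0=(\alpha/3)\tilde G_0$, which makes $\Delta_0\approx\frac{8}{9}\alpha^2 I$ positive definite with smallest eigenvalue $\Theta(\alpha^2)$; the induction then only ever has to show $K_t$ shrinks, never $J_t$.

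Even if you tried to salvage the sign-indefinite case by partitioning the ``bad'' $(k-r)$-dimensional subspace into a $J$-shrinking part and a $K$-shrinking part, you would face the further obstacle that this partition is not fixed: the relevant subspace $W_{t,\perp}$ (orthogonal complement of the row space of $U_t$) rotates every step, and the RIP error terms $E_i(\cdot)$ mix directions. The paper's proof already has to control this rotation carefully ($\|W_tW_{t+1,\perp}^\top\|$ bounds in Phase~3), and that analysis leans on the one-sidedness of $\Delta_t$ to avoid tracking a time-varying spectral decomposition of the imbalance itself. Your induction items (i)--(iv) are the right ones, but item (iii) cannot close as stated without the asymmetric initialization scale. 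A secondary, minor point: for the burn-in phase you cite \citet{stoger2021small} (symmetric sensing); the paper invokes \citet{soltanolkotabi2023implicit}, whose Theorem~1 is stated directly for the asymmetric sensing setting with over-parameterization, so no adaptation is needed.
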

This is also the first global exact convergence result of randomly initialized gradient descent in the over-parameterized case.
Recall that for the symmetric matrix sensing problem, even if $M^\star$ is positive semi-definite, one can still use an asymmetric parameterization $FG^\top$ to learn $M^\star$, and Theorem~\ref{informal theorem:main theorem assymmetric} still holds.
Comparing Theorem~\ref{informal theorem:main theorem assymmetric} and Theorem~\ref{thm:symmetric lower bound}, we obtain a surprising corollary:
\begin{center}
\emph{For the \textbf{symmetric} matrix sensing problem, using \textbf{asymmetric} parameterization is \textbf{exponentially faster} than using symmetric parameterization}.
\end{center}

Also notice that different from Theorem~\ref{informal theorem:main theorem asymmetric exact}, the convergence rate of Theorem~\ref{informal theorem:main theorem assymmetric} also depends on the initialization scale $\alpha$ which we require it to be small.
Empirically we verify this dependency is necessary. See Figure~\ref{figure:loss asymmetric}.
We also study a special case in Section~\ref{sec:toycase} to show the dependency on the initialization scale is necessary in the worst case.

\begin{figure}[t]\label{figure:asymmetric}
    \centering
    \subfigure[Exact-parameterized case]{\label{figure:loss exact scale different}\includegraphics[scale = 0.28]{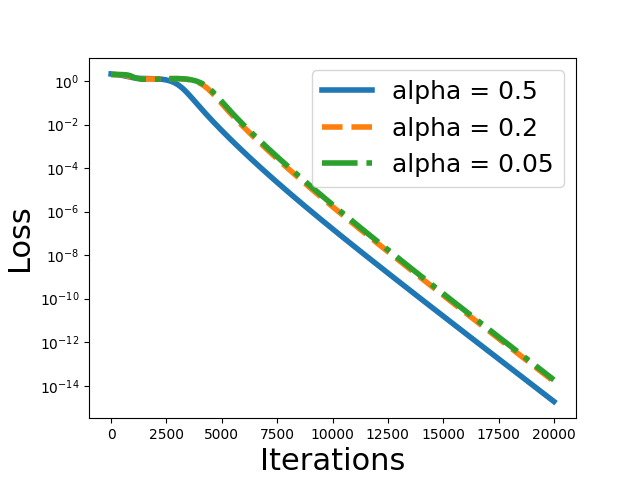}}
    \subfigure[Over-parameterized case]{\label{figure:loss asymmetric}\includegraphics[scale = 0.28]{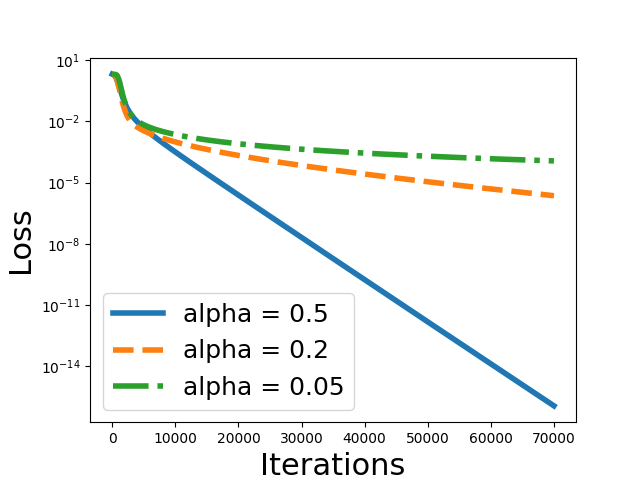}
    }
    \subfigure[Loss curve for our new method]{\label{figure:algfast}\includegraphics[scale=0.28]{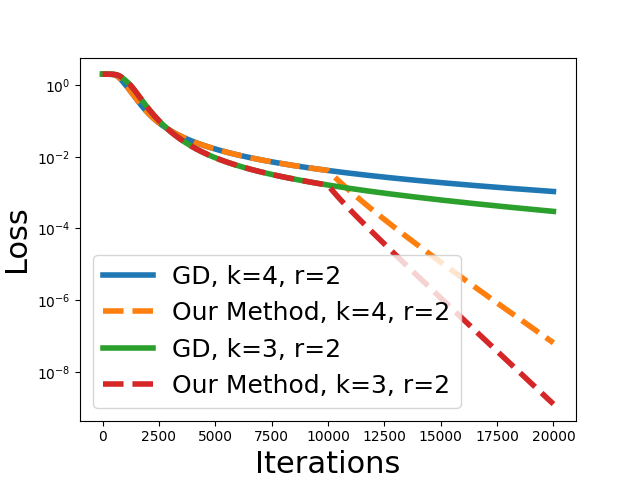}}
    \vspace{-0.4cm}
    \caption{Curve of asymmetric matrix sensing. Figure \ref{figure:loss exact scale different} shows that the convergence rate is linear and independent on the initialization scale under the exact-parameterized case. Figure \ref{figure:loss asymmetric} shows that the convergence rate is linear and dependent on the initialization scale under the over-parameterized case. When the initialization scale is larger, the convergence speed is faster.  Figure \ref{figure:algfast} shows the efficacy of our new method. See \S \ref{sec:exp} for experimental details.}
    \vspace{-0.6cm}
\end{figure}

\paragraph{Technical Insight:} Our key technical finding that gives the exponential acceleration is the \emph{imbalance} of $F$ and $G$.
Denote the imbalance matrix $\Delta_t = F_t^\top F_t - G_t^\top G_t$. 
We show that the converge rate is linear when $\Delta_t$ is positive definite, and the rate depends on the minimum eigenvalue of $\Delta_t.$ 
We use imbalance initialization so that the minimum eigenvalue of $\Delta_0$ is proportional to $\alpha$, we can further show that the minimum eigenvalue $\Delta_t$ will not decrease too much, so the final convergence rate is linear. Furthermore, such a connection to $\alpha$ also inspires us to design a faster algorithm below.


\paragraph{Contribution 4: A Simple Algorithm with Initialization-Independent Linear Convergence Rate for Asymmetric Over-Parameterization.}
Our key idea is to \textit{increase the degree of imbalance} when $F$ and $G$ are close to the optimum. We develop a new simple algorithm to accelerate GD. The algorithm only involves transforming the factor matrices $F$ and $G$ in one of iteration to intensify the degree of imbalance (cf. Equation \eqref{eq: main_matrix change}).
\begin{theorem}[Informal]\label{thm:fast method informal}
In the over-parameterization setting ($k>r$), suppose we initialize $F$ and $G$ using a Gaussian distribution with small enough variance $\alpha^2$, gradient descent  with a small enough constant step size, and the procedure described in Section~\ref{sec:fast} to optimize the asymmetric matrix sensing loss~\eqref{eq: asym_ms_loss}.
Let $F_t$ and $G_t$ denote the factor matrices at the $t$-the iteration. Then, with high probability over the random initialization, there exists $T^{(3)} > 0$ such that we have
\begin{align}\|F_tG_t^\top -M^\star\|_F^2 = \exp\left(-\Omega\left(t-T^{(3)}\right)\right), \forall t \ge T^{(3)}.
\end{align}

\end{theorem}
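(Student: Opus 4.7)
The plan is to prove Theorem~\ref{thm:fast method informal} in three phases that together leverage Theorem~\ref{informal theorem:main theorem assymmetric} as a warm start, apply the one-step transformation to boost the imbalance of $F$ and $G$, and then re-run the linear-convergence analysis from the post-transformation state with an improved rate. The guiding technical principle is the observation made in Contribution~3: the linear rate of GD in the over-parameterized asymmetric setting is really controlled by $\lambda_{\min}(\Delta_t)$ where $\Delta_t = F_t^\top F_t - G_t^\top G_t$, which was $\Theta(\alpha^2)$ under random initialization; the algorithm of Section~\ref{sec:fast} is engineered so that at one moment in time this quantity jumps to $\Omega(1)$.

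\textbf{Phase 1 (warm start).} Run standard GD from the Gaussian initialization. By the formal version of Theorem~\ref{informal theorem:main theorem assymmetric}, after a burn-in on the order of $T^{(2)}$ we reach iterates $(F_{T'}, G_{T'})$ with (a) small reconstruction error $\|F_{T'} G_{T'}^\top - M^\star\|_F = O(\alpha)$, (b) a signal-plus-redundant decomposition $F_{T'} = F_s + F_r$, $G_{T'} = G_s + G_r$ with $F_s G_s^\top \approx M^\star$ and $\|F_r\|, \|G_r\| = O(\alpha)$, and (c) bounded spectral norms $\|F_{T'}\|, \|G_{T'}\| = O(1)$. These structural facts are what justify a safe transformation at the next step.

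\textbf{Phase 2 (imbalance boost).} At iteration $T^{(3)} := T'$, apply the one-step transformation prescribed by equation~\eqref{eq: main_matrix change}. Using the Phase~1 structure, one verifies three things: (i) the product $FG^\top$ is preserved up to negligible error, so we remain in the good region near $M^\star$; (ii) all relevant spectral and Frobenius norms stay $O(1)$; and (iii) $\lambda_{\min}(F^\top F - G^\top G)$ jumps from $O(\alpha)$ to $\Omega(1)$. The point of the explicit form in~\eqref{eq: main_matrix change} is that it acts only in a direction that is (nearly) in the kernel of the partner factor, so $FG^\top$ is disturbed only at the level of the already-small reconstruction error.

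\textbf{Phase 3 (fast linear convergence).} From iteration $T^{(3)}+1$ onward, the iterates obey the same GD dynamics studied in Theorem~\ref{informal theorem:main theorem assymmetric}. Re-running that analysis and tracking where $\alpha$ actually enters, the linear contraction constant is $\Theta(\lambda_{\min}(\Delta_t))$, which was $\Theta(\alpha^2)$ under Gaussian initialization but is now $\Omega(1)$. Combined with the near-invariance of $\Delta_t$ under GD updates (a direct computation gives $\Delta_{t+1} - \Delta_t = \eta^2 (G_t^\top E_t^\top E_t G_t - F_t^\top E_t E_t^\top F_t)$, where $E_t$ is the RIP-perturbed residual, whose norm decays geometrically), $\lambda_{\min}(\Delta_t)$ remains $\Omega(1)$ for all $t \ge T^{(3)}$, yielding the claimed rate $\|F_t G_t^\top - M^\star\|_F^2 = \exp(-\Omega(t - T^{(3)}))$.

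\textbf{Main obstacle.} The delicate step is Phase~2: the transformation amplifies the factors in some redundant-type direction by a factor like $1/\sqrt{\alpha}$, so the change in $FG^\top$ is bounded by (roughly) $\|\Delta F \cdot G^\top\|_F$ and symmetrically. This stays small only because the amplified direction is quantitatively close to orthogonal to the partner factor's effective column span, an orthogonality that is not immediate from $\|F_r\|, \|G_r\| = O(\alpha)$ and must be extracted from a finer decomposition of the Phase~1 iterates (projection onto the column/row spaces of $M^\star$). A secondary difficulty is closing a joint induction in Phase~3 on the loss and on $\lambda_{\min}(\Delta_t) = \Omega(1)$ in the RIP setting, where the perturbation in the $\Delta_t$ recursion must be uniformly summable against the geometric loss decrease; this is analogous to, but cleaner than, the corresponding step for Theorem~\ref{informal theorem:main theorem assymmetric} since the contraction factor is now an absolute constant.
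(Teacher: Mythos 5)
Your high-level three-phase plan (warm start, one-step transformation, fast convergence) matches the paper's structure, but you have misidentified both the quantity that the transformation controls and the argument needed in the final phase; these are not cosmetic slips, and the plan as written would not close.

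First, the claimed effect of the transformation is wrong. You state that Eq.~\eqref{eq: main_matrix change} boosts $\lambda_{\min}(\Delta_t)$ with $\Delta_t = F_t^\top F_t - G_t^\top G_t$ from $O(\alpha^2)$ to $\Omega(1)$. In fact, writing the SVD $\widetilde{F}_{T^{(3)}} = A\Sigma' B$, the transformed factor satisfies $F_{T^{(3)}} = \widetilde{F}_{T^{(3)}}B^\top\Sigma_{inv} = \beta A$, so $F_{T^{(3)}}^\top F_{T^{(3)}} = \beta^2 I$, while $G_{T^{(3)}}$ is scaled \emph{up}, with $\|G_{T^{(3)}}\| \lesssim \sigma_1/\beta \gg \beta$. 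Consequently $\Delta_{T^{(3)}} = \beta^2 I - G_{T^{(3)}}^\top G_{T^{(3)}}$ is \emph{strongly negative definite}, not $\Omega(1)I$. The quantity the paper actually controls is $\lambda_{\min}(F_t^\top F_t) \geq \beta^2/2$ for all $t \geq T^{(3)}$ (established by showing $\|F_t - F_{T^{(3)}}\| \leq O(\beta^3/\sigma_1)$), and this feeds directly into the update $K_{t+1} = K_t(I - \eta F_t^\top F_t) + E_t$ to give the contraction $\|K_{t+1}\| \leq (1 - \eta\beta^2/2)\|K_t\| + \cdots$. The imbalance $\Delta_t$ plays no role in the fast-method proof.

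Second, your Phase 3 proposes to ``re-run'' the analysis behind Theorem~\ref{informal theorem:main theorem assymmetric} with an improved constant. That analysis depends crucially on approximate balancedness $U_t \approx V_t$ (tracked via $\|U_t - V_t\| = O(\alpha^2/\sqrt{\sigma_1} + \cdots)$), which is what makes the projection $W_{t,\perp}$ defined from $U_t$ simultaneously useful for $V_t$, and which underlies the reduction $\Delta_t \approx J_t^\top J_t - K_t^\top K_t$ in the relevant subspace. The transformation destroys this: post-transformation $\|U\| \sim \beta$ whereas $\|V\| \sim \sigma_1/\beta$, and no amount of re-tracking $\alpha$ in the old inequalities can repair the resulting breakdown. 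The paper instead opens a fresh induction on the pair $P_t = \max\{\|J_tV_t^\top\|, \|U_tV_t^\top - \Sigma\|\}$ and $\|K_t\|$, proves a coupled recursion of the form
\begin{align*}
P_{t+1} &\leq \Bigl(1 - \tfrac{\eta\sigma_r^2}{8\beta^2}\Bigr)P_t + \eta\,\|K_t\|\cdot O\!\left(\tfrac{\delta_{2k+1}\kappa^2\sigma_1^2 + \sqrt{\alpha}\sigma_1^{7/4}}{\beta}\right),\\
\|K_{t+1}\| &\leq \Bigl(1-\tfrac{\eta\beta^2}{2}\Bigr)\|K_t\| + 4\eta\delta_{2k+1}\beta P_t + 4\eta\delta_{2k+1}\beta^2\|K_t\|,
\end{align*}
and then shows the Lyapunov quantity $P_t + \sqrt{\sigma_1}\|K_t\|$ contracts geometrically at rate $1 - \eta\beta^2/4$. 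This is a genuinely different argument from the one you propose to reuse. Your identified ``main obstacle'' (orthogonality in Phase~2) is a legitimate concern but secondary; the missing ideas above are the ones that would actually sink the proposal.
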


\section{Related Work}
\label{sec:rel}

The most relevant line of work studies the global convergence of randomly initialized gradient descent for matrix sensing with $L_2$ loss~\citep{zhuo2021computational, stoger2021small, soltanolkotabi2023implicit,tu2016low}.
We compare our results with them in Table~\ref{table:comparison}.

\begin{table}\footnotesize\centering
 \caption{Comparison of previous representative work. The second column shows that the results hold for symmetric matrix factorization/sensing or asymmetric matrix factorization/sensing. The third column lists different types of initialization, where ``Random" means the algorithm uses random initialization (typically Gaussian), ``Local" indicates a requirement for initialization to be close to the optimal point. 
 The fourth column ``exact-cnvrg" represents whether the loss will go to zero when round $T$ goes to infinity.
 The fifth column indicates whether the result applies to over-parameterization case or just the exact-parameterization case.
 The last column lists the convergence rate of algorithms with exact-convergence results. 
 }
 \scalebox{1.07}{
\begin{tabular}{ccccccc}

\midrule[1.5pt]
 &   &  Is Symmetric & Init.  & exact-cnvrg & $k$  Range & Rate \\ \hline & \cite{stoger2021small} & Symmetric & Random & \tiny\XSolidBrush & $k\ge r$ & N/A\\
 \hline & \cite{zhuo2021computational} & Symmetric & Local & \tiny\Checkmark & $k\ge r$ & $\cO(1/T^2)$\\
 \hline & \makecell{\cite{stoger2021small}\\+\\\cite{zhuo2021computational}} & Symmetric & Random & \tiny\Checkmark & $k\ge r$ & $\cO(1/T^2)$ \\
 \hline & \cite{soltanolkotabi2023implicit} & Asymmetric & Random & \tiny\XSolidBrush & $k\ge r$ & N/A\\
 \hline & \cite{tu2016low} & Both & Local & \tiny\Checkmark & $k=r$ & \makecell{$\exp(-\Omega(T))$ }\\
 \hline & \cite{ma2021beyond} & Asymmetric & Local & \tiny\Checkmark & $k=r$ & $\exp(-\Omega(T))$\\
 \hline &Theorem \ref{theorem: exact asymmetric} (our paper) & Asymmetric & Random & \tiny\Checkmark & $k= r$ & $\exp(-\Omega(T))$\\
 \hline & Theorem \ref{theorem:main theorem assymmetric} (our paper) & Asymmetric & Random & \tiny\Checkmark & $k> r$ & $\exp(-\Omega(\alpha^2 T))$\\
 \hline
 & Theorem \ref{thm:symmetric lower bound} (our paper) & Symmetric & Random &\tiny\Checkmark & $k\ge r$ & $\Omega(1/T^2)$\\
 \bottomrule[1.5pt]
\end{tabular}}

\vspace{0.6em}
\label{table:comparison}
\end{table}

\paragraph{Matrix Sensing.}
Matrix sensing aims to recover the low-rank matrix based on measurements. \cite{candes2012exact, liu2012robust} propose convex optimization-based algorithms, which minimize the nuclear norm of a matrix, and \cite{recht2010guaranteed} show that projected subgradient methods can recover the nuclear norm minimizer.
\cite{wu2021implicit} also propose a mirror descent algorithm, which guarantees to converge to a nuclear norm minimizer. 
See \citep{davenport2016overview} for a comprehensive review.

\paragraph{Non-Convex Low-Rank Factorization Approach.}
The nuclear norm minimization approach involves optimizing over a $n \times n$ matrix, which can be computationally prohibitive when $n$ is large.
The factorization approach tries to use the product of two matrices to recover the underlying matrix, but this formulation makes the optimization problem non-convex and is significantly more challenging for analysis. 
For the exact-parameterization setting ($k=r$), \citet{tu2016low,zheng2015convergent} shows the linear convergence of gradient descent when starting at a local point that is close to the optimal point. This initialization can be implemented by the spectral method.
For the over-parameterization scenario ($k > r$), in the symmetric setting, \citet{stoger2021small} shows that with a small initialization, the gradient descent achieves a small error that \emph{dependents} on the initialization scale, rather than the \textit{exact-convergence}. 
\cite{zhuo2021computational} shows exact convergence with $\cO(1/T^2)$ convergence rate in the overparamterization setting.
These two results together imply the global convergence of randomly initialized GD with an $O\left(1/T^2\right)$ convergence rate \emph{upper bound}.
\citet{jin2023understanding} also provides a fine-grained analysis of the GD dynamics.
More recently, \citet{zhang2021preconditioned,zhang2023preconditioned} empirically observe that in practice, in the over-parameterization case, GD converges with a sublinear rate, which is exponentially slower than the rate in the exact-parameterization case, and coincides with the prior theory's upper bound~\citep{zhuo2021computational}.
However, no rigorous proof of the \emph{lower bound} is given whereas we bridge this gap.
On the other hand, \citet{zhang2021preconditioned,zhang2023preconditioned} propose a preconditioned GD algorithm with a shrinking damping factor to recover the linear convergence rate.
\citet{xu2023power} show that the preconditioned GD algorithm with a constant damping factor coupled with small random initialization requires a less stringent assumption on $\cA$ and achieves a linear convergence rate up to some prespecified error. 
 \citet{ma2023global} study the performance of the subgradient method with $L_1$ loss under a different set of assumptions on $\cA$ and showed a linear convergence rate up to some error related to the initialization scale. 
We show that by simply using the \emph{asymmetric parameterization}, without changing the GD algorithm, we can still attain the linear rate.

For the asymmetric matrix setting, 
many previous works \citep{ye2021global,ma2021beyond,tong2021accelerating,ge2017no,du2018algorithmic,tu2016low, zhang2018unified, zhang2018primal,wang2017unified,zhao2015nonconvex} consider the exact-parameterization case ($k=r$).
\citet{tu2016low} adds a balancing regularization term $\frac{1}{8}\|F^\top F-G^\top G\|_F^2$ to the loss function, to make sure that $F$ and $G$ are balanced during the optimization procedure and obtain a local convergence result.
More recently, some works \citep{du2018algorithmic,ma2021beyond,ye2021global} show GD enjoys an \emph{auto-balancing} property where $F$ and $G$ are approximately balanced; therefore, additional balancing regularization is unnecessary. In the asymmetric matrix factorization setting, 
\citet{du2018algorithmic} proves a global convergence result of GD with a diminishing
step size and the GD recovers $M^\star$ up to some error. Later,
 \citet{ye2021global} gives the first global convergence result of GD with a constant step size. \citet{ma2021beyond} shows linear convergence of GD with a local initialization and a larger stepsize in the asymmetric matrix sensing setting.
Although exact-parameterized asymmetric matrix factorization and matrix sensing problems have been explored intensively in the last decade,  our understanding of the over-parameterization setting, i.e., $k>r$, remains limited.  
\citet{jiang2022algorithmic} considers the asymmetric matrix factorization setting, and proves that starting with a small initialization, the vanilla gradient descent sequentially recovers the principled component of the ground-truth matrix. \citet{soltanolkotabi2023implicit}  proves the convergence of gradient descent in the asymmetric matrix sensing setting. 
Unfortunately, both works only prove that GD achieves a small error when stopped early, and the error depends on the initialization scale. Whether the gradient descent can achieve \textit{exact-convergence} remains open, and we resolve this problem by novel analyses.
Furthermore, our analyses highlight the importance of the \emph{imbalance between $F$ and $G$}.

Lastly, we want to remark that we focus on gradient descent for $L_2$ loss, there are works on more advanced algorithms and more general losses~\citep{tong2021accelerating,zhang2021preconditioned,zhang2023preconditioned,zhang2018unified, zhang2018primal,ma2021sign,wang2017unified,zhao2015nonconvex,bhojanapalli2016dropping,xu2023power}. 
We believe our theoretical insights are also applicable to those setups.

\paragraph{Landscape Analysis of Non-convex Low-rank Problems.}
The aforementioned works mainly focus on studying the dynamics of GD.
There is also a complementary line of works that studies the landscape of the loss functions, and shows the loss functions enjoy benign landscape properties such as (1) all local minima are global, and (2) all saddle points are strict~\cite{ge2017no,zhu2018global,li2019symmetry,zhu2021global,zhang2023preconditioned}. 
Then, one can invoke a generic result on \emph{perturbed gradient descent}, which injects noise to GD~\cite{jin2017escape},  to obtain a convergence result.
There are some works establishing the general landscape analysis for the non-convex low-rank problems. 
We remark that injecting noise is required if one solely uses the landscape analysis alone because there exist exponential lower bounds for standard GD~\citep{du2017gradient}.

\paragraph{Slowdown Due to Over-parameterization.}
Similar exponential slowdown phenomena caused by over-parameterization have been observed in other problems beyond matrix recovery, such as teacher-student neural network training~\citep{xu2023over,richert2022soft} and Expectation-Maximization algorithm on Gaussian mixture model~\citep{wu2021randomly,dwivedi2020singularity}.

\section{Preliminaries}

\paragraph{Norm and Big-$\cO$ Notations.} Given a vector $v$, we use $\|v\|$ to denote its Euclidean norm. For a matrix $M$, we use $\|M\|$ to denote its spectral norm and $\|M\|_F$ Frobenius norm. The notations $\cO(\cdot),\Theta(\cdot),$ and $\Omega(\cdot)$ in the rest of the paper only omit absolute constants.

\paragraph{Asymmetric Matrix Sensing.} 
Our primary goal is to recover an unknown fixed rank $r$ matrix $M^\star \in \RR^{n_1\times n_2}$ from data $(y_i,A_i)$, $i=1,\dots,m$ satisfying
$
  y_i = \langle A_i, M^\star \rangle = \mathrm{tr}(A_i ^\top M^\star), i=1,\dots,m,\quad \text{or compactly}\quad y = \cA(M^\star),
$
where $y\in \RR^{m}$ and $\cA: \RR^{n_1\times n_2}\rightarrow \RR^m$ is a linear map with $[\cA(M)]_i= \mathrm{tr}(A_i ^\top M)$. We further denote the singular values of $M^\star$ as $\sigma_1\geq \dots \geq \sigma_r > \sigma_{r+1} =0 = \cdots = \sigma_n$, the condition number $\kappa = \frac{\sigma_1}{\sigma_r}$, and the diagonal singular value matrix as $\Sigma$ with $(\Sigma)_{ii} = \sigma_i$.  
To recover $M^\star$, we minimize the following loss function:
\begin{equation}\label{eq: asym.ms.loss}
    L_{\mathrm{tr}}(F,G) = \frac{1}{2} \|\cA(FG^\top) - y\|^2,
\end{equation}
where $F,G \in \RR^{n\times k}$, where $k\geq r$ is the user-specified rank. 
The gradient descent update rule with a step size $\eta > 0$ with respect to loss~\eqref{eq: asym.ms.loss} can be written explicitly as 
 \begin{align}\label{eq: asym.ms.gd}
     F_{t+1} = F_t- \eta\cA^*\cA(F_tG_t^\top - \Sigma)G_t,~~~
     G_{t+1} = G_t - \eta (\cA^*\cA(F_tG_t^\top-\Sigma))^\top F_t,
\end{align}
where $\cA^*:\RR^m \rightarrow \RR^{n\times n}$ is the adjoint map of $\cA$ and admits an explicit form:  
$\cA^*(z) = \sum_{i=1}^m z_i A_i$.

To make the problem approachable, we shall make the following standard assumption on $\cA$: Restricted Isometry Property (RIP) \citep{recht2010guaranteed}.
\begin{definition}[Restricted Isometry Property]
\label{def: RIP}
    An operator $\cA: \RR^{n_1\times n_2}\to \RR^m$ satisfies the Restricted Isometry Property of order $r$ with constant $\delta>0$ if for all matrices $M:\RR^{n_1\times n_2}$ with rank at most $r$, we have 
    $
       (1-\delta)\|M\|^2_F\le  \|\cA(M)\|^2 \le (1+\delta)\|M\|^2_F.
    $
\end{definition}

From \citep{candes2011tight}, if the matrix $A_i$ has i.i.d. $N(0,\frac{1}{m})$, the operator $\cA$ has RIP of order $2k+1$ with constant $\delta\in (0,1)$ when $m = \widetilde{\Omega}\left(\frac{nk}{\delta^2}\right)$. Thus, $m = \widetilde{\Omega}(nk^2r)$ is needed with \ref{eq: delta.requirement}.

\paragraph{Diagonal Matrix Simplification.} 
Since both the RIP and the loss are invariant to orthogonal transformation, we assume without loss generality that $M^\star = \Sigma$ in the rest of the paper for clarity, following prior work~\citep{ye2021global,jiang2022algorithmic}.
For the same reason, we also assume $n_1=n_2=n$ to simplify notations, and our results can be easily extended to $n_1\neq n_2$.



\paragraph{Symmetric Matrix Factorization.}
In this setting, we further assume $M^\star$ is symmetric and positive semidefinite, and $\cA$ is the identity map. Since $M^\star$ admits a factorization $M^\star = F_\star F_\star^\top$ for some $F_\star \in \RR^{n\times r}$, we can force the factor $F = G =X$ in \eqref{eq: asym.ms.loss} and the loss becomes 
$
L(X) = \frac{1}{2}\| XX^\top-\Sigma\|_F^2.
$
Here, the factor $X \in \RR^{n\times k}$. We shall focus on the over-parameterization setting, i.e., $k>r$ in the Setion~\ref{sec: lbsymf} below.
The gradient descent with a step size $\eta>0$  becomes
\begin{align}\label{eq: sym.gd}
    X_{t+1} = X_t - 2\eta (X_tX_t^\top -\Sigma)X_t.
\end{align}



\section{Lower Bound of Symmetric Matrix Factorization}
\label{sec: lbsymf}

We present a sublinear lower bound of the convergence rate of the gradient descent \eqref{eq: sym.gd} for symmetric matrix factorization starting from a small random initialization. Our result supports the empirical observations that overparmetrization slows down gradient descent \citep{zhuo2021computational,zhang2021preconditioned, zhang2023preconditioned} and Figure~\ref{figure:symmetric}.

%
%

\begin{theorem}\label{thm:symmetric lower bound}
   Let $X_0 = \alpha 
    \cdot \tilde{X_0}$, where each entry is independent initialized from Gaussian distribution $\cN(0,1/k)$.
    For some universal constants $c_i, 1\leq i\leq 7$, if the gradient descent method \eqref{eq: sym.gd} starting at $X_0$ with the initial scale $\alpha$, the search rank $k$, and the stepsize $\eta$ satisfying that 
    \begin{equation}\label{eq: sym.mf.parameter.rqmt}
    0<\alpha \le \frac{c_1\sqrt{\sigma_1}}{\sqrt{n}\log(r\sqrt{n})}, \quad k\ge c_2\left((r\kappa)^2\log(r\sqrt{\sigma_1}/\alpha)\right)^4,\quad   
        \text{and}\quad 
        0<\eta \le  \frac{c_3}{n^2\kappa\sigma_1},
    \end{equation}
    then with probability at least $1-2n^2 \exp(-\sqrt{c_4k})-2n\exp(-c_5k/4)$, for all $t\geq T^{(0)} = \frac{c_6\log(r\sqrt{\sigma_1})/\alpha}{\eta\sigma_r}$, we have
    \begin{equation}\label{eq: sym.mf.formal.lb}
        \|X_tX_t^\top -\Sigma\|_F^2 \ge \left(\frac{c_7\log(\sqrt{r\sigma_1}/\alpha)\alpha^2}{8\sigma_r \eta n t}\right)^2, \quad \forall t \ge T^{(0)}.
    \end{equation}
\end{theorem}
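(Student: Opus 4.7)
The plan is to exploit the block structure induced by $\Sigma = \mathrm{diag}(\Sigma_r, 0)$: partition the rows of $X_t$ into the top $r$ rows $U_t \in \RR^{r\times k}$ (the signal block) and the bottom $n-r$ rows $V_t \in \RR^{(n-r)\times k}$ (the redundant block). A direct computation of the gradient descent step shows that the redundant block obeys the clean multiplicative update
\begin{equation*}
V_{t+1} = V_t\bigl(I - 2\eta X_t^\top X_t\bigr),
\end{equation*}
while $U_t$ follows a more intricate recursion. Because $\|X_tX_t^\top - \Sigma\|_F^2 \geq \|V_tV_t^\top\|_F^2$, it suffices to produce a lower bound on the Frobenius norm of the redundant Gram matrix $B_t := V_tV_t^\top$.

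The first phase of the argument covers the burn-in window $0 \leq t \leq T^{(0)}$. Gaussian concentration gives, with high probability, $\|X_0\| = O(\alpha)$ and, since $k > r$, that $V_0$ has $\Omega(\alpha)$ mass along a $(k-r)$-dimensional subspace of $\RR^k$ transverse to the row span of $U_0$. Invoking the signal-recovery analysis of \citet{stoger2021small}, after $T^{(0)} = O(\log(1/\alpha)/(\eta \sigma_r))$ steps the signal block is approximately recovered ($U_{T^{(0)}}U_{T^{(0)}}^\top \approx \Sigma_r$) and the cross-coupling $V_{T^{(0)}}U_{T^{(0)}}^\top$ is small. Crucially, the multiplicative update for $V_t$ only contracts along directions in which $X_t^\top X_t$ has substantial mass, so $V_t$ retains $\Omega(\alpha^2)$ mass in directions orthogonal to the (limiting) row span of $U_t$. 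Concretely, one can extract a unit vector $w_\star \in \RR^k$ with $\|V_{T^{(0)}} w_\star\|^2 = \Omega(\alpha^2)$ and $\|U_t w_\star\| = o(\alpha)$ sustained for all $t \geq T^{(0)}$.

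The second phase tracks the scalar potential $\phi_t := \|V_t w_\star\|^2$ for $t \geq T^{(0)}$. Under the inductive invariants (i) $U_tU_t^\top \approx \Sigma_r$, (ii) $\|V_t U_t^\top\|$ is small, and (iii) $\|B_t\|$ is small, expanding the multiplicative update yields
\begin{equation*}
\phi_{t+1} = \phi_t - 4\eta\, w_\star^\top V_t^\top V_t X_t^\top X_t w_\star + O\!\bigl(\eta^2\phi_t\|X_t^\top X_t\|^2\bigr),
\end{equation*}
and along $w_\star$ one has $w_\star^\top X_t^\top X_t w_\star = w_\star^\top U_t^\top U_t w_\star + w_\star^\top V_t^\top V_t w_\star = o(\alpha^2) + \Theta(\phi_t)$. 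This yields a scalar recursion of the form $\phi_{t+1} \geq \phi_t - C\eta \phi_t^2 - (\text{lower-order errors})$, whose closed-form solution is $\phi_t = \Omega(\alpha^2/(1 + C\eta\alpha^2(t-T^{(0)})))$, i.e.\ an $\Omega(1/(\eta t))$ asymptotic. Combining this with the bound $\|B_t\|_F \geq \phi_t/\sqrt{n-r}$ from the ambient dimension of the redundant block, the stepsize restriction $\eta = O(1/(n^2\kappa\sigma_1))$ from \eqref{eq: sym.mf.parameter.rqmt}, and the logarithmic slack absorbed into $T^{(0)}$ matches the advertised right-hand side $(c_7\log(\sqrt{r\sigma_1}/\alpha)\alpha^2/(8\sigma_r\eta n t))^2$.

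The main obstacle is propagating the invariants (i)--(iii) over the polynomially long horizon $t \gg T^{(0)}$, which vastly exceeds the $\log(1/\alpha)$ window handled by the linear-convergence argument. In particular, the slowly decaying drift of $V_t$ feeds back into the signal block $U_t$ through the cross-term $U_t V_t^\top V_t$ in the $U_t$ update, and one must show that this feedback neither destabilizes the signal nor amplifies the drift in the $\phi_t$ recursion. Matching the lower-order error terms in the scalar recurrence against the quadratic self-drift $-C\eta \phi_t^2$ — without losing the clean $1/t$ asymptotic — together with the correct $w_\star$ concentration coming from the Gaussian initialization is where the bulk of the technical effort lies.
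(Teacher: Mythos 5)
Your high-level strategy matches the paper's: isolate the redundant block, track a nonnegative potential attached to it, and exploit the quadratic self-drift $\phi_{t+1} \gtrsim \phi_t - C\eta\phi_t^2$ to get the $\Theta(1/(\eta t))$ decay, then square and divide by $n$. But the two obstacles you flag — surviving the burn-in and propagating the invariants over the long horizon — are exactly the content of the proof, and the route you sketch does not close either one.

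First, the survival of $\Omega(\alpha^2)$ redundant mass through the burn-in is not a consequence of the multiplicative form $V_{t+1} = V_t(I - 2\eta X_t^\top X_t)$ alone. The naive contraction estimate $\|V_{t+1}\|_F \geq (1 - 2\eta\|X_t\|^2)\|V_t\|_F$ yields, over the $T^{(0)} = \Theta\bigl(\log(\sqrt{\sigma_1}/\alpha)/(\eta\sigma_r)\bigr)$ rounds of burn-in, a cumulative factor of roughly $\exp(-\Theta(\kappa\log(\sqrt{\sigma_1}/\alpha)))$, which is a \emph{power} of $\alpha$, not a constant — this would destroy the claimed $\alpha^2/8$ floor. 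The paper escapes this because the per-step contraction of $\|x_i^t\|^2$ (for $i>r$) is not $\eta\sigma_1$ but $\eta(\sigma_1\theta_t + \alpha^2)$, where $\theta_t$ is the maximum squared cosine between pairs of rows; and the rank condition $k \geq c_2((r\kappa)^2\log(r\sqrt{\sigma_1}/\alpha))^4$ is precisely what forces the initial angle $\theta_0$ to be $O\bigl(1/(r\kappa\log(r\sqrt{\sigma_1}/\alpha))^2\bigr)$, small enough that the cumulative loss over the burn-in is a constant factor. Invoking \citet{stoger2021small} as a black box gives you the upper-bound side (that the signal is recovered), but it does not give you this lower bound on the redundant mass — and tracking the angle with the large-$k$ concentration is exactly how the paper's Phase~1 supplies it.

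Second, the fixed-direction potential $\phi_t = \|V_t w_\star\|^2$ is a genuinely different object from the paper's $A_t = \sum_{i>r}\|x_i^t\|^2$, and the difference matters. In the $\phi_t$ recursion the cross term is $w_\star^\top V_t^\top V_t U_t^\top U_t w_\star$, and for the $1/t$ asymptotic to survive you need this to be $O(\eta\phi_t^2)$ \emph{uniformly in $t \geq T^{(0)}$}. Having $\|U_t w_\star\| = o(\alpha)$ is nowhere near strong enough: once $\phi_t$ has decayed to $\Theta(\alpha^2/(\eta\alpha^2 t))$, a cross term of size $\eta\alpha^2\cdot o(\alpha^2)$ swamps $\eta\phi_t^2$. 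You would need $\|U_t w_\star\|^2 \lesssim \phi_t$, i.e.\ the alignment error must decay at the same $1/t$ rate as the potential itself, and your sketch gives no mechanism for this — indeed the row span of $U_t$ rotates under the update, so a fixed $w_\star$ is not obviously a good coordinate. The paper avoids the issue entirely by working with $A_t$ and showing that the \emph{ratio} $\theta_t^U/A_t$ (where $\theta_t^U$ is the worst cross-angle) contracts at a linear rate $(1 - \eta\sigma_r/8)$ in Phase~2, until it drops below $1/(r\sigma_1)$; from then on the $A_t$ recursion decouples into the clean $A_{t+1} \geq A_t(1 - 4\eta A_t)$, with no lingering lower-order error. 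That linear decay of the ratio is the missing lemma in your plan, and it is what turns the sketch into a proof.

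So the diagnosis: same destination, same type of recursion, but you are missing (a) the small-initial-angle argument needed to keep $\Omega(\alpha^2)$ mass alive through the burn-in, and (b) the ratio-contraction argument needed to make the cross term genuinely subordinate to the quadratic self-drift for all $t$. Switching from a single direction $w_\star$ to the aggregate row-mass $A_t$ and tracking the cross-angle ratio, as the paper does, is the cleanest way to supply both.
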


The proof of Theorem \ref{informal thm:symmetric lower bound} demonstrates that, following a rapid convergence phase, the gradient descent eventually transitions to a \emph{sublinear} convergence rate. Also, the over-parameterization rank $k$ 
 is subject to a lower bound requirement in Eq. \eqref{eq: sym.mf.parameter.rqmt} that depends on $\alpha$. However, since $\alpha$ only
 appears in the logarithmic term, this requirement is not overly restrictive. It is also consistent with the phenomenon that the gradient descent first converges to a small error that depends on $\alpha$ with a linear convergence rate \citep{stoger2021small}, since our lower bound has a term $\alpha^4$. 


\subsection{Proof Sketch of Theorem \ref{thm:symmetric lower bound}}
We provide a proof sketch of Theorem \ref{thm:symmetric lower bound} in this section,  deferring the details to Appendix \ref{appendix: proof of lowr bound}. 

The main intuition of Theorem \ref{thm:symmetric lower bound} is that the last $n-r$ rows of $X_t$, corresponding to the space of $0$ eigenvalues of $\Sigma$, converge to $0$ at speed no faster than $\frac{1}{T^2}$. To make this intuition precise, for each $t\geq 0$, we let $X_t \in \RR^{n\times k} = [x_1^t ,\cdots,x_n^t ]^\top $ where $x_i^t \in \RR^{k}$. We let the potential function be $A_t = \sum_{i>r}\|x_i^t\|^2$. We aim to show the following two key inequalities, 
\begin{subequations}\label{eq: key_inequalities}
\begin{gather}
    \|x_i^T\|^2  \ge  \alpha^2 /8,\text{ for all $i>r$},
    \label{eq: At.update_Initial}\\
    A_{t+1}    \ge A_t(1-\cO(\eta A_t)), \text{ for all $t\ge T^{(0)}$}. \label{eq: At.update}
\end{gather}
\end{subequations} 

Suppose \eqref{eq: key_inequalities} is true, then it
implies that $A_t \geq \cO\left(\frac{\alpha^2}{t}\right)$ for all $t\ge T^{(0)}$. Since $(X_tX_t^\top -\Sigma)_{ii} = \|x_i\|^2$, the lower bound \eqref{eq: sym.mf.formal.lb} is established by noting that 
$   \|X_tX_t^\top -\Sigma\|_F^2 \ge \sum_{i>r}\|x_i^t\|^4 \ge A_t^2/n$, where the last inequality uses the Cauchy's inequality. See more details in Appendix \ref{appendix: proof of lowr bound}. 

\section{Convergence of Asymmetric Matrix Sensing}\label{sec:assy}
Here, we investigate the dynamic of GD in the context of the asymmetric matrix sensing problem. Surprisingly, we demonstrate that the convergence rate of gradient descent for asymmetric matrix sensing problems is linear, so long as the initialization is \emph{imbalanced}. However, this linear rate is contingent upon the chosen initialization scale.

\subsection{A Toy Example of Asymmetric Matrix Factorization}\label{sec:toycase}
We first use a toy example of asymmetric matrix factorization to demonstrate the behavior of GD.
If we assume $\cA$ is the identity map, and the loss and the GD update become 
\begin{align}\label{eq: asymms.pop}
    L(F,G) = &\frac{1}{2}\|FG^\top -\Sigma\|_F^2.\\
    F_{t+1} =F_t- 
 \eta(F_tG_t^\top -\Sigma)G_t,~~~
 & G_{t+1}= G_t-\eta(F_tG_t^\top -\Sigma)^\top F_t \label{eq: asym.mf.gd}
\end{align} 

The following theorem tightly characterizes the convergence rate for a toy example.

\begin{theorem}\label{thm: asym.toy_case}
    Consider the asymmetric matrix factorization \eqref{eq: asymms.pop}, with $k=r+1$. Choose $\eta  \in [0,1/6]$ and $\alpha \in [0,1]$.  Assume that the diagonal matrix $\Sigma = \text{diag}(\sigma_1,\dots,\sigma_n)$, where $\sigma_i=1$ for $i\leq r$ and is $0$ otherwise. Also assume that gradient descent \eqref{eq: asym.mf.gd} starts at $F_0, G_0$, where $(F_0)_{ii} = \alpha$ for $1\le i\le k$, and $(G_0)_{ii} = \alpha$ for $1\le i\le r$, $(G_0)_{ii} = \alpha/3$ for $i=r+1,$ and all other entries of $F_0$ and $G_0$ are zero. Then, the iterate $(F_t,G_t)$ of \eqref{eq: asym.mf.gd} satisfies that  
    \begin{align*}
        \frac{\alpha^4}{36}(1-4\eta \alpha^2)^{2t} \le \|F_tG_t^\top -\Sigma\|_F^2 \le 4n\cdot (1-\eta\alpha^2/4)^{(t-T_1)}, \;\forall t \ge T_1.
    \end{align*}
    where $T_1 = c_1\log(1/\alpha)/\eta$, and $c_1$ is a universal constant.
\end{theorem}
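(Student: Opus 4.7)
The plan is to exploit the highly structured initialization: $F_0$ and $G_0$ are supported on the diagonal of their first $k=r+1$ rows, and $\Sigma$ is diagonal. A one-step induction on the gradient descent update \eqref{eq: asym.mf.gd} shows this sparsity pattern is preserved for all $t$, so letting $f_i^t=(F_t)_{ii}$ and $g_i^t=(G_t)_{ii}$ (with all other entries of $F_t,G_t$ remaining zero), the dynamics decouple into $r+1$ independent scalar recursions
\[
f_i^{t+1}=f_i^t-\eta(f_i^tg_i^t-\sigma_i)g_i^t,\qquad g_i^{t+1}=g_i^t-\eta(f_i^tg_i^t-\sigma_i)f_i^t,
\]
with $\sigma_i=1$ for $i\leq r$ and $\sigma_{r+1}=0$. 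The Frobenius error splits accordingly as $\|F_tG_t^\top-\Sigma\|_F^2=\sum_{i\leq r}(f_i^tg_i^t-1)^2+(f_{r+1}^tg_{r+1}^t)^2$, so it suffices to analyze $r+1$ decoupled one-dimensional systems.

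For each signal coordinate $i\leq r$, the equal initialization $f_i^0=g_i^0=\alpha$ combined with the symmetry of the update gives $f_i^t=g_i^t=:u_t$ for all $t$, with $u_{t+1}=u_t(1+\eta(1-u_t^2))$. A standard two-phase analysis --- geometric growth $u_t\approx\alpha(1+\eta)^t$ while $u_t\ll 1$, followed by linear contraction near the attracting fixed point $1$ --- yields $u_t\in[1/2,3/2]$ and $(u_t^2-1)^2\leq 4(1-\eta)^{2(t-T_1)}$ for all $t\geq T_1=c_1\log(1/\alpha)/\eta$.

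For the noise coordinate $i=r+1$, writing $f_t=f_{r+1}^t$ and $g_t=g_{r+1}^t$, the update specializes to $f_{t+1}=f_t(1-\eta g_t^2)$ and $g_{t+1}=g_t(1-\eta f_t^2)$ with $(f_0,g_0)=(\alpha,\alpha/3)$. The central observation is the \emph{imbalance identity}
\[
f_{t+1}^2-g_{t+1}^2=(f_t^2-g_t^2)(1-\eta^2 f_t^2 g_t^2),
\]
which shows that the initial imbalance $\Delta_0=8\alpha^2/9$ is only scaled by a product whose logarithm is controlled via summability of $\sum_t\eta^2 f_t^2 g_t^2$ (valid because $g_t$ itself decays geometrically); hence $\Delta_t\geq \alpha^2/2$ for every $t$, and consequently $f_t^2\in[\alpha^2/4,\alpha^2]$ and $g_t^2\in[0,\alpha^2/9]$. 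Plugging these into $f_{t+1}g_{t+1}=f_tg_t(1-\eta f_t^2)(1-\eta g_t^2)$ yields both a lower bound $f_tg_t\geq(\alpha^2/3)(1-2\eta\alpha^2)^t$ and an upper bound $f_tg_t\leq(\alpha^2/3)(1-\eta\alpha^2/4)^t$.

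Assembling the pieces, the lower bound is immediate: $\|F_tG_t^\top-\Sigma\|_F^2\geq (f_tg_t)^2\geq \tfrac{\alpha^4}{9}(1-2\eta\alpha^2)^{2t}\geq \tfrac{\alpha^4}{36}(1-4\eta\alpha^2)^{2t}$, using $(1-2\eta\alpha^2)^2\geq 1-4\eta\alpha^2$. For the upper bound, each of the $r$ signal terms is $\leq 4(1-\eta)^{2(t-T_1)}\leq 4(1-\eta\alpha^2/4)^{t-T_1}$ (since $(1-\eta)^2\leq 1-\eta\alpha^2/4$ for $\eta\leq 1/6$ and $\alpha\leq 1$), and the noise term obeys the same bound, so summing the $r+1\leq n$ contributions gives the claim. \emph{The hardest step} is the noise analysis: sustaining the uniform lower bound on $f_t^2$ across all $t$ --- without which $g_t$ would not contract at rate $\Theta(\eta\alpha^2)$ --- rests entirely on the near-conservation of $\Delta_t$. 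Once that is locked in, the remaining estimates reduce to routine one-dimensional calculations, and the imbalance identity is precisely the structural mechanism the authors want to highlight as the source of the linear rate in the over-parameterized regime.
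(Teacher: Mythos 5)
Your proof is correct, and the scalar-coordinate view you take is mathematically equivalent to the paper's block decomposition $U_t = V_t = \alpha_t I_r$, $J_t = a_t A$, $K_t = b_t A$: because the diagonal initialization is preserved by the update, the matrix dynamics reduce to the same two scalar systems, and your signal analysis mirrors the paper's two-phase argument for $\alpha_t$. Where you genuinely diverge is the noise coordinate. The paper proves $a_t \geq \alpha/2$ directly by writing $a_t = a_0\prod_{i<t}(1-\eta b_i^2)$ and bounding $\eta\sum_i b_i^2 \leq \eta b_0^2\cdot\frac{4}{\eta\alpha^2} = \frac{4}{9}$ via the inductive geometric decay of $b_i^2$, so the product stays above $5/9$. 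You instead derive the exact conservation law $\Delta_{t+1} = (f_{t+1}^2 - g_{t+1}^2) = \Delta_t\left(1-\eta^2 f_t^2 g_t^2\right)$ and show $\Delta_t$ loses only an $O(\eta\alpha^2)$ fraction of its initial value over the full trajectory, whence $f_t^2 \geq \Delta_t \geq \alpha^2/2$. Both routes rest on the identical circular-but-inductively-resolvable estimate $\sum_t g_t^2 \lesssim 1/(\eta\alpha^2)$; they differ only in which quantity they track. Your version is the more instructive: it makes the paper's stated technical insight — that the imbalance $F_t^\top F_t - G_t^\top G_t$ is the mechanism behind the linear rate — visible inside the toy-case proof itself, rather than only in the surrounding commentary, and it foreshadows the central role of $\Delta_t$ in the full matrix-sensing argument (Appendix~D). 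Two small slips, neither fatal: from $\Delta_t \geq \alpha^2/2$ you actually get $f_t^2 \geq \alpha^2/2$, tighter than the $\alpha^2/4$ you state; and the contraction factor for $|u_t - 1|$ is only $(1-\eta/2)$ uniformly over $u_t\in[1/2,2]$ (the paper's value), not $(1-\eta)$, but the inequality you need, $(1-\eta/2)^2 \leq 1 - \eta\alpha^2/4$ for $\eta\le 1/6$, $\alpha\le 1$, still holds.
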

The above initialization implicitly assumes that we know the singular vectors of $\Sigma$. Such an assumption greatly simplifies our presentations below. Note that we have a different initialization scale for $F_t$ and $G_t$. As we shall see, such an \emph{imbalance} is the key to establishing the convergence of $F_tG_t^\top$.

We introduce some notations before our proof. Denote the matrix of the first $r$ row of $F,G$ as $U,V \in \RR^{r\times k}$ respectively, and the matrix of the last $n-r$ row of $F,G$ as $J,K \in \RR^{(n-r)\times k}$ respectively. Further denote the corresponding iterate of gradient descent as $U_t$, $V_t$, $J_t$, and $K_t$. The difference $F_tG_t^\top -\Sigma$ can be written in a block form as 
$
F_tG_t^\top -\Sigma  = 
\begin{pmatrix}
U_tV_t^\top -\Sigma_r & J_tV_t^\top   \\
U_tK_t^\top  & J_tK_t^\top  \\
\end{pmatrix} 
$
where $\Sigma_r\in \RR^{r\times r}$ is the identity matrix. Hence, we may bound $\|F_tG_t^\top -\Sigma\|$ by 
\begin{align}
\|J_tK_t^\top \|\le \|F_tG_t^\top -\Sigma\|\le \|U_tV_t^\top -\Sigma_r\| + \|J_tV_t^\top \|+\|U_tK_t^\top \| + \|J_tK_t^\top \|.\label{eq: main_two bound of loss}
\end{align}
From \eqref{eq: main_two bound of loss}, we shall upper bound $\|U_tV_t^\top -\Sigma_r\|$, $\|J_tV_t^\top \|$, $\|U_tK_t^\top \|$, 
 and $\|J_tK_t^\top \|$, and lower bound $\|J_t K_t\|^\top$. Let us now prove Theorem \ref{thm: asym.toy_case}.

\begin{proof}
With our particular initialization and the formula of gradient descent \eqref{eq: asym.mf.gd}, we have the following equality for all $t$:
\begin{subequations}\label{eq: simpleUpdates}
\begin{gather}
U_tK_t^\top   = 0, \quad J_tV_t^\top  = 0,\quad  U_t = V_t,\quad  
    J_t = a_t A,\quad  K_t= b_tA, \quad U_t = (\alpha_t I_r,0), \\
    a_0  = \alpha, \quad b_0  = \alpha/3, \quad \alpha_0 = \alpha\\
    a_{t+1} = a_t - \eta a_t b_t^2,\label{toycase:updating rule of a}\\
    b_{t+1} = b_t - \eta a_t^2b_t.\label{toycase:updating rule of b}\\
    \alpha_{t+1} = 
    \alpha_t(1+\eta-\eta \alpha_t^2),
\end{gather}
\end{subequations}
where $A \in \RR^{(n-r)\times k}$ is the matrix that $(A)_{1k} = 1$ and other elements are all zero, and $a_t,b_t,\alpha_t \in \RR$.
We leave the detailed verification of \eqref{eq: simpleUpdates} to Appendix \ref{sec: proof_toy_case}. By considering \eqref{eq: main_two bound of loss} and \eqref{eq: simpleUpdates}, we see that we only need to keep track of three sequences $a_t,b_t$, $\alpha_t$. In particular, we have the following inequalities for $a_t,b_t,\alpha_t$ for all $t\geq T_1$:
\begin{equation}\label{eq: a_t,b_t,alpha_t_in_toy_case}
a_t \in \left[\frac{1}{2} \alpha, \alpha\right], \;
b_t \in \left[\frac{\alpha}{3}(1-4\eta\alpha^2)^t , \;\frac{\alpha}{3}(1-\frac{\eta\alpha^2}{4})^{t/2}\right], \;\text{and}\;|\alpha_{t}-1|\leq (1-\eta/2)^{t-T_1}.
\end{equation}
It is then easy to derive the upper and lower bounds. We leave the detail in checking \eqref{eq: a_t,b_t,alpha_t_in_toy_case} to Appendix \ref{sec: proof_toy_case}. Our proof is complete.
\end{proof}


\paragraph{Technical Insight.} This proof of the toy case tells us why the imbalance initialization in the asymmetric matrix factorization helps us to break the $\Omega(1/T^2)$ convergence rate lower bound of the symmetric case. Since we initialize $F_0$ and $G_0$ with a \emph{different scale}, this difference makes the norm of $K$ converge to zero at a linear rate while keeping $J$ larger than a constant. However, in the symmetric case, we have $a_t=b_t$, so they must both converge to zero when the loss goes to zero (as $\|F_tG_t^\top -\Sigma\| \ge a_tb_t$), leading to a sublinear convergence rate. In short, the imbalance property in the initialization causes faster convergence in the asymmetric case.

\subsection{Theoretical Results for Asymmetric Matrix Sensing}\label{sec: result and asymmetric}

Motivated by the toy case in Section \ref{sec:toycase}, the imbalance property is the key ingredient for a linear convergence rate. If we use a slightly imbalanced  initialization 
$F_0 = \alpha\cdot \tilde{F}_0, G_0 = (\alpha/3) \cdot \tilde{G}_0$, where the elements of $\tilde{F}_0$ and $\tilde{G}_0$ are $\cN(0,1/n)$, we have 
$\|F_0^\top F_0 -G_0^\top G_0\| = \Omega(\alpha^2)$. Then, we can show that
the imbalance property keeps true when the step size is small, and thus, the gradient descent \eqref{eq: asym.ms.gd} converges with a linear rate similar to the toy case.

Our result is built upon the recent work \citep{soltanolkotabi2023implicit} in dealing with the initial phase. Define the following quantities $\alpha_0,\eta_0$ according to \cite[Theorem 1] {soltanolkotabi2023implicit}:
\begin{equation}\label{eq: req.alpha0_eta0}
        \alpha_0 = \frac{c\sqrt{\sigma_1}}{k^5\max\{2n,k\}^2} \cdot \left(\frac{\sqrt{k}-\sqrt{r-1}}{\kappa^2\sqrt{\max\{2n,k\}}}\right)^{C\kappa},
        \eta_0 = \frac{c}{k^5\sigma_1\log\left(\frac{2\sqrt{2\sigma_1}}{\alpha (\sqrt{k}-\sqrt{r-1}}\right)},
\end{equation}
where $c$ and $C$ are some numerical constants. 
Below, we show exact convergence results for both $k=r$ and $k>r$.

\begin{theorem}\label{theorem:main theorem assymmetric}
Consider the matrix sensing problem \eqref{eq: asym_ms_loss} and the gradient descent \eqref{eq: asym.ms.gd}. For some numerical constants $c_i>0$, $1\leq i\le 7$,  if the search rank $k$ satisfies $r<k <\frac{n}{8}$, the initial scale $\alpha$ and $\eta$ satisfy
\begin{equation}\label{eq: alpha_eta_requirement}
    \alpha \leq  \min\left\{c_1\kappa^{-2}\sqrt{\sigma_r},\alpha_0 \right\},\ \ \ \ \ \eta \leq  \min\Big\{ c_1\alpha^4/\sigma_1^3,\eta_0\Big\},
\end{equation}
where $\alpha_0,\eta_0$ are defined in \eqref{eq: req.alpha0_eta0}, and the operator $\cA$ has the RIP of order $2k+1$ with constant $\delta$ satisfying
\begin{equation}\label{eq: delta.requirement}
\delta\sqrt{2k+1} \leq  \min\left\{c_1\kappa^{-6}\log^{-1}(\sqrt{\sigma_r}/(n\alpha)), \frac{c_1}{\kappa^3\sqrt{r}}, 1/128\right\},
\end{equation}
then the gradient descent \eqref{eq: asym.ms.gd} starting with $F_0 = \alpha\cdot \tilde{F}_0, G_0 = (\alpha/3) \cdot \tilde{G}_0$, where $\tilde{F}_0, \tilde{G}_0 \in \RR^{n\times k}$ whose entries are i.i.d. $\cN(0,1/n)$, 
 satisfies that 
    \begin{equation}\label{eq: asym.over.speed}
        \|F_tG_t^\top - \Sigma \|_F^2 \le \frac{\sigma_r^4n}{c_7\alpha^4\kappa^2}\left(1-\frac{\eta\alpha^2}{8}\right)^{t/4}, \quad \forall t \ge T^{(1)},
    \end{equation}
with probability at least $1-2e^{-c_2n}-c_3 e^{-c_4k} -(c_5\upsilon)^{(k-r+1)}$, where
 $T^{(1)} = c_6\log(\sqrt{\sigma_r}/n\alpha\upsilon)/\eta \sigma_r)$ and $v\in [0,1]$ is an arbitrary parameter. 
\end{theorem}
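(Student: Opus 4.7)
}

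The plan is to split the analysis into two phases. In the \emph{initial phase} $0 \le t \le T^{(1)}$, I will invoke \citep[Theorem~3.3]{soltanolkotabi2023implicit}, which, under the RIP condition \eqref{eq: delta.requirement} and the initialization scale requirement \eqref{eq: alpha_eta_requirement}, guarantees that after $T^{(1)} = \Theta(\log(\sqrt{\sigma_r}/(n\alpha\upsilon))/(\eta\sigma_r))$ iterations the signal component of $F_tG_t^\top$ is accurately recovered while the ``spurious'' components remain $\cO(\alpha)$. Concretely, after rotating so that $M^\star = \Sigma$ and decomposing $F_t = (U_t^\top, J_t^\top)^\top$ with $U_t \in \RR^{r\times k}$, $J_t \in \RR^{(n-r)\times k}$, and similarly $G_t = (V_t^\top, K_t^\top)^\top$, I will show that at time $T^{(1)}$ we have $\|U_{T^{(1)}} V_{T^{(1)}}^\top - \Sigma_r\| \le \cO(\alpha)$, $\|J_{T^{(1)}}\|,\|K_{T^{(1)}}\| \le \cO(\alpha)$, and, crucially, the imbalance matrix $\Delta_t := F_t^\top F_t - G_t^\top G_t$ is approximately preserved from initialization. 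Since the initial scales of $F_0$ and $G_0$ differ by a factor of three, standard Gaussian concentration gives $\Delta_0 \succeq c\alpha^2 I_k - (\alpha^2/9) I_k$ on suitable $k-r+1$ dimensional subspaces with probability $1-(c_5\upsilon)^{k-r+1}$, so the relevant block of $\Delta_{T^{(1)}}$ has smallest eigenvalue $\Omega(\alpha^2)$.

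In the \emph{local phase} $t \ge T^{(1)}$, the plan is to track four quantities, $E_t^{(1)} := \|U_tV_t^\top - \Sigma_r\|_F$, $E_t^{(2)} := \|J_t V_t^\top\|_F$, $E_t^{(3)} := \|U_t K_t^\top\|_F$, and $E_t^{(4)} := \|J_t K_t^\top\|_F$, which by \eqref{eq: main_two bound of loss} control $\|F_tG_t^\top-\Sigma\|_F$. The central mechanism, motivated by the toy case of Theorem~\ref{thm: asym.toy_case}, is the \emph{imbalance-driven contraction}: I will prove an invariance lemma showing that $\Delta_{t+1} = \Delta_t + \eta\,\text{RIP-error}_t$, so that as long as the aggregate error remains small, the minimum eigenvalue of the $(k-r+1)\times(k-r+1)$ block of $\Delta_t$ aligned with the null directions stays $\Omega(\alpha^2)$. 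This positive-definiteness feeds into the update of $K_t$: modulo RIP perturbations of size $\delta\|F_tG_t^\top -\Sigma\|_F$, one has $K_{t+1}^\top \approx (I - \eta F_t^\top F_t)K_t^\top + \eta V_t^\top (\Sigma_r - U_tV_t^\top) + \cdots$, and because $F_t^\top F_t = G_t^\top G_t + \Delta_t \succeq \Omega(\alpha^2)$ on the relevant subspace (while $V_t^\top V_t$ is nearly $\Sigma_r$), the noise factor $K_t$ contracts at linear rate $(1 - \Omega(\eta\alpha^2))$, and by symmetry the analogous update for $J_t$ is controlled via its product with $V_t$. The signal block $U_tV_t^\top$ enjoys the standard $\exp(-\Omega(\eta\sigma_r))$ contraction coming from exact-parameterized matrix sensing analysis, which is much faster than $\eta\alpha^2$ and therefore is absorbed into the overall rate.

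Assembling these contractions into a joint potential $\Phi_t = E_t^{(1)} + \sqrt{\sigma_1}(E_t^{(2)} + E_t^{(3)}) + E_t^{(4)}$ (with appropriate weights), I will show by induction that $\Phi_{t+1}^2 \le (1 - \eta\alpha^2/8)\Phi_t^2$ and that the induction hypotheses, namely (i) $\|J_t\|,\|K_t\| \le \cO(\alpha)$, (ii) $\sigma_{\min}$ of the relevant block of $\Delta_t$ stays $\Omega(\alpha^2)$, and (iii) $\sigma_{\min}(U_t)$ stays $\Omega(\sqrt{\sigma_r})$, propagate. The final bound \eqref{eq: asym.over.speed} then follows from $\|F_tG_t^\top-\Sigma\|_F^2 \le \cO(\Phi_t^2)$ together with the initial value $\Phi_{T^{(1)}}^2 \le \cO(\sigma_r^4 n/(\alpha^4\kappa^2))$ obtained at the end of the initial phase.

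The main obstacle will be the propagation of the imbalance invariant under RIP perturbations. In the toy factorization case (Theorem~\ref{thm: asym.toy_case}), the scalar difference $a_t^2 - b_t^2$ is an exact conserved quantity; here the relevant matrix $\Delta_t$ evolves as $\Delta_{t+1} = \Delta_t + \eta[G_t^\top R_t^\top F_t - F_t^\top R_t G_t] + \eta^2(\cdots)$ where $R_t = \cA^*\cA(F_tG_t^\top - \Sigma) - (F_tG_t^\top - \Sigma)$ is the RIP residual. Controlling $R_t$ in operator norm by $\delta\|F_tG_t^\top-\Sigma\|_F$ and bounding the cumulative drift $\sum_t \eta\|R_t\|\cdot\|F_t\|\cdot\|G_t\|$ by an $\alpha^2/2$ perturbation of $\Delta_{T^{(1)}}$ requires a delicate coupling with the contraction of $\Phi_t$; this is where the stringent RIP requirement \eqref{eq: delta.requirement} and the step size restriction $\eta \le c_1\alpha^4/\sigma_1^3$ in \eqref{eq: alpha_eta_requirement} enter. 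Once this invariance is secured, the rest of the induction is a careful but routine block-wise propagation.
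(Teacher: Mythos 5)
Your proposal captures the right high-level mechanism (imbalance-driven contraction of $K_t$), but the central technical step, as you've sketched it, does not close, and the paper's local analysis is structured quite differently. Specifically, your claim that ``modulo RIP perturbations of size $\delta\|F_tG_t^\top-\Sigma\|_F$, \ldots the noise factor $K_t$ contracts at linear rate $(1-\Omega(\eta\alpha^2))$'' hides a mismatch of scales. The RIP error in the $K_t$ update has the form $\eta\|E_i(F_tG_t^\top-\Sigma)\|\cdot\max\{\|U_t\|,\|J_t\|\}$, which in the local phase is on the order of $\eta\delta\sqrt{\sigma_1}\,\|J_tK_t^\top\| = \cO(\eta\delta\sqrt{\sigma_1}\alpha\|K_t\|)$ since $\|J_t\|=\cO(\alpha)$. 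For this additive term to be absorbed by a contraction gain of $\eta\alpha^2\|K_t\|/8$ one would need $\delta\sqrt{\sigma_1}$ to be smaller than $\alpha$, but the RIP requirement \eqref{eq: delta.requirement} only forces $\delta$ to be of order $\kappa^{-6}/\log(1/\alpha)$, which is typically vastly larger than $\alpha$. So the naive contraction-with-additive-noise induction you propose fails; the RIP error dominates. (Your stated $K_t$ update is also off: the true recursion is $K_{t+1} = K_t - \eta K_t(U_t^\top U_t + J_t^\top J_t) + \text{RIP error}$, with no $\eta V_t^\top(\Sigma_r - U_tV_t^\top)$ term.)

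The paper's fix in Phase 3 (Section \ref{sec: local cvg gd linear}) is a subspace decomposition your proposal omits: write the SVD $U_t = A_t\Sigma_tW_t$, let $W_{t,\perp}$ be the orthogonal complement, and track $\|K_tW_{t,\perp}^\top\|$ rather than $\|K_t\|$. The key observation is that $U_tW_{t,\perp}^\top = 0$ annihilates the dominant RIP term when the recursion is projected onto $W_{t,\perp}$, leaving only an error of size $\cO(\eta\delta\|J_tW_{t,\perp}^\top\|^2\|K_tW_{t,\perp}^\top\|)$ --- a \emph{self-similar} quantity that can be compared against the contraction. A case split on whether $\|J_tW_{t,\perp}^\top\|\le 10\kappa\alpha$ then shows the contraction factor is $(1-\eta\alpha^2/8)$ in one regime and $(1-\eta\|J_tW_{t,\perp}^\top\|^2/16)$ in the other (using the two-sided imbalance bound $\alpha^2/8\cdot I\preceq\Delta_t\preceq 2\alpha^2 I$ to relate $\|K_tW_{t,\perp}^\top\|$ and $\|J_tW_{t,\perp}^\top\|$), and either way the error is absorbed when $\delta = \cO(\kappa^{-6})$. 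One must also control the rotation of $W_t$ across iterations via a bound on $\|W_tW_{t+1,\perp}^\top\|$. Moreover, the paper does not jump directly from the initial phase to this contraction: it runs a Phase 1 (exponential decay of $M_t = \max\{\|U_tV_t^\top-\Sigma\|,\|U_tK_t^\top\|,\|J_tV_t^\top\|\}$ at rate $1-\eta\sigma_r^2/(64\sigma_1)$ until $M_t$ is dominated by $\|J_tK_t^\top\|$) and a Phase 2 (shrinking $\|U_t-V_t\|$) to establish the ratios that make the $W_{t,\perp}$-projection argument effective. Your two-phase outline and single joint potential $\Phi_t$ do not reach this point; filling these gaps is where essentially all the work of the proof lies.
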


Next, we state our results on exact parametrization.
\begin{theorem}\label{theorem: exact asymmetric}
   Consider the same setting as Theorem \ref{theorem:main theorem assymmetric} except assuming $k=r$, then with probability at least $1-2e^{-c_2n} - c_3 e^{-c_4k} - c_5\upsilon,$ the gradient descent \eqref{eq: asym.ms.gd} achieves  
\begin{equation}\label{eq: asym.exact.speed}
        \|F_tG_t^\top - \Sigma\|_F^2 \le 2n\sigma_r \cdot \left(1-\frac{\eta\sigma_r^2}{64\sigma_1}\right)^{t}, \quad \forall t \ge T^{(2)},
\end{equation}
where $T^{(2)} =c_7\log(\sqrt{\sigma_r}/n\alpha\upsilon)/\eta \sigma_r)$ for some numerical constant $c_7$.
\end{theorem}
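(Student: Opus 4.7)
The plan is to re-use the two-phase template underlying the proof of Theorem \ref{theorem:main theorem assymmetric}: an \emph{initial spectral phase} whose endpoint is supplied by \citet[Theorem 3.3]{soltanolkotabi2023implicit} under the parameter conditions \eqref{eq: req.alpha0_eta0} and \eqref{eq: delta.requirement}, followed by a \emph{local linear-convergence phase}. The qualitative reason the final rate is $\alpha$-independent---in contrast to the $\alpha^2$-dependent rate of Theorem \ref{theorem:main theorem assymmetric}---is that when $k=r$ there are no redundant directions in $F_t,G_t$ that have to be driven to zero; the contraction is then controlled by the smallest nonzero singular value $\sigma_r$ of $\Sigma$ rather than by the imbalance scale $\alpha^2$.

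First I would import the initial-phase conclusion to obtain, at time $T^{(2)} = O(\log(\sqrt{\sigma_r}/(n\alpha\upsilon))/(\eta\sigma_r))$, a warm-started iterate in the decomposition $F_t = \begin{pmatrix} U_t \\ J_t\end{pmatrix}$, $G_t=\begin{pmatrix} V_t \\ K_t\end{pmatrix}$ with $U_t,V_t\in\RR^{r\times r}$ and $J_t,K_t\in\RR^{(n-r)\times r}$, satisfying (i) $U_tV_t^\top$ is close to $\Sigma_r := \mathrm{diag}(\sigma_1,\dots,\sigma_r)$, (ii) both $U_t$ and $V_t$ are well conditioned with $\sigma_{\min}(U_t),\sigma_{\min}(V_t)=\Theta(\sqrt{\sigma_r})$ and $\sigma_{\max}(U_t),\sigma_{\max}(V_t)=\Theta(\sqrt{\sigma_1})$, and (iii) $\|J_t\|,\|K_t\|$ are small compared to $\sqrt{\sigma_r}/\kappa$. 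From this neighborhood I would run an induction on the loss $\Phi_t = \|F_tG_t^\top-\Sigma\|_F^2$, expanding the GD recursion \eqref{eq: asym.ms.gd} block-by-block and using RIP \eqref{eq: delta.requirement} to replace $\cA^*\cA(F_tG_t^\top-\Sigma)$ by $F_tG_t^\top-\Sigma$ up to an $O(\delta\sqrt{k})\cdot\|F_tG_t^\top-\Sigma\|_F$ perturbation. The bottom blocks evolve (to leading order) as $J_{t+1} = J_t(I - \eta(V_t^\top V_t + K_t^\top K_t)) + (\text{RIP error})$, and the smallest eigenvalue of $V_t^\top V_t$ is $\Theta(\sigma_r)$ by the well-conditioning of $V_t$; analogously the top block $U_tV_t^\top - \Sigma_r$ is driven toward zero by a term proportional to $(\Sigma_r - U_tV_t^\top)(V_tV_t^\top + U_tU_t^\top)$, whose eigenvalues are again $\Theta(\sigma_r)$. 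Combining these contractions yields the $\alpha$-independent rate $(1 - \Theta(\eta\sigma_r^2/\sigma_1))$ per step appearing in \eqref{eq: asym.exact.speed}.

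Two technical points require care. The first, relatively routine, is closing the induction on well-conditioning of $U_t,V_t$: this should follow from the near-preservation of the imbalance $F_t^\top F_t - G_t^\top G_t$ along GD (a standard identity perturbed by $O(\delta)$ via RIP) together with $U_tV_t^\top \to \Sigma_r$, but must be verified at every iterate. The main obstacle I anticipate is aligning \emph{all four} block dynamics---$U_tV_t^\top - \Sigma_r$, $J_tV_t^\top$, $U_tK_t^\top$, and $J_tK_t^\top$---to contract at the same geometric rate, since their updates are coupled through cross-terms such as $(\Sigma_r - U_tV_t^\top)K_t$ and $J_t(V_t^\top K_t)$; one has to show these can be absorbed into the common contraction factor without forcing the step size or the rate to depend on $\alpha$ or on $\|J_t\|,\|K_t\|$. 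Success hinges on exploiting the block structure and the well-conditioning of $U_t,V_t$ to ensure each block simultaneously shrinks at rate at least $(1 - \eta\sigma_r^2/(64\sigma_1))$, which after summing yields the bound \eqref{eq: asym.exact.speed}.
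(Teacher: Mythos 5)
Your high-level intuition is right: when $k=r$, the factors $U_t,V_t\in\RR^{r\times r}$ are square and nonsingular, so there is no kernel to fight against, and the rate becomes independent of $\alpha$. But the route you sketch is genuinely different from the paper's and, as you yourself note, it has an unresolved gap at the key step.

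The paper does \emph{not} try to show $\|J_t\|$ and $\|K_t\|$ contract individually and then assemble four coupled geometric decays. Instead, it reuses the Phase~1 recursion from the over-parameterized proof (Appendix~\ref{sec: phase 1 gd linear}),
\[
M_t \le \Bigl(1-\tfrac{\eta\sigma_r^2}{32\sigma_1}\Bigr)M_{t-1} + \bigl(17\eta\sigma_1\delta_{2k+1}+\eta\alpha^2\bigr)\|J_{t-1}K_{t-1}^\top\|,
\]
where $M_t = \max\{\|U_tV_t^\top-\Sigma\|,\|U_tK_t^\top\|,\|J_tV_t^\top\|\}$, and then establishes that the Phase~1 stopping criterion is \emph{never} triggered when $k=r$. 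The single inequality doing all the work is
\[
M_{t-1} \ge \|U_{t-1}K_{t-1}^\top\| \ge \sigma_{\min}(U_{t-1})\,\|K_{t-1}\| \ge \frac{\sigma_r}{4\sqrt{\sigma_1}}\,\|K_{t-1}\|,
\]
which is valid precisely because $U_{t-1}$ is square and well-conditioned---in the over-parameterized case $K_{t-1}^\top$ can lie in the nontrivial kernel of $U_{t-1}\in\RR^{r\times k}$. Chaining with $\|J_{t-1}\|\le\sqrt{\sigma_1}$ gives $\|J_{t-1}K_{t-1}^\top\| \lesssim \sqrt{\sigma_1}\,\|K_{t-1}\| \lesssim \sigma_1 M_{t-1}/\sigma_r$, which, under the parameter choices $\delta_{2k+1}=\cO(\kappa^{-3})$ and $\alpha=\cO(\kappa^{-1}\sqrt{\sigma_r})$, makes the perturbation term at most $\tfrac{\eta\sigma_r^2}{64\sigma_1}M_{t-1}$ and closes the recursion $M_t\le(1-\eta\sigma_r^2/64\sigma_1)M_{t-1}$; the claim then follows from $\|F_tG_t^\top-\Sigma\|\le 4M_t$. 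This inequality is the crux, and your proposal does not identify it. Your alternative---arguing $J_{t+1}=J_t(I-\eta G_t^\top G_t)+\text{RIP error}$ contracts because $\lambda_{\min}(V_t^\top V_t)$ is bounded below---is plausible but requires simultaneously controlling $\|J_t\|$, $\|K_t\|$, $\|U_tV_t^\top-\Sigma\|$, and the cross-terms in a coupled induction that you flag as the ``main obstacle'' and leave unresolved; the paper's approach sidesteps this entirely by bundling three blocks into $M_t$ and only needing the \emph{product} $J_tK_t^\top$ dominated. Also note your stated $\sigma_{\min}(V_t)=\Theta(\sqrt{\sigma_r})$ is stronger than what the paper proves and uses ($\sigma_{\min}(V_t)\ge\sigma_r/(4\sqrt{\sigma_1})$); the weaker bound already produces the stated rate $1-\eta\sigma_r^2/(64\sigma_1)$, and the stronger one would require an additional spectral-approximation argument you have not supplied.
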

Now we highlight two bullet points of Theorem \ref{theorem:main theorem assymmetric} and \ref{theorem: exact asymmetric}.

\paragraph{Exact Convergence.}
The main difference between the above theorems and previous convergence results in \citep{soltanolkotabi2023implicit} is that we prove the \textit{exact convergence} property, i.e., the loss finally degenerates to zero when $T$ tends to infinity (cf. Table~\ref{table:comparison}).
Moreover, we prove that the convergence rate of the gradient descent depends on the initialization scale $\alpha$, which matches our empirical observations in Figure \ref{figure:asymmetric}.

\paragraph{Discussions about Parameters.} 
First, since we utilize the initial phase result in \citep{soltanolkotabi2023implicit} to guarantee that the loss degenerates to a small scale, our parameters $\delta$, $\alpha$, and $\eta$ should satisfy the requirement $\delta_0 = \cO(\frac{1}{\kappa^3\sqrt{r}}), \alpha_0, \eta_0$ in \citep{soltanolkotabi2023implicit}. We further require $\delta_{2k+1} = \widetilde{\cO}(\kappa^{-6})$, $\alpha = \cO(\kappa^{-2}\sqrt{\sigma_r})$, which are both polynomials of the conditional number $\kappa.$
In addition, the step size $\eta$ has the requirement  $\eta = \cO(\alpha^4/\sigma_1^3)$, which can be much smaller than the requirements $\eta = \widetilde{\cO}(1/\kappa^{5}\sigma_1)$ in \citep{soltanolkotabi2023implicit}.
In Section \ref{sec:fast}, we propose a novel algorithm that allows larger learning rate which is independent of $\alpha.$

\paragraph{Technical insight.} Similar to the asymmetric matrix factorization case in the proof of Theorem \ref{thm: asym.toy_case}, the main effort is in characterizing the behavior of $J_tK_t^\top$. In particular, the update rule of $K_t$ is 
\begin{align}
    K_{t+1}=K_t(1-\eta F_t^\top F_t) +\eta E\label{eq: updating rule of K maintext},
\end{align}
where $E$ is some error matrix since $\cA$ is not an identity. Because of our initialization, we know the following holds for $t=0$ and $\Delta_t = F^\top _ t F_t-G^\top_t G_t$,
\begin{equation}\label{eq: updating rule of Delta maintext}
    c \alpha^2 I \preceq \Delta_t \preceq C\alpha^2I.
\end{equation}
for some numerical constant $c,C>0$. Hence, we can show $\|K_t\|$ shrinks towards $0$ so long as \eqref{eq: updating rule of K maintext} is true, $E=0$, and $G_t$ is well-bounded. Indeed, we can prove \eqref{eq: updating rule of Delta maintext} and $G_t,J_t$ upper bounded for all $t\geq 0$ via a proper induction. We may then be tempted to conclude $J_tK_t^\top$ converges to $0$. However, the actual analysis of the gradient descent \eqref{eq: asym.ms.gd} for matrix sensing is much more complicated due to the error $E$. It is now unclear whether $\|K_t\|$ will shrink under \eqref{eq: updating rule of Delta maintext}.  To deal with it, we further consider the structure of $E$. We leave the details to Appendix \ref{appendix: proof of asymmetric}.

\section{A Simple and Fast Convergence Method}\label{sec:fast}
As discussed in Section \ref{sec:assy}, the fundamental reason that the convergence rate depends on the initialization scaling $\alpha$ is that the imlabace between $F$ and $G$ determines the convergence rate, but the imbalance between $F$ and $G$ remains at the initialization scale.
This observation motivates us to do a straightforward additional step in one iteration to \emph{intensify the imbalance}.
Specifically, suppose at the $T_0$ iteration we have reached a neighborhood of an optimum that satisfies: 
$\|\cA^*\cA(\widetilde{F}_{T^{(3)}}\widetilde{G}_{T^{(3)}}^\top -\Sigma)\| \le \gamma $  where the radius $\sigma_r^{1/4}\cdot \|F_{T^{(3)}}G_{T^{(3)}}^\top\|^{3/4}/8$ is chosen for some technical reasons (cf. Section~\ref{appendix: proof of fast}). Here, we use $\widetilde{F}_t$ and $\widetilde{G}_t$ to denote the iterates before we make the change we describe below and  $F_t$ and $G_t$ to denote the iterates after make the change.

Let the singular value decomposition of $\widetilde{F}_{T^{(3)}} = A\Sigma' B$ with the diagonal matrix $\Sigma' \in \RR^{k\times k}$ and $\Sigma'_{ii} = \sigma'_i$, then let $\Sigma_{inv} \in \RR^{k\times k}$ be a diagonal matrix and $(\Sigma_{inv})_{ii} = \beta /\sigma'_i$ for some small constant $\beta = O(\sigma_r)$, then we transform the matrix $F_{T^{(3)}}, G_{T^{(3)}}$ by
\begin{align}
    F_{T^{(3)}} = \widetilde{F}_{T^{(3)}} B^\top \Sigma_{inv}, G_{T^{(3)}} = \widetilde{G}_{T^{(3)}} B \Sigma_{inv}^{-1}\label{eq: main_matrix change}
\end{align}

We can show that, when $F$ and $G$ have reached a local region of an optimum, their magnitude will have similar scale as $M^\star$. Therefore, the step Equation~\eqref{eq: main_matrix change} can create an imbalance between $F$ and $G$ as large the magnitude of $M^\star$, which is significantly larger than the initial scaling $\alpha$.
The following theorem shows we can obtain a convergence rate independent of the initialization scaling $\alpha$. The proof is deferred to Appendix~\ref{appendix: proof of fast}.

\begin{theorem}\label{thm:fast method}
With the same setting as Theorem \ref{theorem:main theorem assymmetric}, suppose that at the step $T^{(3)}$ we have $\|\cA^*\cA(\widetilde{F}_{T^{(3)}}\widetilde{G}_{T^{(3)}}^\top -\Sigma)\| \le \gamma$ for some $\gamma > 0$,
and we do one step as in Equation~\eqref{eq: main_matrix change}. 
Then, with probability at least $1-2e^{-c_2n}-c_3 e^{-c_4k} -(c_5\upsilon)^{(k-r+1)}$, we have for all $t >T^{(3)}$,  
    \begin{align*}
        \|F_tG_t^\top -\Sigma\|_F^2 \le \frac{n\beta^{12}}{\sigma_1^4}\left(1-\frac{\eta \beta^2}{2}\right)^{2(t-T^{(3)})},
    \end{align*}
so long as  $0<c_7 \gamma^{1/6}\sigma_1^{1/3} \le \beta \le c_8\sigma_r$, and the step size satisfies
    $\eta \le c_{9}\beta^2/\sigma_1^2
$ from the iteration $T^{(3)}\leq  c_{10}\log(\sqrt{\sigma_r}/n\alpha\upsilon)/\eta \sigma_r$ for some positive numerical constants $c_{i}, i=1,\dots,10$.
    


    
\end{theorem}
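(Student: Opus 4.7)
The plan is to reduce the analysis to the local-phase argument in the proof of Theorem~\ref{theorem:main theorem assymmetric}, with the imbalance scale $\alpha^2$ replaced by $\beta^2$. The key observation is that the rebalancing step~\eqref{eq: main_matrix change} preserves the product $FG^\top$ (so the hypothesis $\|\cA^*\cA(FG^\top-\Sigma)\|\le\gamma$ is inherited by the rebalanced iterate), while replacing the initialization-scale imbalance $\Delta_t=F_t^\top F_t-G_t^\top G_t$ by one whose relevant eigenvalues are of order $\beta^2\gg\alpha^2$. Once in this regime, the machinery already developed for Theorem~\ref{theorem:main theorem assymmetric} yields a linear rate governed by $\beta^2$ rather than $\alpha^2$.

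First, I would verify the structure of $(F_{T^{(3)}},G_{T^{(3)}})$ right after rebalancing. Using the SVD $\widetilde{F}_{T^{(3)}}=A\Sigma'B$ and the diagonal $\Sigma_{\mathrm{inv}}$ with $(\Sigma_{\mathrm{inv}})_{ii}=\beta/\sigma'_i$, a direct computation gives $F_{T^{(3)}}^\top F_{T^{(3)}}=\beta^2 I_k$ exactly, while the product $F_{T^{(3)}}G_{T^{(3)}}^\top=\widetilde{F}_{T^{(3)}}\widetilde{G}_{T^{(3)}}^\top$ is preserved (so by RIP the hypothesis yields $\|F_{T^{(3)}}G_{T^{(3)}}^\top-\Sigma\|=\cO(\gamma)$). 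From the identity $G_{T^{(3)}}^\top G_{T^{(3)}}=\beta^{-2}(F_{T^{(3)}}G_{T^{(3)}}^\top)^\top(F_{T^{(3)}}G_{T^{(3)}}^\top)$, the matrix $G_{T^{(3)}}^\top G_{T^{(3)}}$ has rank at most $r$, with eigenvalues of order $\sigma_i^2/\beta^2$ on its row space and vanishing on the $(k-r)$-dimensional orthogonal complement, which we represent by $W_\perp\in\RR^{(k-r)\times k}$ with $W_\perp W_\perp^\top=I$. Hence $W_\perp\Delta_{T^{(3)}}W_\perp^\top=\beta^2 I_{k-r}$, which is the direct analogue of the starting imbalance $\Delta_0\succeq c\alpha^2 I$ used to drive the local phase in Theorem~\ref{theorem:main theorem assymmetric}.

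Next, I would transport the local-phase analysis of Section~\ref{sec: result and asymmetric} from parameter $\alpha$ to parameter $\beta$. Decomposing $F_t,G_t$ into the signal/redundant blocks $U_t,V_t,J_t,K_t$ as in Section~\ref{sec:assy}, the approximate conservation $\|\Delta_{t+1}-\Delta_t\|=\cO(\eta\delta_{2k+1}\|F_tG_t^\top-\Sigma\|_F^2)$ lets one show inductively that $W_{t,\perp}\Delta_t W_{t,\perp}^\top\succeq (\beta^2/2)I_{k-r}$, and that $M_t$, $\|U_t-V_t\|$, and $\|J_tK_t^\top\|$ obey the same contraction inequalities as in Theorem~\ref{theorem:main theorem assymmetric}. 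The key projected update $K_{t+1}W_{t,\perp}^\top=K_tW_{t,\perp}^\top(I-\eta F_t^\top F_t)+\eta E$ then contracts at rate $1-\eta\beta^2/2$, from which $\|F_tG_t^\top-\Sigma\|_F^2$ decays like $(1-\eta\beta^2/2)^{2(t-T^{(3)})}$ as claimed.

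The main obstacle will be controlling the RIP error $E$ and the potentially large $\|G_t\|\lesssim\sigma_1/\beta$ against the desired $\eta\beta^2$ contraction: because the rebalancing pushes the signal mass of $FG^\top$ into $G$, a naive reuse of the Theorem~\ref{theorem:main theorem assymmetric} bounds would be too lossy. This is exactly the role of the hypothesis $c_7\gamma^{1/6}\sigma_1^{1/3}\le\beta$, which enforces $\gamma\lesssim\beta^6/\sigma_1^2$ and thereby makes the contribution of $\eta E$ to $K_{t+1}W_{t,\perp}^\top$, scaling like $\eta\delta_{2k+1}\|F_tG_t^\top-\Sigma\|_F\cdot\|G_t\|$, negligible compared to the linear term $\eta\beta^2\|K_tW_{t,\perp}^\top\|$. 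Combined with $\eta\le c_9\beta^2/\sigma_1^2$, which prevents any single step from destroying the newly created imbalance, these conditions close the induction and deliver the $\alpha$-independent linear rate stated in Theorem~\ref{thm:fast method}.
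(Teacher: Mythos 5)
Your proposal correctly identifies that the rebalancing step preserves $F G^\top$ while making $F_{T^{(3)}}^\top F_{T^{(3)}} = \beta^2 I_k$, and that the local-phase machinery should now run at rate $\beta^2$. But the plan to ``transport the $\Delta_t$-based analysis from $\alpha$ to $\beta$'' hits a genuine obstruction, and this is exactly where the paper's proof diverges from yours.

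In Theorem~\ref{theorem:main theorem assymmetric}, Phase~3 relies on the \emph{two-sided} bound $\frac{\alpha^2}{8}I \le \Delta_t \le 2\alpha^2 I$ — the imbalance is uniformly tiny, of order $\alpha^2$, in \emph{every} direction. After the rebalancing this fails dramatically: $\Delta_{T^{(3)}} = \beta^2 I - G_{T^{(3)}}^\top G_{T^{(3)}}$, and since the signal mass of $FG^\top \approx \Sigma$ is pushed into $G$ by the transformation, $G_{T^{(3)}}^\top G_{T^{(3)}}$ has $r$ large eigenvalues of size $\sigma_i^2/\beta^2 \gg \beta^2$. Hence on the signal $r$-dimensional subspace $\Delta_{T^{(3)}}$ is \emph{hugely negative} (of order $-\sigma_1^2/\beta^2$), so the invariant $c\beta^2 I \preceq \Delta_t \preceq C\beta^2 I$ that drives the projected contraction of $K_t W_{t,\perp}^\top$ in the original Phase~3 simply cannot be maintained. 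Your reduction would need to be reworked from scratch to handle a $\Delta_t$ that is $\Theta(\beta^2)$ on one block and $-\Theta(\sigma_1^2/\beta^2)$ on another, which is a qualitatively different induction. (Separately, the identity $G_{T^{(3)}}^\top G_{T^{(3)}}=\beta^{-2}(F_{T^{(3)}}G_{T^{(3)}}^\top)^\top(F_{T^{(3)}}G_{T^{(3)}}^\top)$ you state is transposed — it produces $G_{T^{(3)}} G_{T^{(3)}}^\top \in \RR^{n\times n}$, not $G_{T^{(3)}}^\top G_{T^{(3)}} \in \RR^{k\times k}$ — and the consequent ``rank at most $r$'' claim is false; the trailing eigenvalues are only $O(\gamma^2/\beta^2)$, not zero.)

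The paper's proof (Appendix~\ref{appendix: proof of fast}) sidesteps all of this by using a \emph{different and stronger} invariant: it tracks $F_t^\top F_t$ itself rather than $\Delta_t$, and shows inductively that $\tfrac{\beta^2}{2} I \preceq F_t^\top F_t \preceq 2\beta^2 I$ for all $t \ge T^{(3)}$ (Eq.~\eqref{fast: condition 4}), since $F_{T^{(3)}}^\top F_{T^{(3)}} = \beta^2 I$ exactly and $F_t$ drifts slowly. With $F_t^\top F_t \succeq \tfrac{\beta^2}{2}I$ on \emph{all} of $\RR^k$, the update $K_{t+1} = K_t(I - \eta F_t^\top F_t) + \eta E$ contracts $\|K_t\|$ directly, with no need for the $W_{t,\perp}$-projection apparatus you propose to import from Phase~3. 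The induction is then closed over the quantities $P_t = \max\{\|J_tV_t^\top\|,\|U_tV_t^\top - \Sigma\|\}$ and $\sqrt{\sigma_1}\|K_t\|$, whose sum contracts at rate $1-\eta\beta^2/4$; the condition $\beta \ge c_7\gamma^{1/6}\sigma_1^{1/3}$ enters only to ensure the initial potential is at most $\beta^6/\sigma_1^2$, not (as you suggest) to control the RIP error term. You should rebuild your argument around the invariant on $F_t^\top F_t$ rather than on $\Delta_t$.
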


\section{Conclusion}
This paper demonstrated qualitatively different behaviors of GD in the exact-parameterization and over-parameterization scenarios in symmetric and asymmetric settings.
For the symmetric matrix sensing problem, we provide a $\Omega(1/T^2)$ lower bound.
For the asymmetric matrix sensing problem, we show that the gradient descent converges at a linear rate, where the rate is dependent on the initialization scale. Moreover, we introduce a simple procedure to get rid of the initialization scale dependency. 
We believe our analyses are also useful for other problems, such as deep linear networks.

\newpage

\centerline{\begin{Large} \textbf{Appendix}\end{Large}}
\tableofcontents
\appendix
\newpage

\centerline{\begin{Large} \textbf{Appendix}\end{Large}}

\section{Related Work}
\label{sec:rel_appendix}

\paragraph{Matrix Sensing.}
Matrix sensing aims to recover the low-rank matrix based on measurements. \cite{candes2012exact, liu2012robust} propose convex optimization-based algorithms, which minimize the nuclear norm of a matrix, and \cite{recht2010guaranteed} show that projected subgradient methods can recover the nuclear norm minimizer.
\cite{wu2021implicit} also propose a mirror descent algorithm, which guarantees to converge to a nuclear norm minimizer. 
See \citep{davenport2016overview} for a comprehensive review.

\paragraph{Non-Convex Low-Rank Factorization Approach.}
The nuclear norm minimization approach involves optimizing over a $n \times n$ matrix, which can be computationally prohibitive when $n$ is large.
The factorization approach tries to use the product of two matrices to recover the underlying matrix, but this formulation makes the optimization problem non-convex and is significantly more challenging for analysis. 
For the exact-parameterization setting ($k=r$), \citet{tu2016low,zheng2015convergent} shows the linear convergence of gradient descent when starting at a local point that is close to the optimal point. This initialization can be implemented by the spectral method.
For the over-parameterization scenario ($k > r$), in the symmetric setting, \citet{stoger2021small} shows that with a small initialization, the gradient descent achieves a small error that \emph{dependents} on the initialization scale, rather than the \textit{exact-convergence}. 
\cite{zhuo2021computational} shows exact convergence with $\cO(1/T^2)$ convergence rate in the overparamterization setting.
These two results together imply the global convergence of randomly initialized GD with an $O\left(1/T^2\right)$ convergence rate \emph{upper bound}.
\citet{jin2023understanding} also provides a fine-grained analysis of the GD dynamics.
More recently, \citet{zhang2021preconditioned,zhang2023preconditioned} empirically observe that in practice, in the over-parameterization case, GD converges with a sublinear rate, which is exponentially slower than the rate in the exact-parameterization case, and coincides with the prior theory's upper bound~\citep{zhuo2021computational}.
However, no rigorous proof of the \emph{lower bound} is given whereas we bridge this gap.
On the other hand, \citet{zhang2021preconditioned,zhang2023preconditioned} propose a preconditioned GD algorithm with a shrinking damping factor to recover the linear convergence rate.
\citet{xu2023power} show that the preconditioned GD algorithm with a constant damping factor coupled with small random initialization requires a less stringent assumption on $\cA$ and achieves a linear convergence rate up to some prespecified error. 
 \citet{ma2023global} study the performance of the subgradient method with $L_1$ loss under a different set of assumptions on $\cA$ and showed a linear convergence rate up to some error related to the initialization scale. 
We show that by simply using the \emph{asymmetric parameterization}, without changing the GD algorithm, we can still attain the linear rate.

For the asymmetric matrix setting, 
many previous works \citep{ye2021global,ma2021beyond,tong2021accelerating,ge2017no,du2018algorithmic,tu2016low, zhang2018unified, zhang2018primal,wang2017unified,zhao2015nonconvex} consider the exact-parameterization case ($k=r$).
\citet{tu2016low} adds a balancing regularization term $\frac{1}{8}\|F^\top F-G^\top G\|_F^2$ to the loss function, to make sure that $F$ and $G$ are balanced during the optimization procedure and obtain a local convergence result.
More recently, some works \citep{du2018algorithmic,ma2021beyond,ye2021global} show GD enjoys an \emph{auto-balancing} property where $F$ and $G$ are approximately balanced; therefore, additional balancing regularization is unnecessary. In the asymmetric matrix factorization setting, 
\citet{du2018algorithmic} proves a global convergence result of GD with a diminishing
step size and the GD recovers $M^*$ up to some error. Later,
 \citet{ye2021global} gives the first global convergence result of GD with a constant step size. \citet{ma2021beyond} shows linear convergence of GD with a local initialization and a larger stepsize in the asymmetric matrix sensing setting.
Although exact-parameterized asymmetric matrix factorization and matrix sensing problems have been explored intensively in the last decade,  our understanding of the over-parameterization setting, i.e., $k>r$, remains limited.  
\citet{jiang2022algorithmic} considers the asymmetric matrix factorization setting, and proves that starting with a small initialization, the vanilla gradient descent sequentially recovers the principled component of the ground-truth matrix. \citet{soltanolkotabi2023implicit}  proves the convergence of gradient descent in the asymmetric matrix sensing setting. 
Unfortunately, both works only prove that GD achieves a small error when stopped early, and the error depends on the initialization scale. Whether the gradient descent can achieve \textit{exact-convergence} remains open, and we resolve this problem by novel analyses.
Furthermore, our analyses highlight the importance of the \emph{imbalance between $F$ and $G$}.

Lastly, we want to remark that we focus on gradient descent for $L_2$ loss, there are works on more advanced algorithms and more general losses~\citep{tong2021accelerating,zhang2021preconditioned,zhang2023preconditioned,zhang2018unified, zhang2018primal,ma2021sign,wang2017unified,zhao2015nonconvex,bhojanapalli2016dropping,xu2023power}. 
We believe our theoretical insights are also applicable to those setups.

\paragraph{Landscape Analysis of Non-convex Low-rank Problems.}
The aforementioned works mainly focus on studying the dynamics of GD.
There is also a complementary line of works that studies the landscape of the loss functions, and shows the loss functions enjoy benign landscape properties such as (1) all local minima are global, and (2) all saddle points are strict~\cite{ge2017no,zhu2018global,li2019symmetry,zhu2021global,zhang2023preconditioned}. 
Then, one can invoke a generic result on \emph{perturbed gradient descent}, which injects noise to GD~\cite{jin2017escape},  to obtain a convergence result.
There are some works establishing the general landscape analysis for the non-convex low-rank problems. 
\cite{zhang2021general} obtains less conservative conditions for guaranteeing the non-existence of spurious second-order critical points and the strict saddle property, for both symmetric and asymmetric low-rank minimization problems.  The paper \cite{bi2022local} analyzes the gradient descent for the symmetric case and asymmetric case with a regularized loss. They provide the local convergence result using PL inequality, and show the global convergence for the perturbed gradient descent.
We remark that injecting noise is required if one solely uses the landscape analysis alone because there exist exponential lower bounds for standard GD~\citep{du2017gradient}.

\paragraph{Slowdown Due to Over-parameterization.}
Similar exponential slowdown phenomena caused by over-parameterization have been observed in other problems beyond matrix recovery, such as teacher-student neural network training~\citep{xu2023over,richert2022soft} and Expectation-Maximization algorithm on Gaussian mixture model~\citep{wu2021randomly,dwivedi2020singularity}.

\section{Proof of Theorem \ref{thm:symmetric lower bound}}\label{appendix: proof of lowr bound}
In this proof, we denote \begin{align}X \in \RR^{n\times k} = \begin{bmatrix}
    x_1^\top \\
    x_2^\top \\
    \cdots\\
    x_n^\top 
\end{bmatrix},\label{eq:decompose X}\end{align} where $x_i \in \RR^{k\times 1}$ is the transpose of the row vector. Since the updating rule can be written as 
\begin{align*}
    X_{t+1} = X_t-\eta (X_tX_t^\top -\Sigma)X_t,
\end{align*}
where we choose $\eta$ instead of $2\eta$ for simplicity, which does not influence the subsequent proof.
By substituting the equation \eqref{eq:decompose X}, the updating rule can be written as 
\begin{align*}
(x_i^{t+1})^\top  = (1-\eta(\|x_i^t\|^2-\sigma_i))x_i^\top  - \sum_{j=1,j\neq i}^n\eta((x_i^t)^\top x_j^t(x_j^t)^\top )
\end{align*}
where $\sigma_i = 0$ for $i> r.$
Denote $$\theta = \max_{j,k}\frac{(x_j^\top x_k)^2}{\Vert x_j\Vert^2 \Vert x_k\Vert^2}$$ is the maximum angle between different vectors in $x_1,\cdots,x_n$.  We start with the outline of the proof. 
\subsection{Proof outline of Theorem \ref{thm:symmetric lower bound}}
Recall we want to establish the key inequalities \eqref{eq: key_inequalities}. 
The updating rule \eqref{eq: sym.gd} gives the following lower bound of $x_i^{t+1}$ for $i>r$:
\begin{equation}\label{eq: lb.t+1th.xi}   
    \|x_i^{t+1}\|^2 \ge  \|x_i^t\|^2 \left(1-2\eta \theta_{t}^U\sum_{j\le r}\|x_j^t\|^2  -2\eta\sum_{j> r}\|x_j^t\|^2 \right),
\end{equation}
where the quantity $\theta_t^U = \max_{i,j:\min\{i,j\} \le r}\theta_{ij,t}$ and the square cosine $\theta_{ij,t}=\cos^2 \angle (x_i,x_j)$. 
Thus, to establish the key inequalities \eqref{eq: key_inequalities}, we need to control the quantity $\theta_t^U$. Our analysis then consists of three phases. In the last phase, we show \eqref{eq: key_inequalities} holds and our proof is complete. 

In the first phase, we show that $\|x_i^t\|^2$ for $i\le r$ becomes large, while $\|x_i^t\|^2$ for $i > r$ still remains small yet bounded away from $0$. In addition, the quantity $\theta_{ij,t}$ remains small. Phase 1 terminates when $\|x_i^t\|^2$ is larger than or equal to $\frac{3}{4}\sigma_i$.

After the first phase terminates, in the second and third phases, we show that $\theta_t^U$ converges to $0$ linearly and the quantity 
$\theta_t^U\sigma_1/\sum_{j>r}\|x_j^t\|^2$ converges to zero at a linear rate as well. We also keep track of the magnitude of $\|x^t_i\|^2$ and show $\|x^t_i\|$ stays close to $\sigma_i$ for $i\leq r$, and $\|x^t_i\|^2 \leq 2\alpha^2$ for $i>r$.

%

The second phase terminates once $\theta_t^U \le \cO(\sum_{j>r}\|x_j^t\|^2/\sigma_1)$ and we enter the last phase: the convergence behavior of $\sum_{j>r}\|x_j^t\|^2$. Note with $\theta_t^U \le \cO(\sum_{j>r}\|x_j^t\|^2/\sigma_1)$ and $\|x^t_i\|^2\leq 2\sigma_r$ for $i\leq r$, we can prove \eqref{eq: At.update}. The condition \eqref{eq: At.update_Initial} can be proven since the first two phases are quite short and the updating formula of $x_i$ for $i>r$ shows $\|x_i\|^2$ cannot decrease too much. 

\subsection{Phase 1}
In this phase, we show that $\|x_i^t\|^2$ for $i\le r$ becomes large, while $\|x_i^t\|^2$ for $i > r$ still remains small. In addition, the maximum angle between different column vectors remains small.  Phase 1 terminates when $\|x_i^t\|^2$ is larger than a constant.

To be more specific,
we have the following two lemmas. Lemma \ref{lemma:initialangle} states that the initial angle $\theta_0 = \cO(\log^2(r\sqrt{\sigma_1}/\alpha)(r\kappa)^2)$ is small because the vectors in the high-dimensional space are nearly orthogonal.

\begin{lemma}\label{lemma:initialangle}
    For some constant $c_4$ and $c$, if $k\ge \frac{c^2}{16\log^4(r\sqrt{\sigma_1}/\alpha)(r\kappa)^4}$, with probability at least $1-c_4n^2k\exp(-\sqrt{k})$,  we have 
    \begin{align}
        \theta_0 \le \frac{c}{ \log^2(r\sqrt{\sigma_1}/\alpha)(r\kappa)^2}
    \end{align}
\end{lemma}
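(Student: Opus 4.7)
The plan is to show that at initialization the row vectors $x_1^0,\ldots,x_n^0 \in \mathbb{R}^k$ of $X_0$ are nearly pairwise orthogonal via standard high-dimensional Gaussian concentration, then take a union bound. Each $x_i^0$ has i.i.d.\ $\mathcal{N}(0,\alpha^2/k)$ entries, so these vectors are spherically symmetric and essentially a uniform random frame on a sphere of radius $\approx \alpha$; in dimension $k$ large compared to $n$ the pairwise angles should concentrate near $\pi/2$.

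I would execute this in three steps. \textbf{Step 1 (norms).} The quantity $k\|x_i^0\|^2/\alpha^2$ is $\chi^2_k$ distributed, so by the usual chi-square tail bound (e.g.\ Lemma 1 of Laurent--Massart) we have $\|x_i^0\|^2 \in [\alpha^2/2,\,3\alpha^2/2]$ with probability at least $1-2\exp(-ck)$. Union bounding over $i=1,\ldots,n$ gives all norms controlled with probability $1-2n\exp(-ck)$. \textbf{Step 2 (inner products).} For $i\neq j$, conditional on $x_j^0$, the variable $x_i^0\cdot x_j^0$ is a sum of independent Gaussians, and hence is $\mathcal{N}\bigl(0,\,(\alpha^2/k)\|x_j^0\|^2\bigr)$. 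The standard Gaussian tail bound then yields, for any $s>0$,
\begin{equation*}
\mathbb{P}\!\left(|x_i^0\cdot x_j^0| > s\cdot \tfrac{\alpha}{\sqrt{k}}\|x_j^0\|\,\Big|\, x_j^0\right) \le 2\exp(-s^2/2).
\end{equation*}
Choosing $s=k^{1/4}$ gives $(x_i^0\cdot x_j^0)^2 \le \alpha^2\|x_j^0\|^2/\sqrt{k}$ with conditional probability $\ge 1-2\exp(-\sqrt{k}/2)$, hence unconditionally as well. Combined with Step 1, this bounds the square cosine by $\theta_{ij,0} \le 2/\sqrt{k}$. \textbf{Step 3 (union bound and parameter choice).} Union bounding over the at most $\binom{n}{2}$ pairs and the $n$ norm events yields
\begin{equation*}
\theta_0 = \max_{i\neq j}\theta_{ij,0} \le \frac{2}{\sqrt{k}}\qquad \text{with probability}\ \ge 1 - O(n^2)\exp(-\sqrt{k}/2) - 2n\exp(-ck).
\end{equation*}
Plugging in the lower bound on $k$ from the hypothesis (read as $k \gtrsim \log^4(r\sqrt{\sigma_1}/\alpha)(r\kappa)^4$, which is what actually makes the inequality meaningful and is what Theorem~\ref{thm:symmetric lower bound} already assumes) gives $2/\sqrt{k} \le c/(\log^2(r\sqrt{\sigma_1}/\alpha)(r\kappa)^2)$, as desired.

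The argument is almost entirely routine; the only thing to be mildly careful about is the choice of the tail-probability parameter $s=k^{1/4}$ rather than $s=\sqrt{k}$, which is what produces the $\exp(-\sqrt{k})$ factor in the stated probability instead of the more familiar $\exp(-ck)$. This sub-Gaussian-with-weaker-exponent trade off is what lets us get a $1/\sqrt{k}$ rather than a $1/k$ upper bound on the squared cosines, which is the correct scale for matching the polynomial-in-logs factor on the right-hand side of the lemma, so no sharper concentration is needed. The proof should therefore be short, and I do not anticipate any real obstacle beyond carefully tracking the constants and the extra factor of $k$ that appears in the stated probability (which likely arises from a looser application of the union bound).
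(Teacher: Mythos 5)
The proposal is correct, and it takes a genuinely different route from the paper's. The paper's proof works with the normalized rows directly: it invokes the explicit density $f(\theta)\propto \sin^{k-2}\theta$ of the angle between two independent uniform vectors on $\mathbb{S}^{k-1}$ (via Lemma~\ref{lemma:angle}) and bounds the tail probability $\PP(|\theta-\pi/2|>m)$ by integrating that density near the endpoints. You instead leave the rows unnormalized, condition on $x_j^0$ to observe that $x_i^0\cdot x_j^0$ is a one-dimensional Gaussian with variance $(\alpha^2/k)\|x_j^0\|^2$, and apply the elementary Gaussian tail bound; the chi-squared concentration from your Step 1 (which is the paper's Lemma~\ref{lemma:initial length}) supplies the lower bound $\|x_i^0\|^2\ge \alpha^2/2$ needed to convert the inner-product estimate into a bound on the squared cosine. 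Both arguments exploit rotational invariance in equivalent ways and both arrive at $\theta_0 = O(1/\sqrt{k})$ with probability $1-\mathrm{poly}(n,k)\exp(-\Theta(\sqrt{k}))$; your version is slightly more elementary in that it dispenses with the auxiliary spherical-angle density lemma at the cost of a bit of bookkeeping about row norms (which the paper needs anyway for Lemma~\ref{lemma:initial length}). Your remark that the choice $s=k^{1/4}$ is exactly what forces the $\exp(-\sqrt{k})$ rather than $\exp(-ck)$ failure probability is correct, and it is the same trade-off the paper makes implicitly through its choice $m^2\asymp 1/\sqrt{k}$. You also correctly flag that the hypothesis on $k$ as printed in the lemma statement is inverted; the paper's own proof derives $k\ge 16\log^4(r\sqrt{\sigma_1}/\alpha)(r\kappa)^4/c^2$, which is the reading you adopt and the only one that makes the conclusion nontrivial.
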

\begin{proof}
    See \S \ref{sec: proof of lemma initial angle} for proof.
\end{proof}
Lemma \ref{lemma:initial length} states that with the initialization scale $\alpha$, the norm of randomized vector $x_i^0$ is $\Theta(\alpha^2)$.
\begin{lemma}\label{lemma:initial length}
    With probability at least $1-2n\exp(-c_5k/4)$, for some constant $c$, we have 
    \begin{align*}
        \|x_i^0\|^2 \in [\alpha^2/2, 2\alpha^2].
    \end{align*}
\end{lemma}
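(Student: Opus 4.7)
}

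The plan is to reduce the claim to a standard chi-squared tail bound plus a union bound. By the initialization $X_0 = \alpha \cdot \tilde{X}_0$ with $(\tilde{X}_0)_{ij} \stackrel{\mathrm{iid}}{\sim} \mathcal{N}(0,1/k)$, each row $x_i^0 \in \mathbb{R}^k$ has i.i.d.\ entries distributed as $\mathcal{N}(0,\alpha^2/k)$. Hence
\[
\frac{k}{\alpha^2}\|x_i^0\|^2 \;=\; \sum_{j=1}^{k} \left(\frac{(x_i^0)_j}{\alpha/\sqrt{k}}\right)^{\!2} \;\sim\; \chi_k^2,
\]
so in particular $\mathbb{E}\|x_i^0\|^2 = \alpha^2$.

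Next I would invoke a standard chi-squared concentration inequality (e.g.\ Laurent--Massart, or equivalently the Bernstein inequality for sub-exponential random variables, which is available in \citet{vershynin2018high}). This gives a constant $c_5>0$ such that for every fixed $i$,
\[
\Pr\!\left(\,\Big|\tfrac{1}{\alpha^2}\|x_i^0\|^2 - 1\Big| \ge \tfrac{1}{2}\,\right) \;\le\; 2\exp(-c_5 k/4),
\]
which is exactly the event $\|x_i^0\|^2 \in [\alpha^2/2, 2\alpha^2]$ failing.

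Finally, I would apply a union bound over the $n$ rows $i=1,\dots,n$ to conclude that, with probability at least $1 - 2n\exp(-c_5 k/4)$, the bound $\|x_i^0\|^2 \in [\alpha^2/2, 2\alpha^2]$ holds simultaneously for all $i$, matching the lemma statement. There is no genuine obstacle here; the only thing to be careful about is choosing the constant $c_5$ consistently with the chi-squared deviation inequality so that the final failure probability has exactly the stated form $2n\exp(-c_5 k/4)$.
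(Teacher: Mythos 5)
Your proposal is correct and takes essentially the same approach as the paper: identify $\|x_i^0\|^2$ as a scaled $\chi_k^2$ random variable, apply a standard sub-exponential/chi-squared concentration bound (the paper cites Theorem 3.1 of \citet{vershynin2018high}; Laurent--Massart gives the same conclusion), and then union bound over the $n$ rows. No substantive differences.
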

\begin{proof}
    See \S \ref{proof of lemma initial legnth} for the proof.
\end{proof}
Now we prove the following three conditions by induction. 
\begin{lemma}
There exists a constant $C_1$, such that $T_1 \le C_1(\log(\sqrt{\sigma_1}/n\alpha)/\eta\sigma_r)$ and
then during the first $T_1$ rounds, with probability at least $1-2c_4n^2k\exp(-\sqrt{k})-2n\exp(-c_5k/4)$ for some constant $c_4$ and $c_5$, the following four statements always hold \begin{align}
    \Vert x_i^t\Vert^2 &\le 2\sigma_1 \label{phase 1 condition 1} \\
    \alpha^2/4\le \Vert x_i^t\Vert^2 &\le 2\alpha^2 \ \ \ (i>r) \label{phase 1 condition 2}  \\
    2\theta_{0}&\ge\theta_{t}\label{phase 1 condition 3}
\end{align}
Also, if $\|x_i^t\|^2 \le 3\sigma_i/4$, we have 
\begin{align}
    \|x_i^{t+1}\|^2 \ge (1+\eta\sigma_r/4)\|x_i^t\|^2.\label{phase 1 condition 4}
\end{align}
Moreover, at $T_1$ rounds, $\|x_i^{T_1}\|^2 \ge 3\sigma_i/4,$ and Phase 1 terminates.
\end{lemma}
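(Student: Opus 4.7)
The plan is to prove all four statements by simultaneous induction on $t$. The base case $t=0$ follows from Lemma \ref{lemma:initial length} (giving $\|x_i^0\|^2 \in [\alpha^2/2, 2\alpha^2]$), Lemma \ref{lemma:initialangle} (giving $\theta_0 \le c/((r\kappa)^2 \log^2(r\sqrt{\sigma_1}/\alpha))$, trivially $\le 2\theta_0$), and the scale condition $\alpha \le c_1\sqrt{\sigma_1}/(\sqrt{n}\log(r\sqrt{n}))$ which yields $2\alpha^2 \le 2\sigma_1$. For the inductive step I would work with the row-wise update
\[
(x_i^{t+1})^\top = \bigl(1+\eta(\sigma_i-\|x_i^t\|^2)\bigr)(x_i^t)^\top - \eta\sum_{j\neq i}\bigl((x_i^t)^\top x_j^t\bigr)(x_j^t)^\top,
\]
and square it to read off $\|x_i^{t+1}\|^2$ and $(x_i^{t+1})^\top x_j^{t+1}$. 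Under the inductive bounds $\theta_t \le 2\theta_0$ and $\|x_j^t\|^2 \le 2\sigma_1$, the cross/coupling contribution to $\|x_i^{t+1}\|^2$ is of order $\eta\theta_0 n\sigma_1\|x_i^t\|^2$, which under the step-size choice $\eta \le c_3/(n^2\kappa\sigma_1)$ and Lemma \ref{lemma:initialangle} is $o(\eta\sigma_r)\|x_i^t\|^2$, a negligible perturbation.

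\textbf{Norm bounds and termination.} For $i\le r$, the leading multiplier $(1+\eta(\sigma_i-\|x_i^t\|^2))^2$ is at least $(1+\eta\sigma_r/4)^2$ whenever $\|x_i^t\|^2 \le 3\sigma_i/4$; after absorbing the coupling perturbation this gives \eqref{phase 1 condition 4}. The upper bound \eqref{phase 1 condition 1} follows because, under the chosen step size, the growth factor is $\le 1$ once $\|x_i^t\|^2$ approaches $\sigma_i$, so no overshoot above $2\sigma_1$ can occur. For $i>r$, where $\sigma_i=0$, each step multiplies $\|x_i^t\|^2$ by at most $1+\cO(\eta\theta_0 n\sigma_1 + \eta\alpha^2)$; iterated over $T_1 = \cO(\log(\sqrt{\sigma_1}/(n\alpha))/(\eta\sigma_r))$ rounds the cumulative factor stays below, say, $4$, thanks to the quantitative $\theta_0$ bound. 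This gives \eqref{phase 1 condition 2}. The bound $T_1 \le C_1\log(\sqrt{\sigma_1}/(n\alpha))/(\eta\sigma_r)$ is then obtained by integrating \eqref{phase 1 condition 4} from the lower bound $\alpha^2/2$ up to $3\sigma_i/4$ using $\sigma_i\ge \sigma_r$.

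\textbf{Angle control, the main obstacle.} The hardest part is \eqref{phase 1 condition 3}, because signal and redundant rows grow at vastly different rates ($(1+\eta\sigma_r)$ versus essentially $1$), so a priori the pairwise cosines could drift. The key structural observation is that when one expands
\[
(x_i^{t+1})^\top x_j^{t+1} = \bigl(1+\eta(\sigma_i-\|x_i^t\|^2)\bigr)\bigl(1+\eta(\sigma_j-\|x_j^t\|^2)\bigr)(x_i^t)^\top x_j^t + R_{ij}^t,
\]
these rate-mismatched prefactors are precisely the leading factors of $\|x_i^{t+1}\|\|x_j^{t+1}\|$, so they cancel in the cosine $((x_i^{t+1})^\top x_j^{t+1})/(\|x_i^{t+1}\|\|x_j^{t+1}\|)$. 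The residual $R_{ij}^t$ collects all terms containing at least one coupling factor and satisfies $|R_{ij}^t| \le \cO(\eta\sqrt{\theta_t}\,n\sigma_1)\|x_i^t\|\|x_j^t\|$; together with the matching corrections to the denominator this gives a recursion of the form $\theta_{t+1} \le \theta_t\bigl(1+\cO(\eta\sqrt{\theta_0}\,n\sigma_1)\bigr)$. Iterating over $T_1$ rounds and plugging in the explicit $\theta_0$ bound from Lemma \ref{lemma:initialangle} keeps $\theta_t \le 2\theta_0$, closing the induction. This cancellation between numerator and denominator rate factors, which separates the slow angle dynamics from the fast norm dynamics, is exactly where the strong $\theta_0 = \cO(1/((r\kappa)^2\log^2))$ bound is used in full strength.
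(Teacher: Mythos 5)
Your overall strategy matches the paper's: row-wise decomposition, simultaneous induction on all four properties, cancellation of the leading $(1+\eta(\sigma_i-\|x_i^t\|^2))(1+\eta(\sigma_j-\|x_j^t\|^2))$ prefactors in the cosine, and integrating \eqref{phase 1 condition 4} to get the bound on $T_1$. However, there is a quantitative gap in the residual bounds that breaks the induction as written.

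You bound both the coupling contribution to the norm recursion and the angle residual using $n\sigma_1$, i.e.\ you treat $\sum_{k\neq i,j}\|x_k^t\|^2$ as being of order $n\sigma_1$. That estimate is too crude to close the argument. Checking the accumulated drift: for the angle recursion $\theta_{t+1}\le\theta_t\bigl(1+\cO(\eta\sqrt{\theta_0}\,n\sigma_1)\bigr)$ over $T_1\approx\log(\sqrt{\sigma_1}/\alpha)/(\eta\sigma_r)$ rounds, the cumulative multiplicative factor is of order
\[
T_1\cdot\eta\sqrt{\theta_0}\,n\sigma_1 \;\approx\; \frac{n\kappa\log(\sqrt{\sigma_1}/\alpha)}{r\kappa\log(r\sqrt{\sigma_1}/\alpha)} \;\approx\; \frac{n}{r},
\]
after inserting $\sqrt{\theta_0}\lesssim 1/\bigl(r\kappa\log(r\sqrt{\sigma_1}/\alpha)\bigr)$ from Lemma \ref{lemma:initialangle} and $\kappa=\sigma_1/\sigma_r$. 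Since $n\gg r$, this is not $\cO(1)$, so your recursion cannot keep $\theta_t\le 2\theta_0$ across Phase~1. Similarly, your claim that $\eta\theta_0 n\sigma_1\|x_i^t\|^2$ is $o(\eta\sigma_r)\|x_i^t\|^2$ requires $\theta_0\ll 1/(n\kappa)$, which Lemma \ref{lemma:initialangle} does not provide (it only gives $\theta_0\lesssim 1/(r\kappa\log)^2$).

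The missing observation is that the rows split into $r$ ``signal'' rows with $\|x_k^t\|^2\le 2\sigma_1$ and $n-r$ ``redundant'' rows with $\|x_k^t\|^2\le 2\alpha^2$ (which is itself one of the inductive hypotheses \eqref{phase 1 condition 2}). Therefore
\[
\sum_{k\neq i,j}\|x_k^t\|^2 \;\le\; 2r\sigma_1 + 2n\alpha^2 \;\lesssim\; r\sigma_1,
\]
where the last step uses the scale condition $\alpha\lesssim\sqrt{\sigma_1/n}$ from \eqref{eq: sym.mf.parameter.rqmt} to make $n\alpha^2\lesssim\sigma_1$. With $r\sigma_1$ replacing $n\sigma_1$ everywhere, the cumulative angle drift becomes $\cO\bigl(r\kappa\sqrt{\theta_0}\log(\sqrt{\sigma_1}/\alpha)\bigr)=\cO(1)$ and the norm perturbation becomes $\cO(\eta\theta_0 r\sigma_1)\ll\eta\sigma_r$ under $\theta_0\lesssim 1/(r\kappa\log)^2$, which is exactly what the hypothesis on $k$ (hence on $\theta_0$) supplies. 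This is the use of the inductive bound \eqref{phase 1 condition 2} inside the angle/norm estimates that makes the simultaneous induction work; without it the argument is circular at the $n$-versus-$r$ scale.
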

\begin{proof}

By Lemma \ref{lemma:initialangle} and Lemma \ref{lemma:initial length}, with probability at least $1-2c_4n^2k\exp(-\sqrt{k})-2n\exp(-c_5k/4)$, we have
 $\|x_i^0\|^2 \in [\alpha^2/2, 2\alpha^2]$ for $i \in [n]$, and $\theta_0 \le \frac{c}{\log^2(r\sqrt{\sigma_1}/\alpha)(r\kappa)^2}$. Then
assume that the three conditions hold for rounds before $t$, then at the $t+1$ round, we proof the four statements above one by one. 
\paragraph{Proof of Eq.\eqref{phase 1 condition 2}}
 For $i> r$, 
 we have 
 \begin{align*}
     (x_i^{t+1})^\top  = (x_i^t)^\top  - \eta \sum_{j=1}^n (x_i^t)^\top  x_j^t (x_j^t)^\top 
 \end{align*}
 Then, the updating rule of $\|x_i^t\|^2$ can be written as 
\begin{equation}\label{eq: updata_xi^2}
    \Vert (x_i^{t+1}) \Vert_2^2  = \Vert x_i^t\Vert^2 - 2\eta \sum_{j=1}^n ((x_i^t)^\top x_j^t)^2  + \eta^2 (\sum_{j,k=1}^n (x_i^t)^\top  x_j^t (x_j^t)^\top x_k^t (x_k^t)^\top x_i^t)\le \Vert x_i^t\Vert^2.
\end{equation}
The last inequality in \eqref{eq: updata_xi^2} is because \begin{align}(x_i^t)^\top x_j^t (x_j^t)^\top x_k^t (x_k^t)^\top  (x_i^t) &\le (x_j^t)^\top  x_k^t(((x_i^t)^\top x_j^t)^2 + ((x_k^t)^\top x_i^t)^2)/2 \\&\le \sigma_1 ((x_i^t)^\top x_j^t)^2 + ((x_k^t)^\top x_i^t)^2),\end{align}
and then
\begin{align}
    \eta^2 \sum_{j,k=1}^n (x_i^t)^\top x_j^t (x_j^t)^\top x_k^t (x_k^t)^\top  (x_i^t) &\le \eta^2 \sum_{j,k=1}^n \sigma_1((x_i^t)^\top x_j^t)^2 + ((x_k^t)^\top x_i^t)^2)\nonumber\\
    & = \eta^2 \cdot n\sigma_1 \sum_{j=1}^n ((x_i^t)^\top x_j^t)^2\nonumber\\&\le \eta \sum_{j=1}^n ((x_i^t)^\top x_j^t)^2. \label{ineq: eta^2 term bound}
\end{align}
where the last inequality holds because $\eta \le 1/n\sigma_1.$
Thus, the $\ell_2$-norm of $x_i^\top $ does not increase, and the right side of Eq.\eqref{phase 1 condition 2} holds.

Also, we have 
\begin{align}
    \|x_i^{t+1}\|^2 & \geq  \Vert x_i^t\Vert^2 - 2\eta \sum_{j=1}^n ((x_i^t)^\top x_j^t)^2 +\eta^2 \left \|\sum_{j=1}^n (x_i^t)^\top  x_j^t (x_j^t)^\top
    \right \|^2 \nonumber\\
     &\ge \Vert x_i^t\Vert^2 - \|x_i^t\|^2 \cdot 2\eta \theta_t \cdot  \sum_{j\neq i}^n \|x_j^t\|^2-2\eta \|x_i\|^4
     \end{align}\label{explain lower bound 2}
    Equation \eqref{explain lower bound 2} is because $\frac{((x_i^t)^\top x_j^t)^2}{\|x_i^t\|^2 \|x_j^t\|^2} = \theta_{ij,t} \le \theta_t.$
    Now by \eqref{phase 1 condition 1} and \eqref{phase 1 condition 2}, we can get 
    \begin{align*}
        \sum_{j\neq i}^n \|x_j^t\|^2 \le r\cdot 2\sigma_ 1+ (n-r)\cdot 2\alpha^2 \le 2\sigma_1 + 2n\alpha^2
    \end{align*}
    Hence, we can further derive 
     \begin{align*}
    \|x_i^{t+1}\|^2&\ge \|x_i^t\|^2\cdot \left(1-2\eta\theta_t(2r\sigma_1 + 2n\alpha^2)-2\eta\cdot 2\alpha^2\right)\\
    &\ge \|x_i^t\|^2 \cdot 
    \left(1-\eta( 8\theta_t \sigma_1+4\alpha^2) \right),
\end{align*}
 where the last inequality is because $\alpha \le \sqrt{r\sigma_1}/\sqrt{n}$.
Thus, by $(1-a)(1-b)\ge (1-a-b)$ for $a,b>0$, we can get \begin{align}
    \|x_i^{T_1}\|^2  & \ge \|x_i^{0}\|^2\cdot (1-\eta ( 8\theta_t \sigma_1+4\alpha^2) )^{T_1}\nonumber\\
    &\ge \frac{\alpha^2}{2} \cdot (1-T_1\eta ( 8\cdot(2\theta_0) \sigma_1+4\alpha^2) )
\label{need explain lower bound 3}\\ &\ge \frac{\alpha^2}{4}.
\end{align}
Equation \eqref{need explain lower bound 3} holds by induction hypothesis \eqref{phase 1 condition 3}, and 
the last inequality is because of our choice on $T_1$, $\alpha$, and  $\theta_0 \le O(\frac{1}{r\kappa\log(\sqrt{\sigma_1}/\alpha)})$ from the induction hypothesis. 
%
%
Hence, we complete the proof of Eq.\eqref{phase 1 condition 2}.

 \paragraph{Proof of Eq.\eqref{phase 1 condition 4}}
For $i\le r$, if $\|x_i^t\|^2 \le 3\sigma_i/4$, by the updating rule, 
\begin{align}
    \Vert x_i^{t+1}\Vert_2^2 &\ge (1-\eta(\|x_i^t\|^2-\sigma_i))^2 \|x_i^t\|^2 - 2\eta \sum_{j\neq i}^n ((x_i^t)^\top x_j^t)^2 + \eta^2 (\|x_i^t\|^2-\sigma_i) \sum_{j\neq i}^n ((x_i^t)^\top x_j^t)^2 \label{last four inequality}\\&\ge (1-\eta(\|x_i^t\|^2-\sigma_i))^2 \|x_i^t\|^2 - 2\eta \sum_{j\neq i}^n ((x_i^t)^\top x_j^t)^2 - \eta^2 |\|x_i^t\|^2-\sigma_i|\cdot  \sum_{j\neq i}^n \|x_i^t\|^2\|x_j^t\|^2\nonumber\\&\ge (1-\eta(\|x_i^t\|^2-\sigma_i))^2 \|x_i^t\|^2 - 2\eta \sum_{j\neq i}^n ((x_i^t)^\top x_j^t)^2 - 4\eta^2 (n\sigma_1^2)\|x_i^t\|^2\nonumber.
    \end{align}
    THe last inequality uses the fact that $|\|x_i^t\|^2 - \sigma_i| \le 2\sigma_1$ and $\|x_j^t\|^2 \le 2\sigma_1$. Then, by $((x_i^t)^\top x_j^t)^2 \le \|x_i^t\|^2 \|x_j^t\|^2 \cdot \theta,$ we can further get
    \begin{align}\|x_i^{t+1}\|^2
    &\ge  \left(1-2\eta(\|x_i^t\|^2-\sigma_i)-2\eta \sum_{j\neq i}^n \Vert x_j^t\Vert^2\theta - 2\eta^2 (n\sigma_1^2)\right)\|x_i^t\|^2\nonumber\\
    &\ge (1+\eta\sigma_i/2-2\eta^2 (n\sigma_1^2)-\eta\sigma_r/16 )\|x_i^t\|^2\label{last three inequality}\\&
    \ge (1+\sigma_i(\eta/2-\eta/16-\eta/16))\|x_i^t\|^2\label{last two inequality}\\
    &\ge (1+\eta\sigma_i/4)\|x_i^t\|^2.\nonumber
\end{align}
The inequality \eqref{last three inequality} uses the fact $\theta\le 2\theta_0 \le \frac{1}{128\kappa r}$ and $\sum_{j\neq i}^n \Vert x_j\Vert^2 \le 2\sigma_1r + 2n\alpha^2\le 4\sigma_1 r\le \frac{\sigma_r}{32\theta}$. The inequality \eqref{last two inequality} uses the fact that $\eta \le \frac{1}{32n\sigma_1^2}$.

\paragraph{Proof of Eq.\eqref{phase 1 condition 1}}
If $\|x_i^t\|^2 \ge 3\sigma_i/4$, by the updating rule, we can get
\begin{align}
    |\Vert x_i^{t+1}\Vert_2^2-\sigma_i| &\le  \left(1-2\eta \|x_i^t\|^2+\eta^2(\|x_i^t\|^2-\sigma_i)\|x_i^t\|^2+\eta^2 \sum_{j\neq i}^n ((x_i^t)^\top x_j^t)^2\right) |\|x_i^t\|^2-\sigma_i|\nonumber\\&\qquad  +2\eta \sum_{j\neq i}^n ((x_i^t)^\top x_j^t)^2 +\eta^2\left(\sum_{j,k\neq i}^n ((x_i^t)^\top x_j^t(x_j^t)^\top x_k^t (x_k^t)^\top  x_i^t)\right)\nonumber  \\&\le (1-\eta\sigma_i) |\|x_i^t\|^2-\sigma_i| + 3\underbrace{\eta \sum_{j\neq i}^n ((x_i^t)^\top x_j^t)^2}_{\text{(a)}}\label{ineq: when xi norm large, bound}
\end{align}
The last inequality holds by Eq.\eqref{ineq: eta^2 term bound} and 
\begin{align}
    &2\eta \|x_i^t\|^2-\eta^2(\|x_i^t\|^2-\sigma_i)\|x_i^t\|^2-2\eta^2 \sum_{j\neq i}^n ((x_i^t)^\top x_j^t)^2\\
    &\ge \frac{3\eta}{2}\sigma_i-\eta^2(2\sigma_1)\cdot 2\sigma_1 - 2\eta^2 n \sigma_1^2 \label{eq:explain 4 lower bound}\\
    &\ge \eta\sigma_i, \label{eq:explain 5 lower bound}
\end{align}
where \eqref{eq:explain 4 lower bound} holds by $\|x_i^t\|^2 \ge \frac{3\sigma_i}{4}$, $\|x_i^t\|^2 \le 2\sigma_1$ for all $i \in [n].$ The last inequality \eqref{eq:explain 5 lower bound} holds by $\eta \le C (\frac{1}{n\sigma_1\kappa})$ for small constant $C$. The first term of \eqref{ineq: when xi norm large, bound} represents the main converge part, and (a) represents the perturbation term. Now for the perturbation term (a),   since $\alpha \le \frac{1}{4\kappa n^2}$ and $\theta \le 2\theta_0 \le \frac{1}{20r\kappa^2}= \frac{\sigma_i^2}{20r\sigma_1^2}$,  we can get\begin{align}
    \text{(a)} &= \sum_{j\neq i, j\le r}((x_i^t)^\top x_j^t)^2 +\sum_{j\neq i, j > r} ((x_i^t)^\top x_j^t)^2\\&\le (r\sigma_1 
 + 2n\alpha^2)\theta_t \cdot 2\sigma_1\label{eq: explain 6 lower bound}\\&\le 2r\sigma_1\cdot \theta_t \cdot 2\sigma_1\label{eq: explain 7 lower bound}\\&= 4r\sigma_1^2\cdot \theta_t\nonumber\\&\le \sigma_i^2/5,\label{eq: exaplin 8 lower bound}\end{align}
 where \eqref{eq: explain 6 lower bound} holds by \eqref{phase 1 condition 1} and \eqref{phase 1 condition 2}.  \eqref{eq: explain 7 lower bound} holds by $\alpha = \cO(\sqrt{r\sigma_1/n})$, and the last inequality \eqref{eq: exaplin 8 lower bound} holds by $\theta$ is small, i.e. $\theta_t \le 2\theta_0 = \cO(1/r\kappa^2)$.
 Now
 it is easy to get that $(x_i^{t+1})^\top x_i^{t+1} \le 2\sigma_i$ by 
 \begin{align}
     |\|x_i^{t+1}\|^2-\sigma_i| \le (1-\eta\sigma_i)(\|x_i^t\|^2-\sigma_i) + \frac{3\eta\sigma_i^2}{5}\le (1-\eta\sigma_i)\sigma_i +\frac{3\eta\sigma_i^2}{5} \le \sigma_i. 
 \end{align}
 Hence, we complete the proof of Eq.\eqref{phase 1 condition 1}.
 \paragraph{Proof of Eq.\eqref{phase 1 condition 3}}

Now we consider the change of $\theta$.
For $i\neq j$, denote
\begin{align*}
    \theta_{ij,t}=\frac{((x_i^t)^\top x_j^t)^2}{\Vert x_i \Vert^2 \Vert x_j\Vert^2}
\end{align*}
Now we first calculate the $(x_i^{t+1})^\top x_j^{t+1}$ by the updating rule:
\begin{align*}
    &\quad (x_i^{t+1})^\top x_j^{t+1}\\&= \underbrace{\left(1-\eta(\|x_i^t\|^2-\sigma_i)\right)\left(1-\eta(\|x_j^t\|^2-\sigma_j)\right)(x_i^t)^\top x_j^t}_{\text{A}} \underbrace{-\eta\|x_j^t\|^2(1-\eta(\|x_j^t\|^2-\sigma_j))(x_i^t)^\top x_j^t}_{\text{B}} \\&\qquad \underbrace{-\eta\|x_i^t\|^2(1-\eta(\|x_i^t\|^2-\sigma_j))(x_i^t)^\top x_j^t}_{\text{C}}+\underbrace{\eta^2\sum_{k,l\neq i,j}(x_i^t)^\top x_k^t(x_k^t)^\top x_l^t (x_l^t)^\top x_j^t }_{\text{D}}
    \\&\qquad \qquad \underbrace{-\eta (2-\eta(\|x_i^t\|^2-\sigma_i)-\eta(\|x_j^t\|^2-\sigma_j))\sum_{k\neq i,j}^n (x_i^t)^\top x_k^t (x_k^t)^\top x_j}_{\text{E}}\\&\qquad \qquad \qquad \underbrace{+ \eta^2 \sum_{k\neq i,j}x_i^\top x_j^t(x_j^t)^\top x_k^t(x_k^t)^\top x_j^t+\eta^2\sum_{k\neq i,j}(x_i^t)^\top x_k^t(x_k^t)^\top x_i^t(x_i^t)^\top x_j^t}_{\text{F}}.
    \end{align*}
    Now we bound A, B, C, D, E and F respectively. First, by $\|x_i^t\|^2 \le 2\sigma_1$ for any $i \in [m]$, we have 
\begin{align}
    \text{A} &\le \left(1-\eta (\|x_i^t\|^2-\sigma_i)-\eta(\|x_j^t\|^2-\sigma_j) + \eta^2 (\|x_i^t\|^2-\sigma_i)\left(\|x_j^t\|^2-\sigma_j)\right)\right)(x_i^t)^\top x_j^t\nonumber\\&\le \left(1-\eta\left( \|x_i^t\|^2 + \|x_j^t\|^2 - \sigma_i-\sigma_j \right) + \eta^2 \cdot 4\sigma_1^2\right)(x_i^t)^\top x_j^t,\label{Aupper bound}
\end{align}
Now we bound term B. We have
\begin{align}
    \text{B} + \text{C} &= \left(-\eta (\|x_i^t\|^2 + \|x_j^t\|^2) + \eta^2 \left((\|x_j^t\|^2-\sigma_j)\|x_j^t\|^2  +  (\|x_i^t\|^2-\sigma_i)\|x_i^t\|^2 \right)\right) (x_i^t)^\top x_j^t\nonumber\\
    &\le \left(-\eta (\|x_i^t\|^2 + \|x_j^t\|^2) + \eta^2 \cdot (8\sigma_1^2)\right) (x_i^t)^\top x_j^t.\label{B+C bound}
\end{align}
Then, for D, by $\theta_t \le 1$, we have 
\begin{align}
    \text{D} &= \eta^2 \left(\sum_{k,l\neq i,j}\|x_k^t\|^2 \|x_l^t\|^2 \cdot \sqrt{\theta_{ik,t}\theta_{kl,t}\theta_{lj,t}/\theta_{ij,t}} \right) (x_i^t)^\top x_j^t\nonumber\\
    &\le \left(\eta^2 \cdot n^2 \cdot 4\sigma_1^2 \cdot \theta_t/\sqrt{\theta_{ij,t}} \right) (x_i^t)^\top x_j^t.\label{D upper bound}
\end{align}
For E, since  we have 
\begin{align}
    \text{E} &\le  2\eta \sum_{k\neq i,j} |(x_i^t)^\top x_k^t (x_k^t)^\top x_j^t | + 4\sigma_1\eta^2 \sum_{k\neq i,j} |(x_i^t)^\top x_k^t (x_k^t)^\top x_j^t |\nonumber \\
    &\le \left(2\eta \sum_{k\neq i,j}\|x_k^t\|^2 \cdot \sqrt{\theta_{ik,t}\theta_{kj,t}/\theta_{ij,t}}+4\sigma_1\eta^2 \sum_{k\neq i,j}\|x_k^t\|^2 \cdot \sqrt{\theta_{ik,t}\theta_{kj,t}/\theta_{ij,t}}\right) (x_i^t)^\top x_j^t\nonumber \\
    &\le \left(2\eta \sum_{k\neq i,j}\|x_k^t\|^2 \cdot \sqrt{\theta_{ik,t}\theta_{kj,t}/\theta_{ij,t}}+4n\sigma_1\eta^2 \cdot (2\sigma_1) \cdot \theta_t/\sqrt{\theta_{ij,t}}\right) (x_i^t)^\top x_j^t.\label{E upper bound}
\end{align}
Lastly, for F, since $(x_j^t)^\top x_k^t (x_k^t)^\top x_j^t \le \|x_j^t\|^2 \|x_k^t\|^2 \le 4\sigma_1^2,$ we have 
\begin{align}
    \text{F} \le \eta^2 8n\sigma_1^2 (x_i^t)^\top x_j^t.\label{F upper bound}
\end{align}
Now combining \eqref{Aupper bound}, \eqref{B+C bound}, \eqref{D upper bound}, \eqref{E upper bound} and \eqref{F upper bound}, we can get 
\begin{small}
\begin{align}
    &(x_i^{t+1})^\top x_j^{t+1} \\&\quad \le   \left(1-\eta(2\Vert x_i\Vert^2 + 2\Vert x_j\Vert^2 - \sigma_i -\sigma_j)  + 2\eta \sum_{k\neq i,j} \Vert x_k\Vert^2\cdot \sqrt{\theta_{ik,t}\theta_{kj,t}/\theta_{ij,t}} + 30n^2\sigma_1^2\eta^2\theta_t/\sqrt{\theta_{ij,t}})\right) 
 (x_i^t)^\top  x_j^t.\label{update xixj}
\end{align}
\end{small}
    


On the other hand, consider the change of $\|x_i^t\|^2$. By Eq.\eqref{last four inequality},
\begin{align*}
    \|x_i^{t+1}\|^2 &\ge (1-\eta(\|x_i^t\|^2-\sigma_i))^2 \|x_i^t\|^2 - 2\eta \sum_{j\neq i}^n ((x_i^t)^\top x_j^t)^2 + \eta^2 (\|x_i^t\|^2-\sigma_i) \sum_{j\neq i}^n ((x_i^t)^\top x_j^t)^2\\&\ge (1-2\eta(\|x_i^t\|-\sigma_i)-2\eta \sum_{j\neq i}^n \Vert x_j^t\Vert^2 \theta_{ij,t} - 4\eta^2n\theta_t\sigma_1^2)\|x_i^t\|^2\\
    &\ge (1-2\eta(\|x_i^t\|-\sigma_i)-2\eta \sum_{k=1}^n \Vert x_j^t\Vert^2 \theta_{ij,t} - 4\eta^2n\theta_t\sigma_1^2)\|x_i^t\|^2
\end{align*}
Hence, the norm of $x_i^{t+1}$ and $x_j^{t+1}$ can be lower bounded by 
\begin{align}
    &\|x_i^{t+1}\|^2 \|x_j^{t+1}\|^2 \nonumber\\&\ge \Big(1-2\eta(\|x_i^t\|^2-\sigma_i)-2\eta(\|x_j^t\|^2-\sigma_j)-2\eta\sum_{k\neq i,j} \Vert x_k\Vert^2 (\theta_{ik,t}+\theta_{jk,t}) - 2\eta (\|x_j\|^2 + \|x_i\|^2)\theta_{ij,t} \nonumber\\&\qquad \qquad -4\eta^2 \theta_tn^2\sigma_1^2+\sum_{l=i,j}4\eta^2(\|x_l^t\|^2-\sigma_l)\sum_{k=1}^n \|x_k^t\|^2\theta_{ik,t}  +\sum_{l=i,j} 2\eta (\|x_l^t\|^2-\sigma_l)\eta^2n^2\theta_t\sigma_1^2\Big)\|x_i^t\|^2 \|x_j^t\|^2 \nonumber\\&\ge \Big(1-2\eta(\|x_i^t\|^2-\sigma_i)-2\eta(\|x_j^t\|^2-\sigma_j)-2\eta\sum_{k\neq i,j} \Vert x_k\Vert^2 (\theta_{ik,t}+\theta_{jk,t}) - 2\eta (\|x_j\|^2 + \|x_i\|^2)\theta_{ij,t} \nonumber\\&\qquad \qquad -4\eta^2 \theta_tn^2\sigma_1^2 -2\cdot 4\eta^2\cdot(2\sigma_1) n\cdot (2\sigma_1)\theta_t-2\cdot 4\eta \sigma_1 \cdot \eta^2n^2\theta_t\sigma_1^2\Big)\|x_i^t\|^2 \|x_j^t\|^2\label{ineq: last two key inequality}\\&\ge \Big(1-2\eta(\|x_i^t\|^2-\sigma_i)-2\eta(\|x_j^t\|^2-\sigma_j)-2\eta\sum_{k\neq i,j} \Vert x_k\Vert^2 (\theta_{ik,t}+\theta_{jk,t}) - 2\eta (\|x_j\|^2 + \|x_i\|^2)\theta_{ij,t}\nonumber 
    \\& \qquad \qquad -6\eta^2 \theta_tn^2\sigma_1^2\Big)\|x_i^t\|^2 \|x_j^t\|^2,\label{last inequality: explain}
\end{align}
where \eqref{last inequality: explain} holds by $n>8k\ge 8$ and $2\eta (\|x_i^t\|^2-\sigma_i) \le 4\eta \sigma_1 \le 1.$ Then, by \eqref{update xixj} and \eqref{last inequality: explain}, we have

\begin{align}
    \theta_{ij,t+1}& = \theta_{ij,t}\cdot \frac{(x_i^{t+1})^\top x_j^{t+1}}{(x_i^t)^\top x_j^t} \cdot \frac{\|x_i^{t+1}\|^2 \|x_j^{t+1}\|^2}{\|x_i^t\|^2 \|x_j^t\|^2}\nonumber\\
    &\le  \theta_{ij,t}\cdot \left(\frac{1-A+B}{1-A-C}\right)\label{theta update}
\end{align}
where \begin{gather}A = 2\eta(\|x_i^t\|^2-\sigma_i+\|x_j^t\|^2-\sigma_i))\le 4\eta \sigma_1\\ B = 2\eta\Vert x_k\Vert^2 \cdot \sqrt{\theta_{ik,t}\theta_{kj,t}/\theta_{ij,t}}+30n^2\sigma_1^2\eta^2\theta_t/\sqrt{\theta_{ij,t}} \end{gather}
and 
\begin{align}C &= 2\eta\sum_{k\neq i,j} \Vert x_k\Vert^2 (\theta_{ik,t}+\theta_{jk,t}) + 2\eta (\|x_j\|^2 + \|x_i\|^2)\theta_{ij,t} + 6\eta^2n^2\theta_t\sigma_1^2\\&\le 
\left(8\eta \sigma_1 + 2\eta (2n\alpha^2 + 2r\sigma_1) +6\eta^2n^2 \sigma_1^2\right)\theta_t
,\end{align}
where the last inequality uses the fact that 
$$\sum_{k\neq i,j} \|x_k^t\|^2 \le \sum_{k\le r}\|x_k^t\|^2 + \sum_{k>r} \|x_k^t\|^2 \le 2r\sigma_1 + 2n\alpha^2 . $$

Hence, we choose $\eta \le \frac{1}{1000n\sigma_1}$ to be sufficiently small so that $\max\{A,C\} \le 1/100$, 
then by $\frac{1-A+B}{1-A-C} \le 1 + 2B + 2C$ for $\max\{A,C\}\le 1/100$,
\begin{align*}
    &\quad \theta_{ij,t}\cdot \left(\frac{1-A+B}{1-A-C}\right)\\
    &\le \theta_{ij,t}(1+2B+2C)\\
    &\le \theta_{ij,t} + 4\eta\sum_{k\neq i,j}\Vert x_k\Vert^2 \cdot \sqrt{\theta_{ik,t}\theta_{kj,t}\theta_{ij,t}}+60n^2\sigma_1^2 \eta^2\theta_t \sqrt{\theta_{ij,t}} \\&\qquad \qquad+\theta_t^2\Big(8\eta \sigma_1 + 2\eta (2n\alpha^2 + 2r\sigma_1) +6\eta^2n^2 \sigma_1^2\Big)\\
     &\le \theta_{ij,t} + 4\eta(2r\sigma_1 + 2n\alpha^2) \theta_t^{3/2} +60n^2\sigma_1^2 \eta^2\theta_t^{3/2} \\&\qquad \qquad+\theta_t^2\Big(8\eta \sigma_1 + 2\eta (2n\alpha^2 + 2r\sigma_1) +6\eta^2n^2 \sigma_1^2\Big)\\
    &\le \theta_{ij,t} + 6\eta (2r\sigma_1 + 2n\alpha^2)\theta_t^{3/2} + 60n^2\sigma_1^2\eta^2\theta_t^{3/2} + 8\eta \sigma_1\theta_t^2 +6n^2\eta^2\sigma_1^2\theta_t^2)\\
    &\le \theta_{ij,t} + 98\eta \cdot (r\sigma_1\theta_t^{3/2})
\end{align*}
The last inequality holds by $\alpha\le \sqrt{\sigma_1}/\sqrt{n}$, 
 and $n^2\sigma_1\eta^2\le \eta$ because
$    \eta \le \frac{1}{n^2\sigma_1}.
$

Hence, \begin{align}
    \theta_{t+1} \le \theta_{t} + 98\eta (r\sigma_1)\theta_t^{3/2}
\end{align}

The Phase 1 terminates when $\|x_i^{T_1}\|^2 \ge \frac{3\sigma_i}{4}$. Since $\|x_i^0\|^2 \ge \alpha^2/2$ and 
\begin{align}
    \|x_i^{t+1}\|^2 \ge (1+\eta\sigma_i/4)\|x_i^t\|^2,
\end{align} there is a constant $C_3$ such that  $T_1 \le C_1(\log (\sqrt{\sigma_1}/\alpha)/\eta\sigma_i)$.
Hence, before round $T_1$,  $$\theta_{T_1}\le \theta_0 + 98\eta T_1\cdot r\sigma_1\cdot (2\theta_0)^{3/2}\le \theta_0 + 98 C_1r\kappa(2\theta_0)^{3/2}\log(\sqrt{\sigma_1}/\alpha)\le  2\theta_0.$$ This is because 
\begin{align*}
    \theta_{0} =\cO((\log^2 (r\sqrt{\sigma_1}/\alpha) (r\kappa))^2)
\end{align*}
by Lemma \ref{lemma:initialangle} and choosing $k\ge c_2((r\kappa)^2 \log (r\sqrt{\sigma_1/\alpha}))^4$ for large enough $c_2$
 \end{proof}
\subsection{Phase 2}
Denote $\theta_t^U = \max_{\min\{i,j\} \le r}\theta_{ij,t}$. 
In this phase, we prove that $\theta_t^U$ is linear convergence, and the convergence rate of the loss is at least $\Omega(1/T^2)$. To be more specific,  we will show that 
\begin{gather}
    \theta_{t+1}^U\le \theta_t^U \cdot (1-\eta\cdot \sigma_r/4) \le \theta_t^U\label{ineq:sym phase 2 condition 0}\\
    \frac{\theta_{t+1}^U}{\sum_{i>r}\|x_i^{t+1}\|^2} \le \frac{\theta_{t}^U}{\sum_{i>r}\|x_i^{t}\|^2}\cdot \left(1-\frac{\eta\sigma_r}{8}\right)
    \label{ineq:sym phase 2 condition 1}\\|\|x_i^t\|^2-\sigma_i|\le \frac{1}{4}\sigma_i \ \ (i\le r)\label{ineq:sym phase 2 condition 2}\\
    \|x_i^t\|^2 \le 2\alpha^2 \ \ (i>r)\label{ineq:sym phase 2 condition 3}
\end{gather}

First, the condition \eqref{ineq:sym phase 2 condition 2} and \eqref{ineq:sym phase 2 condition 3} hold at round $T_1.$
 Then, if it holds before round $t$, consider round $t+1$, similar to Phase 1, condition \eqref{ineq:sym phase 2 condition 3} also holds. Now we prove Eq.\eqref{ineq:sym phase 2 condition 0}, \eqref{ineq:sym phase 2 condition 1} and \eqref{ineq:sym phase 2 condition 2} one by one. 
 \paragraph{Proof of Eq.\eqref{ineq:sym phase 2 condition 2}}
 For $i\le r$, if $\|x_i^t\|^2 \ge 3\sigma_i/4$, by Eq.\eqref{ineq: when xi norm large, bound}
\begin{align}
    |\Vert x_i^{t+1}\Vert_2^2-\sigma_i| \le (1-\eta\sigma_i) |\|x_i^t\|^2-\sigma_i| + 3\eta \sum_{j\neq i}^n ((x_i^t)^\top x_j^t)^2\label{connect lower bound}
\end{align}
Hence, by \eqref{ineq:sym phase 2 condition 2} and \eqref{ineq:sym phase 2 condition 3}, we can get
\begin{align}
    \sum_{j\neq i}^n ((x_i^t)^\top x_j^t)^2 &\le \sum_{j\neq i, j\le r}((x_i^t)^\top x_j^t)^2 +\sum_{j\neq i, j > r} ((x_i^t)^\top x_j^t)^2\nonumber\\&\le (r\sigma_1 
 + 4n\sigma_1\alpha^2)\theta^U_t \nonumber\\&\le 2r\sigma_1 
 \theta^U_t\label{explain 9 lower bound}\\&\le 2r\sigma_1 \theta_{T_1}^U\label{explain 10 lower bound}\\&\le 2r\sigma_1 \cdot 2\theta_0\le \sigma_i/20.\label{expllain 11 lower bound}\end{align}  The inequality \eqref{explain 9 lower bound} is because $\alpha \le \frac{1}{4n\sigma_1}$, the inequality \eqref{explain 10 lower bound} holds by induction hypothesis \eqref{ineq:sym phase 2 condition 0}, and the last inequality \eqref{expllain 11 lower bound} is because of \eqref{phase 1 condition 3} and 
 $    \theta_0 \le \frac{1}{80r\kappa}.$
 
  Hence, if $|\|x_i^t\|^2-\sigma_i|\le \sigma_i/4$, by combining \eqref{connect lower bound} and \eqref{expllain 11 lower bound}, we have 
 \begin{align*}
     |\|x_i^{t+1}\|^2-\sigma_i|\le (1-\eta\sigma_i)|\|x_i^t\|-\sigma_i| + 3\eta\sigma_i/20\le \sigma_i/4.
 \end{align*} Now it is easy to get that $|\|x_i^t\|^2- \sigma_i|\le 0.25\sigma_i$ for $t\ge T_1$ by induction because of  $|\|x_i^{T_1}\|^2-\sigma_i|\le 0.25\sigma_i$. Thus, we complete the proof of Eq.\eqref{ineq:sym phase 2 condition 2}.

\paragraph{Proof of Eq.\eqref{ineq:sym phase 2 condition 0}}

First, 
we consider $i\le r, j\neq i\in [n]$ and $\theta_{ij,t}>\theta_t^U/2$, since \eqref{phase 1 condition 1} and \eqref{phase 1 condition 2} still holds with \eqref{ineq:sym phase 2 condition 2} and \eqref{ineq:sym phase 2 condition 3}, similarly, we can still have equation \eqref{theta update}, i.e.
\begin{align*}
    \theta_{ij,t+1}
    =\theta_{ij,t}\cdot \left(\frac{1-A-B}{1-A-C}\right).
\end{align*}
where 
\begin{align}
    A&= 2\eta(\|x_i^t\|^2-\sigma_i)+2\eta(\|x_j^t\|^2-\sigma_j)\ge -2\eta (2\cdot (\sigma_i/4))\ge -1/100.\nonumber\\
    B&= 2\eta (\|x_i^t\|^2 + \|x_j^t\|^2) - 2\eta\sum_{k\neq i,j}\Vert x_k\Vert^2 \cdot \sqrt{\theta_{ik,t}\theta_{kj,t}/\theta_{ij,t}}-30n^2\eta^2\sigma_1^2 \sqrt{\theta_t^U}/\sqrt{\theta_{ij,t}}\nonumber\\&\ge2\eta (\|x_i^t\|^2 + \|x_j^t\|^2)- 4\eta\sum_{k\le r}\Vert x_k\Vert^2\sqrt{\theta^U} -4n\eta\alpha^2-40n^2\eta^2\sigma_1^2\label{ineq: B}\\&\ge 2\eta\cdot \frac{3\sigma_i}{4} - 8\eta r\sigma_1\sqrt{2\theta_{T_0}}-4n\eta\alpha^2-40n^2\eta^2\sigma_1^2 \label{explain 13 lower bound}\\&\ge \eta \cdot \sigma_r\label{explain 14 lower bound}
\end{align}
The inequality Eq.\eqref{ineq: B} holds by $\theta_{ij,t} >\theta_t^U/2$,  the inequality \eqref{explain 13 lower bound} holds by \eqref{ineq:sym phase 2 condition 0}, and \eqref{explain 14 lower bound} holds by
\begin{align}
    \theta_{T_0}=\cO\left(\frac{1}{r^2\kappa^2}\right), \quad 
    \alpha = \cO(\sqrt{\sigma_r/n}), \quad 
    \eta = \cO(1/n^2\kappa\sigma_1).\label{range of parameter lower bound}
\end{align}
The term $C$ is defined and can be bounded by 
\begin{align}
    C &= 2\eta\sum_{k\neq i,j} \Vert x_k\Vert^2 (\theta_{ik,t} + \theta_{jk,t}) + 2\eta (\|x_i\|^2 + \|x_j\|^2)\theta_{ij,t}+6\eta^2\theta_tn^2\sigma_1^2\nonumber\\&\le 4\eta \sum_{k\le r} \|x_k\|^2 \theta_t^U + 4\eta n\alpha^2 \theta_t + 6\eta^2\theta_t n^2\sigma_1^2 \nonumber\\&\le 8r\eta\sigma_1 \theta_t^U +  4\eta n\alpha^2 + 6\eta^2n^2\sigma_1^2 \nonumber\\&\le 8r\eta\sigma_1 \theta_{T_0} +  4\eta n\alpha^2 + 6\eta^2n^2\sigma_1^2\label{explain 15 lb}\\&\le \eta\cdot \sigma_r/2.\label{explain 16 lb}
\end{align}
The inequality \eqref{explain 15 lb} holds by \eqref{ineq:sym phase 2 condition 0}, and the inequality \eqref{explain 16 lb} holds by \eqref{range of parameter lower bound}.

Then, for $i\le r, j\neq i\in [n]$ and $\theta_{ij,t}>\theta_t^U/2$, we can get 
\begin{align}
    \theta_{ij,t+1} &\le \theta_{ij,t} \cdot \left(\frac{1-A-B}{1-A-C}\right)\nonumber\\
    &\le \theta_{ij,t}\cdot \left(\frac{2-\eta\cdot \sigma_r}{2-\eta\cdot \sigma_r/2}\right)\nonumber\\&\le \theta_{ij,t}\cdot \left(\frac{1-\eta\cdot \sigma_r/2}{1-\eta\cdot \sigma_r/4}\right)\le \theta_{ij,t}\cdot (1-\eta\cdot \sigma_r/4)\label{ineq: case 1, theta ij bound}
\end{align}
For $i\le r, j \in [n]$ and $\theta_{ij,t}\le \theta_{t}^U/2$, we have 
\begin{align}
    B&\ge -2\eta \sum_{k\le r}\|x_k\|^2 \theta_t^U/\sqrt{\theta_{ij,t}} - 2\eta \sum_{k>r}\|x_k\|^2 \sqrt{\theta_t^U}/\sqrt{\theta_{ij,t}}-30n^2\eta^2\sigma_1^2\sqrt{\theta_t^U}/\sqrt{\theta_{ij,t}}\\
    &\ge -4\eta r\sigma_1 \theta_t^U/\sqrt{\theta_{ij,t}}-(4n\eta \alpha^2+ 30n^2\eta^2\sigma_1^2)\sqrt{\theta_t^U}/\sqrt{\theta_{ij,t}}
\end{align}
\begin{align}
    \theta_{ij,t+1}&\le \theta_{ij,t}\cdot \left(\frac{1-A-B}{1-A-C}\right)\nonumber\\&\le \theta_{ij,t}\cdot (1-2B+2C)\nonumber\\
    &\le \theta_{ij,t} + 8\eta r\sigma_1\theta_t^U\sqrt{\theta_{ij,t}} + (4n\eta \alpha^2+ 30n^2\eta^2\sigma_1^2)\sqrt{\theta_t^U\theta_{ij,t}} + 2C\theta_{ij,t}\nonumber\\
    &\le \frac{\theta_t^U}{2} + 8\eta r\sigma_1\theta_t^U + (4n\eta \alpha^2+ 30n^2\eta^2\sigma_1^2)\theta_t^U + \eta \sigma_r\theta_{t}^U\nonumber\\&\le \frac{3\theta_t^U}{4}.\label{ineq: case 2, theta ij bound}
\end{align}
The last inequality is because $8\eta r\sigma_1 + 4n\eta \alpha^2+ 30n^2\eta^2\sigma_1^2  + \eta\sigma_r \le \frac{1}{4}$ by $\eta \le \cO(1/n\sigma_1)$ and $\eta \le \cO(1/n\alpha^2)$.
Hence, by Eq.\eqref{ineq: case 1, theta ij bound} and \eqref{ineq: case 2, theta ij bound} and the fact that $\eta \sigma_r/4 \le 1/4,$
\begin{align}
\theta_{t+1}^U \le \theta_{t}^U \cdot \max\Big\{\frac{3}{4},1-\eta\cdot \sigma_r/4\Big\} =(1-\eta\cdot \sigma_r/4)\theta_t^U. 
\end{align}
Thus, we complete the proof of Eq.\eqref{ineq:sym phase 2 condition 0}
\paragraph{Proof of Eq.\eqref{ineq:sym phase 2 condition 1}}

Also, for $i> r$, denote $\theta_{ii,t}=1$, then
\begin{align}
    \|x_i^{t+1}\|^2 &=\Vert x_i^t\Vert^2 - 2\eta \sum_{j=1}^n ((x_i^t)^\top x_j^t)^2  + \eta^2 \left(\sum_{j,k=1}^n (x_i^t)^\top  x_j^t (x_j^t)^\top \right)^2\nonumber\\&\ge  \Vert x_i^t \Vert^2 (1-2\eta \sum_{j=1}^n \Vert x_j^t\Vert^2 \theta_{ij,t}) \label{x_i norm lower bound}\\
    &\ge \Vert x_i \Vert^2 (1-2\eta r\sigma_1 \theta_t^U- 2\eta n \alpha^2)\nonumber\\&\ge \Vert x_i \Vert^2(1-\eta\cdot \sigma_r/8)\nonumber
\end{align}
The last inequality holds because 
\begin{gather}
    \theta_t^U\le \theta_0 \le \cO(1/r\kappa)\\
    \alpha \le \sqrt{\sigma_r/n}
\end{gather}
Hence, the term $\theta^U/\Vert x_i\Vert^2$ for $i>r$ is also linear convergence by 
\begin{align*}
    \frac{\theta^U_{t+1}}{\sum_{i>r}\|x_i^{t+1}\|^2} \le \frac{\theta^U_{t}}{\sum_{i>r}\|x_i^{t}\|^2}\cdot \frac{1-\eta\cdot \sigma_r/4}{1-\eta\cdot \sigma_r/8} \le \frac{\theta^U_{t}}{\sum_{i>r}\|x_i^{t}\|^2}\cdot\left(1-\frac{\eta \sigma_r}{8}\right).
\end{align*}
Hence, we complete the proof of Eq.\eqref{ineq:sym phase 2 condition 1}.
\subsection{Phase 3: lower bound of convergence rate}
Now by \eqref{ineq:sym phase 2 condition 1}, there are constants $c_6$ and $c_7$ such that, if we  denote $T_2=T_1+ c_7(\log(\sqrt{r\sigma_1}/\alpha)/\eta\sigma_r) = c_6 (\log(\sqrt{r\sigma_1}/\alpha)/\eta\sigma_r) $, then we will have 
\begin{align}\label{ineq:theta_t^U<sum}
    \theta_{T_2}^U < \sum_{i>r}\Vert x_i^{T_2} \Vert^2/r\sigma_1
\end{align}
because of the fact that  $\theta_{T_1}^U/\sum_{i>r}\|x_i^{T_1}\|^2 \le \frac{4}{n\cdot \alpha^2}\le 4/\alpha^2$. Now
after round $T_2$, 
 consider $i>r$, we can have
\begin{align*}
    \Vert x_i^{t+1} \Vert^2 &\ge  \Vert x_i^t \Vert^2 (1-2\eta \sum_{j=1}^n \Vert x_j^t\Vert^2 \theta_{ij,t}) \\&\ge \Vert x_i ^t\Vert^2 (1-2\eta r \sigma_1\theta_t^U- 2\eta  \sum_{j>r}\Vert x_j^t\Vert^2)
\end{align*}
Hence, by Eq.\eqref{x_i norm lower bound}, we have
\begin{align}
    \sum_{j>r}\Vert x_j^{t+1}\Vert^2 &\ge \left(\sum_{j>r}\Vert x_j^t\Vert^2\right) \left(1-2\eta r\sigma_1 \theta_t^U - 2\eta \sum_{j>r}\Vert x_j^t\Vert^2\right)\\
    &\ge \left(\sum_{j>r}\Vert x_j^t\Vert^2\right) \left(1 - 4\eta \sum_{j>r}\Vert x_j^t\Vert^2\right),\label{ineq:updating rule of sum xj}
\end{align}
where the second inequality is derived from \eqref{ineq:theta_t^U<sum}.

Hence, we can show that  $\sum_{j>r}\Vert x_j^t \Vert^2 = \Omega(1/T^2)$. In fact, suppose at round $T_2$, we denote $A_{T_2} = \sum_{j>r}\Vert x_j^{T_2}\Vert^2$, then
by 
\begin{align*}
    \|x_i^{t+1}\|^2 &\ge  \Vert x_i^t \Vert^2 (1-2\eta \sum_{k=1}^n \Vert x_k^t\Vert^2 \theta_{ik,t}))\\
    &\ge \Vert x_i^t \Vert^2 (1-2\eta r\sigma_1 \theta^U- 2\eta n \alpha^2)
\end{align*}
we can get 
\begin{align}
    \|x_i^{T_2}\|^2 
    &\ge \Vert x_i^{T_1} \Vert^2 (1-2\eta r\sigma_1 \theta_{T_1}^U- 2\eta n \alpha^2)^{T_2-T_1}\nonumber\\
    &\ge \Vert x_i^{T_1} \Vert^2 \cdot (1-c_5(\log(r\sqrt{\sigma_1}/\alpha)/\eta\sigma_r)\cdot \left(2\eta  r\sigma_1 \theta_{T_1}+2\eta  n\alpha^2\right))\nonumber\\
    &\ge \|x_i^{T_1}\|^2 \cdot (1-c_5\log(r\sqrt{\sigma_1}/\alpha) \cdot (4 r\kappa \theta_{0} + 2n\alpha^2/\sigma_r))\nonumber\\
    &\ge\frac{1}{2} \Vert x_i^{T_1} \Vert^2 \label{ineq: explain}\\&\ge \frac{\alpha^2}{8}\nonumber
\end{align}
where the inequality \eqref{ineq: explain} is because 
\begin{align}
    \theta_0 &\le \cO\left(\frac{1}{r\kappa \log(r\sqrt{\sigma_1}/\alpha)}\right)\\
    \alpha^2 &\le \cO\left(\frac{\sqrt{\sigma_r}}{n\log(r\sqrt{\sigma_1}/\alpha)}\right).
\end{align}
Hence, 
\begin{align}
    T_2A_{T_2}\ge T_2\cdot (n-r)\frac{\alpha^2}{8}\ge c_7(\log(\sqrt{r\sigma_1}/\alpha)/\eta \sigma_r)\cdot \frac{\alpha^2}{8}.\label{t2dt2 lower bound}
\end{align}
by $n>r$.
Define $A_{T_2+i+1} = A_{T_2 + i}(1-4\eta A_{T_2+i})$, by Eq.\eqref{ineq:updating rule of sum xj}, we have  \begin{align}
    A_{T_2+i} \le A_{T_2} = \sum_{i>r}\|x_i^{T_2}\|^2 \le 2n\alpha^2.\label{DT upper bound}
\end{align} On the other hand, if $\eta (T_2+i)A_{T_2+i} \le 1/8$, and then 
\begin{align}
    \eta(T_2+i+1)A_{T_2+i+1} &= \eta(T_2+i+1)A_{T_2 + i}(1-4\eta A_{T_2+i}) \nonumber\\&= \eta (T_2+i)A_{T_2+i}-(T_2+i)4\eta^2 A_{T_2+i}^2 +  \eta A_{T_2+i}(1-4\eta A_{T_2+i})\nonumber\\&\ge \eta (T_2+i)A_{T_2+i}-(T_2+i)4\eta^2 A_{T_2+i}^2 +  \eta A_{T_2+i}/2\label{explain lower bound}\\&\ge \eta(T_2+i)A_{T_2+i}-\eta A_{T_2+i}/2 +  \eta A_{T_2+i}/2\nonumber\\
    &\ge  \eta (T_2+i)A_{T_2+i},\nonumber
\end{align}
where \eqref{explain lower bound} holds by $\eta A_{T_2+i} \le 2n\eta \alpha^2 \le 1/8.$

If $\eta (T_2+i)A_{T_2+i} > 1/8$, since $\eta A_{T_2+i}\le 1/8$, we have $\eta A_{T_2}\le 2n\eta \alpha^2 \le 1/8.$
\begin{align*}
    \eta (T_2+i+1)A_{T_2+i+1} &\ge \eta (T_2+i)A_{T_2+i}(1-4\eta A_{T_2+i}) +  \eta A_{T_2+i}(1-4\eta A_{T_2+i})\\
    &\ge \frac{1}{8}\cdot \frac{1}{2} +\eta A_{T_2+i}\cdot \frac{1}{2}\\
    &\ge \frac{1}{16}.
\end{align*}

Thus, by the two inequalities above, at round $t\ge T_2$, we can have 
\begin{align*}
    \eta tA_t\ge \min\{\eta T_2A_{T_2}, 1/16\}.
\end{align*}
Now by \eqref{t2dt2 lower bound}, \begin{align}
    \eta T_2A_{T_2} \ge  \frac{c_7\log(\sqrt{r\sigma_1}/\alpha)\alpha^2}{8\sigma_r},
\end{align} then for any $t \ge T_2,$ we have \begin{align}\eta tA_t \ge \min\left\{\frac{c_7\log(\sqrt{r\sigma_1}/\alpha)\alpha^2}{8\sigma_r}, 1/16\right\}\end{align} Now by choosing $\alpha = \widetilde{\cO}(\sqrt{\sigma_r})$ so that $\frac{c_7\log(\sqrt{r\sigma_1}/\alpha)\alpha^2}{8\sigma_r} \le 1/16$, we can derive 
\begin{align}
    A_t\ge \frac{c_7\log(\sqrt{r\sigma_1}/\alpha)\alpha^2}{8\sigma_r \eta t}.
\end{align}
Since for $j>r$, $(X_tX_t^\top -\Sigma)_{jj} = \|x_j^t\|^2$, we have $\|X_tX_t^\top -\Sigma\|^2 \ge \sum_{j>r}\|x_j^t\|^4 \ge A_t^2/n$ and 
\begin{align*}
    \|X_tX_t^\top -\Sigma\|^2  \ge A_t^2/n \ge \left(\frac{c_7\log(\sqrt{r\sigma_1}/\alpha)\alpha^2}{8\sigma_r \eta \sqrt{n}t}\right)^2.
\end{align*}

\section{Proof of Theorem \ref{thm: asym.toy_case}}
\label{sec: proof_toy_case}
Denote the matrix of the first $r$ row of $F,G$ as $U,V$ respectively, and the matrix of the last $n-r$ row of $F,G$ as $J,K$ respectively. Hence, $U,V \in \RR^{r\times k}, J,K \in \RR^{(n-r)\times k}$. 
In this case, the difference $F_tG_t^\top -\Sigma$ can be written in a block form as 
\begin{align}
    F_tG_t^\top -\Sigma  = 
\begin{pmatrix}
U_tV_t^\top -\Sigma_r & J_tV_t^\top   \\
U_tK_t^\top  & J_tK_t^\top  \\
\end{pmatrix},
\end{align}
where $\Sigma_r = I \in \RR^{r\times r}$.
Hence, the loss can be bounded by \begin{align}\|J_tK_t^\top \|\le \|F_tG_t^\top -\Sigma\|\le \|U_tV_t^\top -\Sigma_r\| + \|J_tV_t^\top \|+\|U_tK_t^\top \| + \|J_tK_t^\top \|.\label{eq:two bound of loss}\end{align}

The updating rule for $(U,V,J,K)$ under gradient descent in \eqref{eq: asym.mf.gd} can be rewritten explicitly as
\begin{align*}
    U_{t+1} &= U_t + \eta \Sigma_r V_t - \eta U_t(V_t^\top V_t+K_t^\top K_t) \\
    V_{t+1} &= V_t + \eta \Sigma_r U_t -\eta V_t(U_t^\top U_t + J_t^\top J_t) \\
    J_{t+1} &= J_t -\eta J_t (V_t^\top V_t + K_t^\top K_t) \\
    K_{t+1} &= K _t- \eta K_t (U_t^\top U_t+J_t^\top J_t).
\end{align*}

Note that with our particular initialization, we have the following equality for all $t$:
\begin{equation} \label{eq: simpleAssumptionToyCase}
U_tK_t^\top  = 0, J_tV_t^\top  = 0,\quad \text{and}\quad U_t = V_t. 
\end{equation} 
Indeed, the conditions \eqref{eq: simpleAssumptionToyCase} are satisfied for $t=0$. For $t+1$, we have
\begin{gather*}
    U_{t+1} = U_t + \eta (\Sigma_r - U_tV_t^\top ) V_t = V_t + \eta (\Sigma_r - U_tV_t^\top ) U_t=V_{t+1}, \ \ 
    K_{t+1} = K_t -\eta K_tJ_t^\top J_t\\
    U_{t+1}K_{t+1}^\top  = U_tK_t^\top  + \eta(\Sigma_r-U_tV_t^\top )U_tK_t^\top  - \eta V_tJ_t^\top J_tK_t^\top  - \eta^2 (\Sigma_r - U_tV_t^\top )U_tJ_t^\top J_tK_t^\top  = 0
\end{gather*}
The last equality arises from the fact that $U_tK_t^\top  = 0, J_tV_t^\top  = 0$ and $U_t = V_t$. Similarly, we can get $J_{t+1}V_{t+1}^\top  = 0$. 
Hence, we can rewrite the updating rule of $J_t$ and $K_t$ as 
\begin{align}
    J_{t+1} &= J_t -\eta J_t  K_t^\top K_t \label{toycase:updating rule of J}\\
    K_{t+1} &= K _t- \eta K_t J_t^\top J_t.\label{toycase:updating rule of K}
\end{align}
Let us now argue why the convergence rate can not be faster than $\Omega((1-6\eta\alpha^2)^t)$. Denote $A \in \RR^{(n-r)\times k}$ as the matrix that $(A)_{1k} = 1$ and other elements are all zero. We have that $J_0 = \alpha A$ and $K_0 = (\alpha/3)\cdot A$. Combining this with Eq.\eqref{toycase:updating rule of J} and Eq.\eqref{toycase:updating rule of K}, we have $J_{t} = a_t A, K_t = b_t A$, where
\begin{subequations}\label{eq: toy_case.JKevolution}
\begin{align}
    a_0 = \alpha, b_0  = \alpha/3,\\
    a_{t+1} = a_t - \eta a_t b_t^2,\label{appendix_toycase:updating rule of a}\\
    b_{t+1} = b_t - \eta a_t^2b_t.\label{appendix_toycase:updating rule of b}
\end{align}
\end{subequations}

It is immediate that $0\le  a_{t+1}\le a_t, 0\le b_{t+1} \le b_t$ , $\max\{a_t,b_t\}\le \alpha$ because of $\eta b_t^2\le \eta b_0^2 
 = \eta \alpha^2 \le 1$ and similarly $\eta a_t^2 \le 1$. Now by $\eta \alpha^2 \le 1/4,$ 
\begin{align}
    \|J_{t+1}K_{t+1}^\top \| = a_{t+1}b_{t+1}=(1-\eta a_t^2)(1-\eta b_t^2)a_tb_t \ge (1-2\eta \alpha^2)^2 a_tb_t\ge (1-4\eta \alpha^2)a_tb_t.\label{toycase:slow convergence}
\end{align}


By Eq.\eqref{eq:two bound of loss} that $\|F_tG_t^\top -\Sigma\|\geq \|J_tK_t^\top \|$, 
the convergence rate of  $\|F_tG_t^\top -\Sigma\|$ can not be faster  than $a_0b_0(1-4\eta\alpha^2)^t\ge \frac{\alpha^2}{3}(1-4\eta \alpha^2)^t.$ 

Next, we show why the convergence rate is exactly $\Theta((1-\Theta(\eta \alpha^2))^t)$ in this toy case. By Eq.\eqref{eq: simpleAssumptionToyCase}, the loss $\|F_tG_t^\top -\Sigma\| \le \|U_tU_t^\top -\Sigma_r\| + \|J_tK_t^\top \|.$ First, we consider the norm $ \|U_tU_t^\top -\Sigma_r\|$. Since in this toy case, $\Sigma_r = I_r$ and $U_t = V_t$ for all $t$, the updating rule of $U_t$ can be written as 
\begin{align}
    U_{t+1} = U_t - \eta(U_tU_t^\top -I)U_t
\end{align}
Note that $U_0 = (\alpha I_r,0) \in \RR^{r\times k}$. By induction, we can show that 
%
%
$U_{t} = (\alpha_{t} I_r, 0)$ and $\alpha_{t+1} = \alpha_t-\eta (\alpha_t^2-1)\alpha_t$ for all $t\geq 0$. 
If $\alpha_t \le 1/2$, we have \begin{align*}
    \alpha_{t+1}&= \alpha_t(1+\eta-\eta \alpha_t^2)\ge \alpha_t(1+\eta/2).
\end{align*}
Then, there exists a constant $c_1$ and $T_1 = c_1(\log(1/\alpha)/\eta)$ such that after $T_1$ rounds, we can get $\alpha_t\ge 1/2$. By the fact that $\alpha_{t+1} = \alpha_t(1+\eta(1-\alpha_t^2)) \le \max\{\alpha_t,2\}$ when $\eta < 1$, it is easy to show $\alpha_t \le 2$ for all $t \ge 0.$ Thus, when $\eta < 1/6$, we can get $1-\eta(\alpha_t+1)\alpha_t >0$ and then  \begin{align*}|\alpha_{t+1}-1| &= |(\alpha_t-1) - \eta (\alpha_t-1)(\alpha_t+1)\alpha_t| \\&= |\alpha_t-1|(1-\eta (\alpha_t+1)\alpha_t)\\&\le |\alpha_t-1|(1-\eta/2).\end{align*}
we know that $\|U_tU_t^\top -\Sigma_r\| = \alpha_t^2-1$ converges at a linear rate \begin{align}
    \|U_tU_t^\top - \Sigma \| \le (1-\eta/2)^{t-T_1} \overset{\text{(a)}}{\le} (1-\eta\alpha^2/4)^{(t-T_1)/2},\label{converge part 1 toy case}
\end{align}
where (a) uses the fact that 
\begin{align}
    1-\eta \alpha^2/4 \ge 1-\eta \ge (1-\eta/2)^2
\end{align}

Hence, we only need to show that $\|J_tK_t^\top \|$ converges at a relatively slower speed $\cO((1-\Theta(\eta \alpha^2))^t)$.
To do this, we prove the following statements by induction.
\begin{gather}\label{eq: toy_JtKt}
    \alpha \ge a_t \ge \alpha/2, \ \ 
    b_{t+1}^2\le b_t^2(1-\eta\alpha^2/4)
\end{gather}
Using $b_0 = \alpha/3$, we see the above implies that 
$\|J_tK_t^\top \|=a_tb_t\leq \cO((1-\Theta(\eta \alpha^2))^t)$. 

Let us prove \eqref{eq: toy_JtKt} via induction. It is trivial to show it holds at $t=0$ and the upper bound of $a_t$ by \eqref{eq: toy_case.JKevolution}. Suppose \eqref{eq: toy_JtKt} holds for $t' \le t$, then at round $t+1$, we have
\begin{align}
    b_{t+1}^2=b_t^2(1-\eta a_t^2)^2\le b_t^2(1-\eta\alpha^2/4)^2\le b_t^2(1-\eta\alpha^2/4).
\end{align}
Using $a_{t+1} = a_t(1-\eta b_t^2)$, we have 
\begin{align}
    a_{t+1} = a_0 \prod_{i=1}^t (1-\eta b_i^2) \overset{(a)}{\ge} a_0 \left(1-\eta \sum_{i=1}^t b_i^2\right) \overset{(b)}{\ge} \alpha \cdot \left(1-\eta \cdot \frac{\alpha^2}{9}\cdot \frac{4}{\eta\alpha^2}\right)\ge \alpha/2.
\end{align}
where the step $(a)$ holds by recursively using $(1-a)(1-b) \ge (1-(a+b))$ for $a,b\in (0,1)$, and the step $(b)$ is due to $b_i^2 \le b_0^2 \cdot (1-\eta\alpha^2/4)^{t}\le \frac{\alpha^2}{9}\cdot (1-\frac{\eta\alpha^2}{4})^t$ and the sum formula for geometric series. Thus, the induction is complete, and 
\begin{align}
    \|J_tK_t^\top \| =a_tb_t\le (\alpha^2/3)\cdot (1-\eta\alpha^2/4)^{t/2} \le (1-\eta \alpha^2/4)^{t/2} \le (1-\eta\alpha^2/4)^{(t-T_1)/2}. \label{toy case JK}
\end{align}
Combining \eqref{converge part 1 toy case} and \eqref{toy case JK}, with $\|A\|_2 \le \|A\|_\mathrm{F} \le \text{rank}(A)\cdot \|A\|_2$, we complete the proof.

\section{Proof of Theorem \ref{theorem:main theorem assymmetric}}\label{appendix: proof of asymmetric}
We prove Theorem \ref{theorem:main theorem assymmetric} in this section. We start with some preliminaries.
\subsection{Preliminaries}\label{sec: Preliminaries_asym_ms}
In the following, we denote $\delta_{2k+1} = \sqrt{2k+1}\delta$.
Also denote the matrix of the first $r$ row of $F,G$ as $U,V$ respectively, and the matrix of the last $n-r$ row of $F,G$ as $J,K$ respectively. Hence, $U,V \in \RR^{r\times k}, J,K \in \RR^{(n-r)\times k}$. We denote the corresponding iterates as $U_t$, $V_t$, $J_t$, and $K_t$.

Also, define $E(X) = \cA^*\cA(X)-X$. We also denote $\Gamma(X) = \cA^*\cA(X)$. By Lemma \ref{lemma: rip matrix}, we can show that $\Vert E(X)\Vert \le \delta_{2k+1}\cdot \|X\|$ for matrix $X$ with rank less than $2k$ by Lemma \ref{lemma: rip matrix}. 
Decompose the error matrix $E(X)$ into four submatrices by 
\begin{align*}
    E(X) = \begin{pmatrix}
        E_1(X) & E_2(X)\\
        E_3(X) & E_4(X)
    \end{pmatrix},
\end{align*}
where $E_1(X) \in \RR^{r\times r}, E_2(X) \in \RR^{r\times (n-r)}, E_3(X) \in \RR^{(n-r)\times r}, E_4(X) \in \RR^{(n-r)\times (n-r)}$.
Then
the updating rule can be rewritten in this form:
\begin{align}
    U_{t+1} &= U_t + \eta \Sigma V_t - \eta U_t(V_t^\top V_t+K_t^\top K_t) + \eta E_1(F_tG_t^\top -\Sigma)V_t  + \eta E_2(F_tG_t^\top -\Sigma)K_t\label{update rule: U}\\
    V_{t+1} &= V_t + \eta \Sigma U_t -\eta V_t(U_t^\top U_t + J_t^\top J_t) + \eta E_1^\top (F_tG_t^\top -\Sigma)U_t  + \eta E_3^\top (F_tG_t^\top -\Sigma)J_t\label{update rule: V}\\
    J_{t+1} &= J_t -\eta J_t (V_t^\top V_t + K_t^\top K_t) + \eta E_3(F_tG_t^\top -\Sigma)V_t  + \eta E_4(F_tG_t^\top -\Sigma)K_t\label{update rule: J}\\
    K_{t+1} &= K_t - \eta K_t (U_t^\top U_t+J_t^\top J_t) + \eta E_2^\top (F_tG_t^\top -\Sigma)U_t  + \eta E_4^\top (F_tG_t^\top -\Sigma)J_t.\label{update rule: K}
\end{align}
Since the submatrices' operator norm is less than the operator norm of the whole matrix, the matrices 
$E_i(F_tG_t^\top -\Sigma)$, $i=1,\;\dots,\;4$ satisfy  that 
\[
\Vert E_i(F_tG_t^\top -\Sigma)\Vert \le \|E(F_tG_t^\top -\Sigma)\| \le \delta_{2k+1} \Vert F_tG_t^\top -\Sigma\Vert, \quad i=1,\;\dots,\;4.
\]

\paragraph{Imbalance term} 
An important property in analyzing the asymmetric matrix sensing problem is that $F^\top F - G^\top G =U^\top U+J^\top J-V^\top V-K^\top K$ remains almost unchanged when step size $\eta$ is sufficiently small, i.e., the balance between two factors $F$ and $G$ are does not change much throughout the process. To be more specific, by \begin{gather}
    F_{t+1}=F_t - \eta (F_tG_t^\top -\Sigma)G_t - E(F_tG_t^\top -\Sigma)G_t\nonumber\\
    G_{t+1}=G_t - \eta (F_tG_t^\top -\Sigma)^\top F_t - (E(F_tG_t^\top -\Sigma))^\top F_t\nonumber
\end{gather}we have
\begin{align}
\left \|
\left(F_{t+1}^\top F_{t+1}-G_{t+1} ^\top G_{t+1} \right)
- \left(F_{t}^\top F_{t}-G_{t} ^\top G_{t} \right)
\right\| \le  2 \eta^2\cdot \|F_tG_t^\top -\Sigma\|^2 \cdot \max\{\|F_t\|, \|G_t\|\}^2.\label{eq: Delta_t change little}
\end{align}
In fact, by the updating rule, we have
\begin{align}
    &F_{t+1}^\top F_{t+1}-G_{t+1} ^\top G_{t+1}\nonumber\\& =F_t^\top  F_t-G_t^\top G_t + \eta^2 \Big(G_t^\top (F_tG_t^\top -\Sigma)^\top (F_tG_t^\top -\Sigma)G_t-F_t^\top (F_tG_t^\top -\Sigma)(F_tG_t^\top -\Sigma)^\top F_t\Big),\nonumber
\end{align}
so that 

\begin{align}
    \|F_{t+1}^\top F_{t+1}-G_{t+1} ^\top G_{t+1}-(F_t^\top  F_t-G_t^\top G_t)\| 
    &\le  2\eta^2 \|F_t\|^2\|G_t\|^2\|F_tG_t^\top -\Sigma\|^2 \nonumber
    \\
    &\le  2\eta^2 \cdot \|F_t G_t^\top -\Sigma \|\cdot \max\{\|F_t\|^2,\|G_t\|^2\}\nonumber
    \end{align}

Thus, we will prove that, during the proof process, the following inequality holds with high probability during all $t\geq 0$:
\begin{align}\label{eq:keyproperty}
        2\alpha^2 I \ge U_{t}^\top U_{t} + J_{t}^\top J_{t}-V_{t}^\top V_t-K_{t}^\top K_t \ge \frac{\alpha^2}{8}I.
\end{align}

Next, we give the outline of our proof. 

\subsection{Proof Outline}
In this subsection, we give our proof outline.

$\bullet$ Recall $\Delta_t = F_t^\top F_t - G_t^\top G_t = U_{t}^\top U_{t} + J_{t}^\top J_{t}-V_{t}^\top V_t-K_{t}^\top K_t $.  In Section \ref{sec:initial iterations}, we show that with high probability, $\Delta_0$ has the scale $\alpha,$ i.e., $C \alpha^2I \ge \Delta_0 \ge c\alpha^2 I $, where $C>c$ are two constants. Then, we apply the converge results in \cite{soltanolkotabi2023implicit} to argue that the algorithm first converges to a local point. By \cite{soltanolkotabi2023implicit}, this converge phase takes at most $T_0 = \cO((1/\eta\sigma_r\upsilon) \log(\sqrt{\sigma_1}/n\alpha))$ rounds.

$\bullet$ Then, in Section \ref{sec: phase 1 gd linear} (Phase 1), we mainly show that $M_t = \max\{\|U_tV_t^\top - \Sigma\|, \|U_tK_t^\top \|, \|J_tV_t^\top \|\}$ converges linearly until it is smaller than \begin{align}M_t \le \cO(\sigma_1\delta + \alpha^2) \|J_tK_t^\top \|.\label{stopping rule outline}\end{align} This implies that the difference between estimated matrix $U_tV_t^\top$ and true matrix $\Sigma$, $\|U_tV_t^\top - \Sigma\|$, will be dominated by $\|J_tK_t^\top \|.$ Moreover, during Phase 1 we can also show that $\Delta_t$ has the scale $\alpha.$ Phase 1 begins at $T_0$ rounds and terminates at $T_1$ rounds, and $T_1$ may tend to infinity, which implies that Phase 1 may not terminate.  In this case, since $M_t$ converges linearly and $M_t > \Omega(\sigma_1\delta + \alpha^2) \|J_tK_t^\top \|$, the loss also converges linearly. Note that, in the exact-parameterized case, i.e., $k=r$, we can prove that Phase 1 will not terminate since the stopping rule \eqref{stopping rule outline} is never satisfied as shown in Section \ref{sec: appendix_proofThmAsym}. 

$\bullet$ The Section \ref{sec: adjustment}  (Phase 2) mainly shows that, after Phase 1, the $\|U_t-V_t\|$ converges linearly until it achieves $$\|U_t-V_t\| \le \cO(\alpha^2/\sqrt{\sigma_1}) + \cO(\delta_{2k+1}\|J_tK_t^\top \|/\sqrt{\sigma_1}).$$ 
Assume Phase 2 starts at round $T_1$ and terminates at round $T_2.$ Then
 since we can prove that $\|U_t-V_t\|$ decreases from 
 \footnote{The upper bound $\cO(\sigma_1)$ of $\|U_t-V_t\|$ is proved in the first two phases.}
 $\cO(\sigma_1)$ to $\Omega(\alpha^2)$, Phase 2 only takes a relatively small number of rounds, i.e. at most $T_2-T_1=\cO(\log(\sqrt{\sigma_r}/\alpha)/\eta\sigma_r)$ rounds. We also show that $M_t$ remains small in this phase.

 $\bullet$ The Section \ref{sec: local cvg gd linear} (Phase 3) finally shows that the norm of $K_t$ converges linearly, with a rate dependent on the initialization scale. 
 As in Section \ref{sec: result and asymmetric}, the error matrix in matrix sensing brings additional challenges for the proof. We overcome this proof by further analyzing the convergence of (a) part of $K_t$ that aligns with $U_t$, and (b) part of $K_t$ that lies in the complement space of $U_t.$ We also utilize that $M_t$ and $\|U_t-V_t\|$ are small from the start of the phase and remain small.
 See Section \ref{sec: local cvg gd linear} for a detailed proof.

\subsection{Initial iterations}\label{sec:initial iterations}
We start our proof by first applying results in \cite{soltanolkotabi2023implicit} and provide some additional proofs for our future use. From \cite{soltanolkotabi2023implicit}, the converge takes at most $T_0 = \cO((1/\eta \sigma_r\upsilon)\log (\sqrt{\sigma_1}/n\alpha))$ rounds. 

Let us state a few properties of the initial iterations using Lemma \ref{lemma: mahdi complete}.
\paragraph{Initialization}
By our imbalance initialization $F_0 = \alpha \cdot \widetilde{F}_0, G_0 = (\alpha/3) \cdot \widetilde{G}_0$, and by random matrix theory about the singular value  \cite[Corollary 7.3.3 and 7.3.4]{vershynin2018high}, with probability at least $1-2\exp(-cn)$ for some constant $c$, if $n>8k$, 
we can show that $[\sigma_{\min}(F_0), \sigma_{\max}(F_0))]\subseteq [\frac{\sqrt{3}\alpha}{2},\frac{\sqrt{3}\alpha}{\sqrt{2}}]$, $[\sigma_{\min}(G_0),\sigma_{\max}(G_0)]\subseteq [\frac{\sqrt{3}\alpha}{6},\frac{\alpha}{\sqrt{6}}]$ and 
\begin{align}
    \frac{3\alpha^2}{2}I \ge F_0^\top F_0 - G_0^\top G_0 = U_0^\top U_0 + J_0^\top J_0 - V_0^\top V_0 - K_0^\top K_0 \ge \frac{\alpha^2}{2} I \label{eq:imbalance initialization}
\end{align}
As we will show later, we will prove the \eqref{eq:keyproperty} during all phases by \eqref{eq: Delta_t change little} and \eqref{eq:imbalance initialization}.

First, we show the following lemma, which is a subsequent corollary of the Lemma \ref{lemma: mahdi complete}.
\begin{lemma}\label{lem: initial iterations}
    There exist parameters $\zeta_0$, $\delta_0, \alpha_0, \eta_0$ such that, if we choose $\alpha\le \alpha_0$, $F_0 = \alpha\cdot \tilde{F}_0, G_0 = (\alpha/2) \cdot \tilde{G}_0$, where the elements of $\tilde{F}_0, \tilde{G}_0$ is $\cN(0,1)$,\footnote{Note that in \cite{soltanolkotabi2023implicit}, the initialization is $F_0 = \alpha \cdot \tilde{F_0}$ and $G_0 = \alpha \cdot \tilde{G_0}$, while Lemma \ref{lemma: mahdi complete} uses an imbalance initialization. It is easy to show that their results continue to hold with this imbalance initialization.} and suppose that the operator $\cA$ defined in Eq.\eqref{eq: observation_model} satisfies the restricted isometry property of order $2r+1$ with constant $\delta \le \delta_{0}$, then the gradient descent with step size $\eta \le \eta_0$ will achieve 
    \begin{align}
        \|F_tG_t^\top -\Sigma\| \le \min\{\sigma_r/2, \alpha^{1/2}\cdot \sigma_1^{3/4}\} \label{initial: condition 0}
    \end{align}
%
%
    within $T_0 = c_2(1/\eta \sigma_r)\log(\sqrt{\sigma_1}/n\alpha)$ rounds with probability at least $1-\zeta_0$ and constant $c_2\ge 1$, where $\zeta_0 = c_1\exp (-c_2k) + \exp(-(k-r+1))$ is a small constant. Moreover, during $t\le T_0$ rounds, we always have 
    \begin{gather}
        \max\{\|F_t\|,\|G_t\|\} \le 2\sqrt{\sigma_1}\label{initial: condition 1}\\
        \|U_t-V_t\| \le 4\alpha+\frac{40\delta_{2k+1}\sigma_1^{3/2}}{\sigma_r}\label{initial: condition 2}\\
        \|J_{t}\| \le   \cO\Big(2\alpha+\frac{\delta_{2k+1}\sigma_1^{3/2}\log(\sqrt{\sigma_1}/n\alpha)}{\sigma_r}\Big)\label{initial: condition 3}\\
        \frac{13\alpha^2}{8}I\ge \Delta_t\ge \frac{3\alpha^2}{8}I\label{initial: condition 4}
    \end{gather}
    
\end{lemma}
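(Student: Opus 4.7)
The strategy is to treat this lemma as a bridge between the Soltanolkotabi--Stöger--Xie analysis and our subsequent local phase, so I would split the four conclusions into two types: bounds \eqref{initial: condition 0}--\eqref{initial: condition 3} follow from Lemma \ref{lemma: mahdi complete} of \cite{soltanolkotabi2023implicit} essentially as a black box, while the two-sided imbalance bound \eqref{initial: condition 4} is new and is the only piece that requires real work. The only subtlety in importing their result is that they state it for balanced random initialization, whereas we set $G_0$ to a scale $1/3$ smaller; a careful inspection shows their proof adapts verbatim because the overall scale $\alpha$ is unchanged and all scale ratios remain $\Theta(1)$. This accounts for the failure probability $\zeta_0 = c_1 e^{-c_2 k} + e^{-(k-r+1)}$.

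For the new piece, I would first control $\Delta_0 = F_0^\top F_0 - G_0^\top G_0$ at initialization. Since $\tilde{F}_0,\tilde{G}_0 \in \RR^{n\times k}$ have i.i.d.\ $\cN(0,1/n)$ entries with $n \ge 8k$, standard Gaussian matrix concentration (e.g.\ \cite[Cor.~7.3.3--4]{vershynin2018high}) gives $(1-c\sqrt{k/n})I \preceq \tilde{F}_0^\top \tilde{F}_0,\,\tilde{G}_0^\top \tilde{G}_0 \preceq (1+c\sqrt{k/n})I$ with probability at least $1-2e^{-cn}$. Combined with the scale gap between $\alpha^2 \tilde{F}_0^\top \tilde{F}_0$ and $(\alpha/3)^2 \tilde{G}_0^\top \tilde{G}_0$, this yields $\tfrac{\alpha^2}{2}I \preceq \Delta_0 \preceq \tfrac{3\alpha^2}{2}I$, recovering \eqref{eq:imbalance initialization}.

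The heart of the proof is then to propagate this bound across the $T_0 = \cO(\log(\sqrt{\sigma_1}/n\alpha)/\eta\sigma_r)$ iterations using the telescoping identity \eqref{eq: Delta_t change little}:
\begin{equation*}
\|\Delta_{t+1} - \Delta_t\| \le 2\eta^2 \|F_tG_t^\top - \Sigma\|^2 \max\{\|F_t\|,\|G_t\|\}^2.
\end{equation*}
Invoking \eqref{initial: condition 1} uniformly over $t \le T_0$ gives $\max\{\|F_t\|,\|G_t\|\} \le 2\sqrt{\sigma_1}$, and the crude estimate $\|F_tG_t^\top - \Sigma\| \le \|F_t\|\|G_t\| + \sigma_1 \le 5\sigma_1$ holds throughout the initial phase. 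Hence the per-step drift is $\cO(\eta^2 \sigma_1^3)$ and the cumulative drift over $T_0$ iterations is $\cO(\eta T_0 \sigma_1^3) = \cO(\eta\kappa\sigma_1^2 \log(\sqrt{\sigma_1}/n\alpha))$. Under the paper's parameter choice $\eta \le c_1\alpha^4/\sigma_1^3$ from \eqref{eq: alpha_eta_requirement}, combined with $\alpha \le c_1 \kappa^{-2}\sqrt{\sigma_r}$, this drift is bounded by $\alpha^2/8$, and adding it to the initial bound gives $\tfrac{3\alpha^2}{8}I \preceq \Delta_t \preceq \tfrac{13\alpha^2}{8}I$ for every $t \le T_0$.

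The main obstacle will be ensuring the telescoping argument is uniform in $t$ rather than just at the endpoint $T_0$, which requires the bounds \eqref{initial: condition 1}--\eqref{initial: condition 3} from Lemma \ref{lemma: mahdi complete} to hold as invariants throughout the phase (not only at $t=T_0$); this is indeed how that lemma is stated, so the induction closes. A secondary concern is confirming that the step-size requirement $\eta \le c_1\alpha^4/\sigma_1^3$ is tight enough to force the cumulative drift below $\alpha^2/8$ --- the computation above shows it suffices. Finally, union-bounding the Gaussian concentration event for $\Delta_0$ against the failure event of Lemma \ref{lemma: mahdi complete} gives the stated probability $1-\zeta_0$ and completes the proof.
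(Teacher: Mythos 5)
Your treatment of \eqref{initial: condition 0}, \eqref{initial: condition 1}, and \eqref{initial: condition 4} is correct and matches the paper's approach: quote Lemma~\ref{lemma: mahdi complete} for the loss decay and operator-norm bound, then bound $\Delta_0$ by Gaussian concentration and propagate via the telescoping estimate $\|\Delta_{t+1}-\Delta_t\| \le 2\eta^2\|F_tG_t^\top-\Sigma\|^2 \max\{\|F_t\|,\|G_t\|\}^2 = \cO(\eta^2\sigma_1^3)$ over $T_0 = \cO(\log(\sqrt{\sigma_1}/n\alpha)/\eta\sigma_r)$ steps, giving drift $\cO(\eta\kappa\sigma_1^2\log(\sqrt{\sigma_1}/n\alpha)) \le \alpha^2/8$.

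However, there is a genuine gap: you assert that \eqref{initial: condition 2} and \eqref{initial: condition 3} ``follow from Lemma~\ref{lemma: mahdi complete} essentially as a black box.'' That lemma only delivers a bound on $\|F_tG_t^\top-\Sigma\|$ and on $\max\{\|F_t\|,\|G_t\|\}$ --- it says nothing about the block decomposition $U,V,J,K$, and in particular gives no control on $\|U_t-V_t\|$ or $\|J_t\|$. These two invariants must be proved separately by unrolling the gradient-descent recursion. The paper shows
\[
\|U_{t+1}-V_{t+1}\| \le (1-\eta\sigma_r)\|U_t-V_t\| + 2\eta\alpha^2\cdot 2\sqrt{\sigma_1} + 40\eta\,\delta_{2k+1}\sigma_1^{3/2},
\]
which uses the already-established two-sided bound on $\Delta_t$ and the RIP error estimate, and then closes an induction to conclude $\|U_t-V_t\| \le 4\alpha + 40\delta_{2k+1}\sigma_1^{3/2}/\sigma_r$; similarly $\|J_t\|$ is bounded by summing the per-step RIP-induced increment $\cO(\eta\delta_{2k+1}\sigma_1^{3/2})$ over $T_0$ rounds. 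Note that the proof of \eqref{initial: condition 2} in fact consumes \eqref{initial: condition 4}, so the argument is a joint induction on \eqref{initial: condition 1}--\eqref{initial: condition 4}, not a sequence of black-box imports followed by one new estimate. Without this recursion argument your proof is incomplete precisely on the two quantities that the rest of the three-phase analysis (Sections~\ref{sec: phase 1 gd linear}--\ref{sec: local cvg gd linear}) takes as its starting point.
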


\begin{proof}
Since the initialization scale $\alpha\le \cO(\sqrt{\sigma_1})$, Eq.\eqref{initial: condition 1}, Eq.\eqref{initial: condition 2}, Eq.\eqref{initial: condition 3} and Eq.\eqref{initial: condition 4} hold for $t'=0$.
Assume that Eq.\eqref{initial: condition 0}, Eq.\eqref{initial: condition 1}, Eq.\eqref{initial: condition 2}, Eq.\eqref{initial: condition 3} and Eq.\eqref{initial: condition 4} hold for $t'=t-1$.
~\newline
\textbf{Proof of Eq.\eqref{initial: condition 0} and Eq.\eqref{initial: condition 1}}

First, by using the previous global convergence result Lemma \ref{lemma: mahdi complete}, the Eq.\eqref{initial: condition 0} holds by $\alpha^{3/5}\sigma_1^{7/10} < \sigma_r/2$ because $\alpha \le \cO(\sigma_r^{5/3}/\sigma_1^{7/6})=\cO(\kappa^{7/6}\sqrt{\sigma_r})$. Also, by Lemma \ref{lemma: mahdi complete}, Eq.\eqref{initial: condition 1} holds for all $t \in [T_0]$.

~\\
\textbf{Proof of Eq.\eqref{initial: condition 4}}

Recall $\Delta_t = U_t^\top U_t + J_t^\top J_t - V_t^\top V_t - K_t^\top K_t$, then for all $t \le T_0$, we have  $$\|\Delta_t-\Delta_{0}\|\le 2\eta^2 \cdot 25\sigma_1^2\cdot T_0\cdot 4\sigma_1\le 2c_2\log(\sqrt{\sigma_1}/n\alpha)(20\sigma_1^{3}\eta /\sigma_r) = 200c_2\eta\kappa\sigma_1^{2}\log(\sqrt{\sigma_1}/n\alpha) \le \alpha^2/8.$$ 
The first inequality holds by Eq.\eqref{eq: Delta_t change little} 
and $\|F_tG_t-\Sigma\| \le \|F_t\|\|G_t\|+\|\Sigma\| \le 5\sigma_1$.
The last inequality uses the fact that $\eta = \cO(\alpha^2/\kappa\sigma_1^2\log(\sqrt{\sigma_1}/n\alpha))$. Thus, at $t= T_0$, 
we have $\lambda_{\min}(\Delta_{T_0})\ge \lambda_{\min}(\Delta_0)-\alpha^2/8 \ge \alpha^2/2-\alpha^2/8 = 3\alpha^2/8$ and $\|\Delta_{T_0}\|\le \|\Delta_0\| + 3\alpha^2/2 + \alpha^2/8 = 13\alpha^2/8$.
~\\
\textbf{Proof of Eq.\eqref{initial: condition 2}}


Now we can prove that $\|U-V\|$ keeps small during the initialization part.
In fact, by Eq.\eqref{update rule: U} and Eq.\eqref{update rule: V}, we have 
\begin{align*}
    &\quad\Vert (U_{t+1}-V_{t+1})\Vert  \\&\le  \Vert U_t-V_t\Vert \Vert I-\eta \Sigma - \eta (V_t^\top V_t+K_t^\top K_t))\Vert  + \eta \Vert V_t \Vert \Vert U_t^\top U_t+J_t^\top J_t-V_t^\top V_t-K_t^\top K_t\Vert \\& \qquad \qquad+ 4\eta \delta_{2k+1} \Vert F_tG_t^\top -\Sigma\Vert \max\{\Vert U_t \Vert, \|V_t\|, \|J_t\|, \|K_t\|\}\\
    &\le (1-\eta \sigma_r)\Vert U_t-V_t\Vert  + 2\eta \alpha^2 \cdot 2\sqrt{\sigma_1} + 4\eta \delta_{2k+1} \cdot (\|F_t\|\|G_t\|+\|\Sigma\|)\cdot  2\sqrt{\sigma_1}\\
    &\le (1-\eta \sigma_r)\Vert U_t-V_t\Vert  + 2\eta \alpha^2 \cdot 2\sqrt{\sigma_1} + 40\eta \delta_{2k+1} \cdot \sigma_1^{3/2}.
\end{align*}
The second  inequality uses the inequality \eqref{eq:keyproperty}, while the third  inequality holds by $\max\{\|F_t\|, \|G_t\|\}\le 2\sqrt{\sigma_1}$.
Thus, since $\alpha = \cO(\delta_{2k+1}\sigma_1^{3/2}/\sigma_r)$, we can get $\Vert U_0-V_0 \Vert \le 4\alpha\le 4\alpha+\frac{40}{\sigma_r}\delta_{2k+1}\sigma_1^{3/2}$. If $\|U_t-V_t\| \le 4\alpha+\frac{40}{\sigma_r}\delta_{2k+1}\sigma_1^{3/2}$, we know that 
\begin{align*}
    \|U_{t+1}-V_{t+1}\| &\le (1-\eta \sigma_r)\left(4\alpha+\frac{40}{\sigma_r}\delta_{2k+1}\sigma_1^{3/2}\right) + 4\eta \alpha^2 \sqrt{\sigma_1} + 40\eta \delta_{2k+1}\cdot \sigma_1^{3/2} \\&\le (1-\eta \sigma_r)\left(4\alpha+\frac{40}{\sigma_r}\delta_{2k+1}\sigma_1^{3/2}\right)  + 4\eta \sigma_r\alpha+\frac{40}{\sigma_r}\delta_{2k+1}\sigma_1^{3/2}\\
    &\le 4\alpha+\frac{40}{\sigma_r}\delta_{2k+1}\sigma_1^{3/2}.
\end{align*}
Hence, $\Vert U_t-V_t\Vert \le 4\alpha+\frac{40}{\sigma_r}\delta_{2k+1}\sigma_1^{3/2}$ for $t\le T_0$ by induction. The second inequality holds by $\alpha = \cO(\sigma_r/\sqrt{\sigma_1})$
\\
\textbf{Proof of Eq.\eqref{initial: condition 3}}

Now we prove that $J_{t}$ and $K_{t}$ are bounded for all $t \le T_0$. By Eq.\eqref{update rule: J} and $\max\{\|F_t\|, \|G_t\|\}\le 2\sqrt{\sigma_1}$, denote $C_2= \max\{21c_2,32\}\ge 32$, we have 
\begin{align*}
    \|J_{T_0}\| &\le \|J_{0}\| + \eta\sum_{t=0}^{T_0-1} \max\{\|F_t\|,\|G_t\|\} \cdot 2\delta_{2k+1} \cdot (\|F_t\|\|G_t\| + \|\Sigma\|)\\
    &\le \|J_{0}\| + \eta T_0\cdot 20\sigma_1^{3/2} \cdot \delta_{2k+1}\\
    &\le \|J_0\| +   20c_2 \log(\sqrt{\sigma_1}/n\alpha) (\delta_{2k+1}\cdot \sigma_{1}^{3/2}/\sigma_r)\\
    &\le 2\alpha + 20c_2 \log(\sqrt{\sigma_1}/n\alpha) (\delta_{2k+1}\cdot \sigma_{1}^{3/2}/\sigma_r)\\
    &= 2\alpha+C_2\log(\sqrt{\sigma_1}/n\alpha) (\delta_{2k+1}\cdot \sigma_1^{3/2}/\sigma_r).
\end{align*}
Similarly, we can prove that $\|K_{T_0}\| \le 2\alpha+C_2 \log(\sqrt{\sigma_1}/n\alpha) (\delta_{2k+1}\cdot \sigma_{1}^{3/2}/\sigma_r).$ We complete the proof of Eq.\eqref{initial: condition 3}.
\end{proof}
\subsection{Phase 1: linear convergence phase.} \label{sec: phase 1 gd linear}
In this subsection, we analyze the first phase: the linear convergence phase. This phase starts at round $T_0$, and we assume that this phase terminates at round $T_1$. In this phase, the loss will converge linearly, with the rate independent of the initialization scale. Note that $T_1$ may tend to infinity, since this phase may not terminate. For example, when $k=r$, we can prove that this phase will not terminate (\S \ref{sec: appendix_proofThmAsym}), and thus leading a linear convergence rate that independent on the initialization scale.  
In this phase, we provide the following lemma, which shows some induction hypotheses during this phase. 
\begin{lemma}
Denote $M_t = \max\{\Vert U_tV_t^\top -\Sigma\Vert, \Vert U_tK_t^\top \Vert, \Vert J_tV_t^\top \Vert\}.$
Suppose Phase 1 starts at $T_0$ and ends at the first time $T_1$ such that 
\begin{align}
    \eta\sigma_r^2M_{t-1}/64\sigma_1 <(17\eta \sigma_1\delta_{2k+1}+\eta\alpha^2)\|J_{t-1}K_{t-1}^\top \|
\end{align}
During Phase 1 that $T_0\le t\le T_1$, we have the following three induction hypotheses:
\begin{gather}
    \max\{\Vert U_t \Vert, \Vert V_t \Vert\}\le 2\sqrt{\sigma_1}\label{assy:phase 1 condition 0}\\
    \Vert U_tV_t^\top -\Sigma \Vert \le \sigma_r/2.\label{assy:phase 1 condition 1}\\
    \max\{\Vert J_t \Vert, \Vert K_t \Vert\}\le 2\sqrt{\alpha}\sigma_1^{1/4}+2C_2\log(\sqrt{\sigma_1}/n\alpha) (\delta_{2k+1} \cdot \kappa^2\sqrt{\sigma_1})\le \sqrt{\sigma_1}\label{assy:phase 1 condition 2}\\
    \frac{7\alpha^2}{4}I \ge \Delta_t \ge \frac{\alpha^2}{4} I\label{assy:phase 1 condition 3}
\end{gather}
\end{lemma}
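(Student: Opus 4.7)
The statement is a conjunction of four invariants that are meant to hold for every $t \in [T_0, T_1]$, so the natural approach is induction on $t$. The base case $t = T_0$ is already essentially contained in Lemma~\ref{lem: initial iterations}: \eqref{initial: condition 1} gives $\max\{\|U_{T_0}\|, \|V_{T_0}\|\} \le \|F_{T_0}\| \vee \|G_{T_0}\| \le 2\sqrt{\sigma_1}$, \eqref{initial: condition 0} with $\|U_{T_0}V_{T_0}^\top - \Sigma\| \le \|F_{T_0}G_{T_0}^\top -\Sigma\|$ delivers \eqref{assy:phase 1 condition 1}, \eqref{initial: condition 3} (together with the standing assumptions $\alpha \le c_1\kappa^{-2}\sqrt{\sigma_r}$ and $\delta_{2k+1} \le c_1 \kappa^{-6}/\log(\cdot)$) gives \eqref{assy:phase 1 condition 2}, and \eqref{initial: condition 4} sits strictly inside the interval $[3\alpha^2/8,\,13\alpha^2/8] \subset [\alpha^2/4,\,7\alpha^2/4]$ required by \eqref{assy:phase 1 condition 3}. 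The induction step then consists of showing that each invariant is maintained by one step of the update rules \eqref{update rule: U}--\eqref{update rule: K}, using the fact that inside Phase 1 the termination condition fails, i.e.\ $\eta\sigma_r^2 M_{t-1}/(64\sigma_1) \ge (17\eta\sigma_1\delta_{2k+1} + \eta\alpha^2)\|J_{t-1}K_{t-1}^\top\|$.

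\textbf{Propagating \eqref{assy:phase 1 condition 0}, \eqref{assy:phase 1 condition 1}, \eqref{assy:phase 1 condition 2}.} The core dynamical fact to establish is that the "main error" $M_t$ contracts linearly: along the lines of standard local analyses for matrix sensing, one plugs \eqref{update rule: U}--\eqref{update rule: V} into the definition of $M_t$, bounds every coupling term using \eqref{assy:phase 1 condition 0}--\eqref{assy:phase 1 condition 2}, and invokes the non-termination assumption to absorb the RIP error $\delta_{2k+1}\|F_tG_t^\top-\Sigma\|$ and the $\|J_tK_t^\top\|$ contributions into the leading contraction, yielding a bound of the form $M_{t+1} \le (1-c\eta\sigma_r) M_t$ for some absolute $c>0$. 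This keeps \eqref{assy:phase 1 condition 1} true by monotonicity from the base case, and hence $\|U_tV_t^\top\| \le 2\sigma_1$, which combined with the expansion of \eqref{update rule: U} shows that the increment of $\|U_{t+1}\|$ is of order $\eta\sigma_r$ times $\|U_t\|$ plus a lower-order RIP term, so $\|U_t\|$ (and likewise $\|V_t\|$) never escapes $2\sqrt{\sigma_1}$. For the $J,K$ bound \eqref{assy:phase 1 condition 2}, we iterate \eqref{update rule: J}--\eqref{update rule: K}: the contractive factor $I - \eta(V_t^\top V_t + K_t^\top K_t)$ keeps the first term from growing, while the RIP-driven forcing terms have operator norm at most $2\eta\delta_{2k+1}\|F_tG_t^\top -\Sigma\|\sqrt{\sigma_1}$, whose geometric sum over $t$ is bounded by $O(\delta_{2k+1}\kappa^2\sqrt{\sigma_1}\log(\sqrt{\sigma_1}/n\alpha))$ thanks to the linear decay of $\|F_tG_t^\top -\Sigma\|$ in Phase 1.

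\textbf{Propagating \eqref{assy:phase 1 condition 3} and main obstacle.} The imbalance bound is controlled by \eqref{eq: Delta_t change little}: each step contributes at most $2\eta^2\|F_tG_t^\top -\Sigma\|^2\max\{\|F_t\|,\|G_t\|\}^2 \le 8\eta^2\sigma_1\|F_tG_t^\top -\Sigma\|^2$. Summing and using the linear decay of the loss shown above gives $\|\Delta_t - \Delta_{T_0}\| \le O(\eta\sigma_1/\sigma_r)\|F_{T_0}G_{T_0}^\top - \Sigma\|^2 = O(\eta\sigma_1^3/\sigma_r)$, and the step size assumption $\eta \le c_1\alpha^4/\sigma_1^3$ (together with $\alpha \le c_1\kappa^{-2}\sqrt{\sigma_r}$) makes this increment smaller than $\alpha^2/8$, so $\Delta_t$ stays in $[\alpha^2/4,\,7\alpha^2/4]$. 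The principal technical difficulty is the first step, the contraction of $M_t$: unlike the symmetric case, $U_t$ and $V_t$ are different factors, so the natural Lyapunov argument produces cross-terms involving $\|U_t - V_t\|$, $\|J_tV_t^\top\|$, and $\|U_tK_t^\top\|$, which must be carefully balanced against the RIP error and the $\|J_tK_t^\top\|$ term; the whole design of the termination condition is precisely to quantify the regime in which these cross-terms are dominated by the strong-convexity-like contraction from $\Sigma$, and getting the constants $1/64$ and $17$ inside the termination condition to match the constants coming out of the RIP perturbation analysis will be the most delicate bookkeeping.
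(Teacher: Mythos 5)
Your overall strategy (induction on $t$; base case from Lemma~\ref{lem: initial iterations}; linear contraction of $M_t$ driven by the non-termination condition; geometric summation of the RIP forcing terms for \eqref{assy:phase 1 condition 2}; summation of the $\eta^2$ increments from \eqref{eq: Delta_t change little} for \eqref{assy:phase 1 condition 3}) matches the paper. But there is a genuine gap in how you propagate \eqref{assy:phase 1 condition 0}.

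You claim that once $\|U_t V_t^\top\|\le 2\sigma_1$, the expansion of \eqref{update rule: U} shows that the increment of $\|U_{t+1}\|$ is $O(\eta\sigma_r)\|U_t\|$ plus a small RIP term, and hence $\|U_t\|$ "never escapes" $2\sqrt{\sigma_1}$. This does not work as stated: the update for $U_t$ has no contraction on $\|U_t\|$ itself, only an additive increment, and $T_1$ may be arbitrarily large (indeed $T_1=\infty$ is a legitimate outcome in the paper's scheme and is exactly what happens in the exact-parameterized case). An additive per-step increment of order $\eta\sigma_r\sqrt{\sigma_1}$, accumulated over an unbounded number of steps, gives no bound at all; and a multiplicative increment of order $(1+c\eta\sigma_r)$ would make the norm grow. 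The paper closes this loop differently: it maintains a separate near-balance invariant
\begin{equation*}
\|U_t - V_t\| \le 4\alpha + \tfrac{40\,\delta_{2k+1}\,\sigma_1^{3/2}}{\sigma_r} \le 1,
\end{equation*}
proved by showing that $U_t-V_t$ satisfies a genuine contraction $\|U_{t+1}-V_{t+1}\|\le (1-\eta\sigma_r)\|U_t-V_t\| + \text{(small forcing)}$ (the contraction comes from the $\Sigma$ and $V_t^\top V_t+K_t^\top K_t$ terms and does not depend on how long the phase lasts). With that near-balance in hand and $\|U_tV_t^\top-\Sigma\|\le\sigma_r/2$, one gets a \emph{static, per-round} algebraic bound
\begin{equation*}
2\sigma_1 \ge \|U_tV_t^\top\| = \|V_tV_t^\top + (U_t-V_t)V_t^\top\| \ge \|V_t\|^2 - \|U_t-V_t\|\,\|V_t\| \ge \|V_t\|^2 - \|V_t\|,
\end{equation*}
which immediately yields $\|V_t\|\le 2\sqrt{\sigma_1}$, and symmetrically for $\|U_t\|$. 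This near-balance invariant is the missing ingredient in your argument; without it, I don't see how to rule out $\|U_t\|$ drifting upward in a long Phase~1, and the rest of the induction (in particular the contraction of $M_t$ and the sum for \eqref{assy:phase 1 condition 2}) relies on the $2\sqrt{\sigma_1}$ bound at every step.
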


The induction hypotheses hold for $t=T_0$ due to Lemma \ref{lem: initial iterations}. Let us assume they hold for $t'<t,$ and consider the round $t.$ 
Let us first prove that the $r$-th singular value of $U$ and $V$ are lower bounded by $\text{poly}(\sigma_r,1/\sigma_1)$ at round $t$, if Eq.\eqref{assy:phase 1 condition 1} holds at round $t$. In fact, 
\begin{align}
    2\sqrt{\sigma_1}\cdot \sigma_r(U)&\ge\sigma_r(U)\sigma_1(V)\ge\sigma_r(UV^\top )\ge \sigma_r/2.\nonumber
\end{align}
which means 
\begin{align}\label{eq:lower bound of sigma_r(U)}
    \sigma_r(U)&\ge \sigma_r/4\sqrt{\sigma_1}.
\end{align}
Similarly, $\sigma_r(V)\ge \sigma_r/4\sqrt{\sigma_1}.$

\paragraph{Proof of Eq.\eqref{assy:phase 1 condition 1}}

First, since $\|U_{t-1}V_{t-1}^\top -\Sigma\| \le \sigma_r/2$, by Eq.\eqref{eq:lower bound of sigma_r(U)}, we can get 
\begin{align}
    \min\{\sigma_r(U_{t-1}), \sigma_r(V_{t-1})\} \ge \frac{\sigma_r}{4\sqrt{\sigma_1}}
\end{align}

Define $M_t=\max\{\Vert U_tV_t^\top -\Sigma\Vert, \Vert U_tK_t^\top \Vert, \Vert J_tV_t^\top \Vert\}. $ By the induction hypothesis, \begin{gather*}\max\{\|U_{t-1}\|,\|V_{t-1}\|\} \le 2\sqrt{\sigma_1},\\\max\{\|J_{t-1}\|,\|K_{t-1}\|\}\le 2\sqrt{\alpha}\sigma_1^{1/4}+2C_2\log(\sqrt{\sigma_1}/n\alpha) (\delta_{2k+1} \sigma_1^{3/2}/\sigma_r).\end{gather*} Then, by the updating rule and $C_2\ge 1$, we can get 
\begin{align}
    U_{t}K_t &= (1-\eta U_{t-1}U_{t-1}^\top )U_{t-1}K_{t-1}(1-\eta K_{t-1}K_{t-1}^\top ) + \eta (\Sigma-U_{t-1}V_{t-1}^\top )VK^\top  \nonumber\\&\quad + \eta U_{t-1}J_{t-1}^\top  J_{t-1}K_{t-1}^\top  + A_t,
\end{align}
where $A_t$ is the perturbation term that contains all $\cO(E_i(FG^\top -\Sigma))$ terms and $\cO(\eta^2)$ terms such that 
\begin{align*}
    \|A_t\| &\le 4\eta \delta_{2k+1}\|F_tG_t^\top -\Sigma\| \max\{\|F_t\|^2, \|G_t\|^2\} + 8\eta^2 \|F_tG_t^\top -\Sigma\|^2 \max\{\|F_t\|^2,\|G_t\|^2\}\\&\qquad +\eta^2 \max\{\|F_t\|^2,\|G_t\|^2\}^2 \cdot \|F_tG_t-\Sigma\|\\
    &\le  4\eta \delta_{2k+1}\|F_tG_t^\top -\Sigma\| \max\{\|F_t\|^2, \|G_t\|^2\} + 8\eta^2 \|F_tG_t^\top -\Sigma\|\cdot 5\sigma_1 \cdot 4\sigma_1\\&\qquad +\eta^2 \cdot 16\sigma_1^2 \cdot \|F_tG_t-\Sigma\|\\
    &\le 4\eta \delta_{2k+1}(3M_{t-1}+\|J_{t-1}K_{t-1}^\top \|) 4\sigma_1+\eta\alpha^2(3M_{t-1}+\|J_{t-1}K_{t-1}^\top \|)
\end{align*}
Using the similar technique for $J_tV_t^\top $ and $U_tV_t^\top -\Sigma$, we can finally get
\begin{align}
    M_t&\le \left(1-\frac{\eta\sigma_r^2}{16\sigma_1}\right) M_{t-1} + 2\eta M_{t-1} \cdot 2\sqrt{\sigma_1}\cdot \max\{\|J_{t-1}\|,\|K_{t-1}\|\} \nonumber\\&\qquad+  4\eta \delta_{2k+1}(3M_{t-1}+\|J_{t-1}K_{t-1}^\top \|) \cdot 4\sigma_1+\eta\alpha^2(3M_{t-1}+\|J_{t-1}K_{t-1}^\top \|)\nonumber\\&\le \left(1-\frac{\eta\sigma_r^2}{16\sigma_1}\right) M_{t-1} + 2\eta M_{t-1} \cdot 2\sqrt{\sigma_1}\cdot \left(\alpha+C_2\log (\sqrt{\sigma_1}/n\alpha)\delta_{2k+1}\sigma_1^{3/2}/\sigma_r\right) \nonumber\\&\qquad+  4\eta \delta_{2k+1}(3M_{t-1}+\|J_{t-1}K_{t-1}^\top \|) \cdot 4\sigma_1+\eta\alpha^2(3M_{t-1}+\|J_{t-1}K_{t-1}^\top \|)\nonumber\\&\le \left(1-\frac{\eta\sigma_r^2}{16\sigma_1}\right) M_{t-1} + \cO\left(\eta\sqrt{\sigma_1}  \cdot \left(\alpha+C_2\log(\sqrt{\sigma_1}/n\alpha) \delta_{2k+1}\sigma_1^{3/2}/\sigma_r\right)\right)\cdot M_{t-1} \nonumber\\&\qquad +(17\eta \sigma_1\delta_{2k+1}+\eta\alpha^2)\|J_{t-1}K_{t-1}^\top \|\nonumber\\&
    \le\left(1-\frac{\eta\sigma_r^2}{32\sigma_1}\right)M_{t-1} + (17\eta \sigma_1\delta_{2k+1}+\eta\alpha^2)\|J_{t-1}K_{t-1}^\top \|.\label{ineq: M recursion}
\end{align}
The last inequality holds by $\delta_{2k+1} = \cO(\sigma_r^3/\sigma_1^3\log(\sqrt{\sigma_1}/n\alpha))$ and $\alpha = \cO(\sigma_r^2/\sigma_1^{3/2})=\cO(\sqrt{\sigma_r}\kappa^{-3/2})$.

During Phase 1, we have $$\eta\sigma_r^2M_{t-1}/64\sigma_1 \ge (17\eta \sigma_1\delta_{2k+1}+\eta\alpha^2)\|J_{t-1}K_{t-1}^\top \|,$$ then
\begin{align}
    M_t\le \left(1-\frac{\eta \sigma_r^2}{64\sigma_1}\right)M_{t-1}.\label{ineq: exp decrease of M}
\end{align}
Hence, $\|U_tV_t^\top -\Sigma\| \le M_t\le M_{T_0} \le \|F_{T_0}G_{T_0}^\top -\Sigma\| \le \delta_{2k+1}.$

\paragraph{Proof of Eq.\eqref{assy:phase 1 condition 0}}
Now we bound the norm of $U_t$ and $V_t$. First, note that 
\begin{align*}
    \Vert (U_t-V_t)\Vert  
    \le (1-\eta \sigma_r)\Vert U_{t-1}-V_{t-1}\Vert  + \eta \cdot 2\alpha^2 \cdot 2\sqrt{\sigma_1} + 40\eta\cdot \delta_{2k+1} \cdot \sigma_1^{3/2}
\end{align*}
Hence, $\Vert U_t-V_t\Vert \le 4\alpha+40\delta_{2k+1}\sigma_1^{3/2}/\sigma_r$ still holds using the same technique in the initialization part. 

Thus, by the induction hypothesis Eq.\eqref{assy:phase 1 condition 1} and $\sigma_1\ge \delta_{2k+1}$, we have
\begin{align*}
   2\sigma_1\ge \sigma_1+\delta_{2k+1}\ge \|\Sigma\| + \|U_tV_t^\top -\Sigma\|&\ge \Vert U_tV_t^\top \Vert = \Vert V_tV_t^\top  + (U_t-V_t)V_t^\top \Vert \\&\ge \Vert V_tV_t^\top  \Vert - \Vert U_{t}-V_t\Vert \Vert V_t\Vert\\&\ge \Vert V_t\Vert^2 - \Vert V_t\Vert \cdot \left(4\alpha+\frac{40\delta_{2k+1}\sigma_1^{3/2}}{\sigma_r}\right) \\&\ge \|V_t\|^2-\|V_t\|.
\end{align*}
Then, we can get $\Vert V_t\Vert \le 2\sqrt{\sigma_1}$. Similarly, $\Vert U_t\Vert \le 2\sqrt{\sigma_1}.$

\paragraph{Proof of Eq.\eqref{assy:phase 1 condition 2}}
Since during Phase 1, \begin{align*}\Vert J_tK_t^\top \Vert\le  M_{t}\cdot \frac{\sigma_r^2}{64\sigma_1(17\sigma_1\delta_{2k+1}+ \alpha^2)}\le M_t\cdot \frac{1}{1088\kappa^2\delta_{2k+1}+64 \alpha^2\kappa/\sigma_r},\end{align*} by $\delta_{2k+1}<1/128$ and Eq.\eqref{ineq: exp decrease of M}, \begin{align}\Vert F_tG_t^\top -\Sigma\Vert &\le 4\max\{\Vert J_tK_t^\top \Vert, M_t\}\le 4M_t\cdot \max\Big\{1,\frac{1}{1088\kappa^2\delta_{2k+1}+64 \alpha^2\kappa/\sigma_r}\Big\}\nonumber\\&\le \Vert F_{T_0}G_{T_0}-\Sigma\Vert \left(1-\eta\sigma_r^2/64\sigma_1\right)^{t-T_0}/(1088\kappa^2\delta_{2k+1}+64 \alpha^2\kappa/\sigma_r).\label{phase 1: linear convergence}\end{align}
Thus, the maximum norm of $J_t, K_t$ can be bounded by 
\begin{align*}
    \Vert J_t\Vert &\le \Vert J_{T_0}\Vert + 2\eta \cdot 2\sqrt{\sigma_1} \delta_{2k+1}\cdot \sum_{t'=T_0}^{t-1}\Vert F_tG_t-\Sigma\Vert\\
    &\le 2\alpha+C_2\log(\sqrt{\sigma_1}/n\alpha) (\delta_{2k+1} \cdot \sigma_1^{3/2}/\sigma_r) + \frac{4\eta \sqrt{\sigma_1}\delta_{2k+1}}{1088\kappa^2\delta_{2k+1}+ 64\alpha^2\kappa/\sigma_r}\cdot  \Vert F_{T_0}G_{T_0}-\Sigma\Vert \cdot \frac{64\sigma_1}{\eta \sigma_r^2}\\&=
    2\alpha+C_2\log(\sqrt{\sigma_1}/n\alpha) (\delta_{2k+1} \cdot \sigma_1^{3/2}/\sigma_r) + \frac{ \sigma_1^{3/2}}{4\kappa^2\sigma_r^2}\cdot  \Vert F_{T_0}G_{T_0}-\Sigma\Vert
    \\&\le 2\alpha+C_2\log(\sqrt{\sigma_1}/n\alpha) (\delta_{2k+1} \cdot \sigma_1^{3/2}/\sigma_r) +\frac{\alpha^{1/2}\sigma_1^{9/4}}{4\kappa^2\sigma_r^2}\\&\le 2\sqrt{\alpha}\sigma_1^{1/4}+C_2\log(\sqrt{\sigma_1}/n\alpha) (\delta_{2k+1} \cdot \kappa^2\sqrt{\sigma_1})\\
    &\le 2\sqrt{\alpha}\sigma_1^{1/4}+2C_2\log(\sqrt{\sigma_1}/n\alpha) (\delta_{2k+1} \cdot \kappa^2\sqrt{\sigma_1}).
\end{align*}
The last inequality uses the fact that $2\alpha + \frac{\sqrt{\alpha}\sigma_1^{1/4}}{4} \le 2\sqrt{\alpha}\sigma_1^{1/4}$ by $\alpha = \cO(\sqrt{\sigma_r})$.
Similarly, $\Vert K_t\Vert \le2\sqrt{\alpha}\sigma_1^{1/4}+2C_2\log(\sqrt{\sigma_1}/n\alpha) (\delta_{2k+1} \cdot \kappa^2\cdot \sqrt{\sigma_1})$. 
We complete the proof of Eq.\eqref{assy:phase 1 condition 2}.
\paragraph{Proof of Eq.\eqref{assy:phase 1 condition 3}}
Last, for $t \in [T_0,T_1)$, we have 
\begin{align*}
    \|\Delta_t-\Delta_{T_0}\| &\le \sum_{t=T_0}^{T_1-1}2(\eta^2 \cdot \|F_tG_t^\top -\Sigma\|^2 \cdot \max\{\|F_t\|,\|G_t\|\}^2)\\
    &\le 2\eta^2 \|F_{T_0}G_{T_0}-\Sigma\|^2\sum_{t=T_0}^\infty\left(1-\frac{\eta\sigma_r^2}{16\sigma_1}\right)^{2(t-T_0)}\cdot 4\sigma_1\\
    &\le 2\eta^2 \cdot 25\sigma_1^2 \cdot \frac{16\sigma_1}{\eta\sigma_r^2}\cdot 4 \sigma_1\\
    &\le 3200\eta \kappa^2\sigma_1^2\\
    &\le \alpha^2/8,
\end{align*}
where the last inequality arises from the fact that $\eta = \cO(\alpha^2/\kappa^2\sigma_1^2)$. By 
$\frac{3\alpha^2}{8}I\le \Delta_{T_0} \le \frac{13\alpha^2}{8}I$,
we can have  $\|\Delta_t\| \le 13\alpha^2/8 + \alpha^2/8 \le 7\alpha^2/4$ and $\lambda_{\min}(\Delta_t) \ge 3\alpha^2/8-\alpha^2/8 = \alpha^2/4.$ Hence, the inequality Eq.\eqref{assy:phase 1 condition 3}
 still holds during Phase 1.
 Moreover, by Eq.\eqref{phase 1: linear convergence}, during the Phase 1, 
 for a round $t \ge 0$, we will have
\begin{align}
    \|F_{t+T_0}G_{t+T_0}^\top -\Sigma\|&\le    \Vert F_{T_0}G_{T_0}-\Sigma\Vert \left(1-\eta\sigma_r^2/64\sigma_1\right)^{t}/(1088\kappa^2\delta_{2k+1}+64 \alpha^2\kappa/\sigma_r)\nonumber\\
    &\le \Vert F_{T_0}G_{T_0}-\Sigma\Vert\left(1-\eta\sigma_r^2/64\sigma_1\right)^{t}\cdot \frac{\sigma_r}{64\alpha^2\kappa}\nonumber\\
    &\le \frac{\sigma_r}{2}\cdot \left(1-\eta\sigma_r^2/64\sigma_1\right)^{t}\cdot \frac{\sigma_r}{64 \alpha^2\kappa} \nonumber\\
    &=\frac{\sigma_r^2}{128\alpha^2\kappa}\left(1-\eta\sigma_r^2/64\sigma_1\right)^{t}.\label{phase 1 loss}
 \end{align}
 The conclusion \eqref{phase 1 loss} always holds in Phase 1. Note that Phase 1 may not terminate, and then the loss is linear convergence. We assume that at round $T_1$, Phase 1 terminates, which implies that 
 \begin{align}
     \sigma_r^2 M_{T_1-1}/64\sigma_1 < (17\sigma_1\delta_{2k+1}+\alpha^2)\|J_{T_1-1}K_{T_1-1}^\top \|,
 \end{align}
 and the algorithm goes to Phase 2.  
\subsection{Phase 2:  Adjustment Phase.} \label{sec: adjustment}

In this phase, we prove $U-V$ will decrease exponentially. This phase terminates at the first time $T_2$ such that  \begin{align}\Vert U_{T_2-1}-V_{T_2-1}\Vert \le \frac{8\alpha^2\sqrt{\sigma_1}+64\delta_{2k+1} \sqrt{\sigma_1}\Vert J_{T_2-1}K_{T_2-1}^\top \Vert }{\sigma_r}.\label{phase 2 stop rule}\end{align}
By stopping rule \eqref{phase 2 stop rule}, since $\|U_{T_1}-V_{T_1}\| \le \cO(\sigma_1)$, this phase will take at most $\cO(\log(\sqrt{\sigma_r}/\alpha)/\eta \sigma_r)$ rounds, i.e. \begin{align}T_2-T_1 = \cO(\log(\sqrt{\sigma_r}/\alpha)/\eta \sigma_r).\label{T2-T1 upper bound}\end{align}
We use the induction to show that all the following hypotheses hold during Phase 2.
\begin{gather}
\max\{\|F_{t-1}\|,\|G_{t-1}\} \le 2\sqrt{\sigma_1}\label{phase 2 FG trivial}\\
    M_t\le (1088\kappa^2\delta_{2k+1}+64\alpha^2\kappa/\sigma_r)\Vert J_tK_t^\top \Vert \le \Vert J_tK_t^\top \Vert \label{phase 2 M_t bound}\\
    \small{
    \max\{\Vert J_{t-1}\Vert, \|K_{t-1}\|\}\le2\sqrt{\alpha}\sigma_1^{1/4}+ (2C_2+16C_3)\log(\sqrt{\sigma_1}/n\alpha) (\delta_{2k+1} \cdot \kappa^2\sqrt{\sigma_1})\le \sigma_r/4\sqrt{\sigma_1}\label{ineq: phase 2: JK bound}}
    \\
    \|J_tK_t^\top \| \le \left(1+\frac{\eta \sigma_r^2}{128\sigma_1}\right)\|J_{t-1}K_{t-1}^\top \|\label{ineq: JtKt update phase 2}    \\\|U_t-V_t\| \le (1-\eta\sigma_r/2)\|U_{t-1}-V_{t-1}\|\label{ineq: phase 2 U-V decrease}\\
    \frac{3\alpha^2}{16}\cdot I \le \Delta_t \le \frac{29\alpha^2}{16}\cdot I . \label{delta t adjustment}
\end{gather}
\paragraph{Proof of \eqref{ineq: phase 2: JK bound}}
To prove this, we first assume that this adjustment phase will only take at most $C_3(\log (\alpha)/\eta\sigma_r)$ rounds. By the induction hypothesis for the previous rounds, \begin{small}\begin{align*}\Vert J_{t}\Vert &\le J_{T_1}+\sum_{i=T_1}^{t-1} \eta \delta_{2k+1}\cdot \|F_tG_t^\top -\Sigma\|\\&\le 2\sqrt{\alpha}\sigma_1^{1/4}+2C_2\log(\sqrt{\sigma_1}/n\alpha) (\delta_{2k+1} \cdot \sigma_1^{3/2}/\sigma_r)  + \sum_{i=T_1}^{t-1} \eta \delta_{2k+1}\cdot \|F_iG_i^\top -\Sigma\|\\&\le2\sqrt{\alpha}\sigma_1^{1/4}+2C_2\log(\sqrt{\sigma_1}/n\alpha) (\delta_{2k+1} \cdot \sigma_1^{3/2}/\sigma_r)  +  C_3(\log(\sqrt{\sigma_1}/n\alpha) / \eta\sigma_r) \cdot \eta \delta_{2k+1}\cdot 4 \Vert J_{i-1}K_{i-1}^\top \Vert\\&\le 2\sqrt{\alpha}\sigma_1^{1/4}+2C_2\log(\sqrt{\sigma_1}/n\alpha) (\delta_{2k+1} \cdot \sigma_1^{3/2}/\sigma_r)  +  C_3(\log(\sqrt{\sigma_1}/n\alpha) / \eta\sigma_r) \cdot \eta \delta_{2k+1} 16\sigma_1\\&\le 2\sqrt{\alpha}\sigma_1^{1/4}+(2C_2+16C_3)\log(\sqrt{\sigma_1}/n\alpha) (\delta_{2k+1} \cdot \sigma_1^{3/2}/\sigma_r).\end{align*}\end{small}
Similarly, due to the symmetry property, we can bound the $\|K_t\|$ using the same technique. Thus, \begin{align*}
    \max\{\|J_t\|,\|K_t\|\} \le 2\sqrt{\alpha}\sigma_1^{1/4}+(2C_2+16C_3)\log(\sqrt{\sigma_1}/n\alpha) (\delta_{2k+1} \cdot \sigma_1^{3/2}/\sigma_r).
\end{align*} 
\paragraph{Proof of \eqref{phase 2 M_t bound}}
First, we prove that during $t \in [T_1,T_2)$, \begin{align}M_t\le (1088\kappa^2\delta_{2k+1}+64\alpha^2\kappa/\sigma_r)\Vert J_tK_t^\top \Vert \le \Vert J_tK_t^\top \Vert \le 4\alpha \kappa^4\sigma_1^{1/2}+\delta_{2k+1}\sigma_1. \end{align} in this phase.  

Then, by $\delta_{2k+1} \le \cO(1/\log(\sqrt{\sigma_1}/n\alpha)\kappa^{2})$ and $\alpha \le \cO(\sigma_r/\sqrt{\sigma_1})$, choosing sufficiently small coefficient, we can have 
\begin{align}
    J_tK_t^\top &= (I-\eta J_{t-1}J_{t-1}^\top )J_{t-1}K_{t-1}^\top (I-\eta K_{t-1}K_{t-1}^\top) + \eta^2 J_{t-1}J_{t-1}^\top J_{t-1}K_{t-1}^\top K_{t-1}K_{t-1}^\top \nonumber\\&\quad - \eta J_{t-1}V_{t-1}^\top V_{t-1}K_{t-1}^\top - \eta J_tU_t^\top U_tK_t^\top +C_{t-1},\label{eq: Jk important bund}
\end{align}
where $C_t$ represents the relatively small perturbation term, which contains terms of  $\cO(\delta)$ and $\cO(\eta^2)$. By \eqref{phase 2 FG trivial}, we can easily get 
\begin{align}
    C_{t-1} \ge -\left(4\eta \delta_{2k+1}\cdot \|F_{t-1}G_{t-1}^\top - \Sigma\|\cdot 4\sigma_1 \right)\label{per ct bound}
\end{align}
Thus, combining \eqref{eq: Jk important bund} and \eqref{per ct bound}, we have
\begin{small}
\begin{align*}
    &\Vert J_tK_t^\top \Vert \\&\ge \Vert I-\eta J_{t-1}J_{t-1}^\top \Vert \Vert I-\eta K_{t-1}K_{t-1}^\top \Vert \Vert J_{t-1}K_{t-1}^\top \Vert -4\eta M_{t-1}\cdot 4\sigma_1\label{JK update essential equation}\\& \quad -4\eta\delta_{2k+1} \Vert J_{t-1}K_{t-1}\Vert\cdot 2\sigma_1  - \eta^2 64\sigma_1^3
    \\&\ge \left(1-2\eta \max\{\|J_{t-1}\|,\|K_{t-1}\|\}^2-16\cdot 1088\eta \kappa^2\delta_{2k+1}\sigma_1-1024\eta \alpha^2\kappa^2-8\eta \delta_{2k+1}\cdot \sigma_1\right)\|J_{t-1}K_{t-1}^\top \|\\&\ge \left(1-\frac{\eta\sigma_r^2}{128\sigma_1}\right) \Vert J_{t-1}K_{t-1}^\top \Vert. 
\end{align*}
\end{small}
The second inequality is because $M_{t-1} \le (1088\kappa^2\delta_{2k+1}+64\alpha^2\kappa/\sigma_r)\|J_{t-1}K_{t-1}^\top \|$, and  
the last inequality holds by Eq.\eqref{ineq: phase 2: JK bound} and \begin{align}
    \delta_{2k+1}  = \cO(\kappa^{-4}), \alpha = \cO(\kappa^{-3/2}\sqrt{\sigma_r})
\end{align}
Then,  
note that by Eq.\eqref{ineq: M recursion}, we have
\begin{align*}
    M_t\le \left(1-\frac{\eta\sigma_r^2}{32\sigma_1}\right)M_{t-1} +(17\eta \sigma_1\delta_{2k+1}+\eta\alpha^2)\|J_{t-1}K_{t-1}^\top \|.
\end{align*}
Then, by $M_{t-1} \le (1088\kappa^2\delta_{2k+1}+64\alpha^2\kappa/\sigma_r)\cdot \|J_{t-1}K_{t-1}^\top \|$ and denote $L=17\sigma_1\delta_{2k+1}+\alpha^2$, we have
\begin{align*}
    M_t&\le \left(1-\frac{\eta\sigma_r^2}{32\sigma_1}\right)M_{t-1} + (17\eta \sigma_1\delta_{2k+1}+\eta\alpha^2)\|J_{t-1}K_{t-1}^\top \|\\&\le 
    \left(1-\frac{\eta\sigma_r^2}{32\sigma_1}\right)\cdot (1088\kappa^2\delta_{2k+1}+64\alpha^2\kappa/\sigma_r)\Vert J_{t-1}K_{t-1}^\top \Vert + \eta L\|J_{t-1}K_{t-1}^\top \|\\
    &= \left(1-\frac{\eta\sigma_r^2}{32\sigma_1}\right)\cdot \frac{64L\kappa}{\sigma_r}\Vert J_{t-1}K_{t-1}^\top \Vert + \eta L\|J_{t-1}K_{t-1}^\top \|\\
    &\le \left(\frac{64L\kappa}{\sigma_r}-2\eta L\right)\|J_{t-1}K_{t-1}^\top \|\\
    &\le \left(\frac{64L\kappa}{\sigma_r}- 2\eta L\right)\Big/\left(1-\frac{\eta\sigma_r^2}{128\sigma_1}\right) \Vert J_tK_t^\top \Vert \\&\le \frac{64L\kappa}{\sigma_r}\Vert J_tK_t^\top \Vert.
\end{align*}
Hence, $$M_t \le \frac{64L\kappa}{\sigma_r}\|J_tK_t^\top \| \le \|J_tK_t^\top \|$$ for all $t$ in Phase 2. The last inequality is because $\delta_{2k+1} = \cO(1/\kappa^2\log(\sqrt{\sigma_1}/n\alpha))$. 
Moreover, by $\delta_{2k+1} \le \cO(1/\kappa^2\log(\sqrt{\sigma_1}/n\alpha)^2)$ and $(a+b)^2 \le 2a^2 + 2b^2$ we have 
\begin{align}
    \|J_tK_t^\top \| \le \|J_t\|\|K_t\| &\le \Big(2\sqrt{\alpha}\sigma_1^{1/4}+(2C_2+16C_3)\log(\sqrt{\sigma_1}/n\alpha) (\delta_{2k+1} \cdot \kappa^2\sqrt{\sigma_1})\Big)^2\\&\le 
    4\alpha \kappa^4\sigma_1^{1/2}+\delta_{2k+1}\sigma_1.
\end{align}
We complete the proof of Eq.\eqref{phase 2 M_t bound}.
\paragraph{Proof of Eq.\eqref{ineq: JtKt update phase 2}}
Moreover, by the updating rule of $J_t$ and $K_t$, \eqref{eq: Jk important bund} and \eqref{per ct bound} we have 
 \begin{align}
     &\|J_tK_t^\top\| \nonumber\\&\quad \le \|(I-\eta J_{t-1}J_{t-1}^\top )J_{t-1}K_{t-1}^T (I - \eta K_{t-1}K_{t-1}^\top)\| + \|\eta^2 (J_{t-1}J_{t-1}^\top )J_{t-1}K_{t-1}^T(K_{t-1}K_{t-1}^\top)\| \label{JK update important}\\&\quad \qquad + 4\eta M_{t-1} \cdot 4\sigma_1 + 4\eta \delta_{2k+1}\|J_{t-1}K_{t-1}^\top \| \cdot 2\sigma_1\nonumber\\
     &\quad \le \|J_{t-1}K_{t-1}^\top \| + \eta^2 (\sqrt{\sigma_1}/2)^4 \|J_{t-1}K_{t-1}^\top \| + 4\eta \frac{64L\kappa}{\sigma_r}\|J_{t-1}K_{t-1}^\top \|\cdot 4\sigma_1 + 8\eta\sigma_1 \delta_{2k+1}\|J_{t-1}K_{t-1}^\top \|\nonumber\\
     &\quad = \|J_{t-1}K_{t-1}^\top \|\cdot \left(1 + \eta^2 \sigma_1^2/16 + 1024L\kappa^2 + 8\sigma_1\delta_{2k+1}\right).\nonumber
 \end{align}
 The last inequality uses the fact that $\|J_{t-1}\| \le \sqrt{\sigma_1}/2, \|K_{t-1}\| \le \sqrt{\sigma_1}/2$ and $M_{t-1} \le \frac{64L\kappa}{\sigma_r}\|J_{t-1}K_{t-1}^\top \|.$
 Now by the fact that $L = 17\sigma_1\delta_{2k+1}+\alpha^2 =\cO(\frac{\sigma_r^2}{\sigma_1\kappa^2})$, we can choose small constant so that 
 \begin{align}
     \eta^2 \sigma_1^2/16 \le \frac{\sigma_r^2}{384\sigma_1}, \quad 1024 L \kappa^2 \le \frac{\sigma_r^2}{384\sigma_1}, \quad 8\sigma_1\delta_{2k+1} \le \frac{\sigma_r^2}{384\sigma_1}.\nonumber
 \end{align}
 Thus, we can have 
 \begin{align}
     \|J_tK_t^\top \| \le \|J_{t-1}K_{t-1}^\top \| \cdot \left(1 + \frac{\eta\sigma_r^2}{128\sigma_1}\right).\nonumber
 \end{align}
 We complete the proof of \eqref{ineq: JtKt update phase 2}
\paragraph{Proof of \eqref{ineq: phase 2 U-V decrease}}
Hence, similar to Phase 1, by $\|U_tV_t^\top -\Sigma\| \le M_t \le 4\alpha \kappa^4\sigma_1^{1/2}+\delta_{2k+1}\sigma_1$ and $\|U_t-V_t\| \le \|U_{T_1}-V_{T_1}\| \le 4\alpha+\frac{40\delta\sigma_1^{3/2}}{\sigma_r}$, we can show that 
\begin{align}
    \max\{\|U_t\|,\|V_t\|\} \le 2\sqrt{\sigma_1}\nonumber
\end{align}

Also, consider  
\begin{align*}
    &U_t-V_t \\&= (I-\eta \Sigma-V_t^\top V_t - K_t^\top K_t )(U_{t-1}-V_{t-1})  - \eta V_t \Delta_t\\
    &\quad  + \eta \cdot \left(E_1(F_{t-1}G_{t-1}^\top - \Sigma) V_{t-1} + E_2(F_{t-1}G_{t-1}^\top - \Sigma)K_{t-1} \right) \\&\quad \quad -\eta\cdot \left( E_1^\top (F_{t-1}G_{t-1}^\top - \Sigma) U_{t-1} + E_3^\top (F_{t-1}G_{t-1}^\top - \Sigma)J_{t-1}\right).
\end{align*}
Hence, by the RIP property and $\Delta_{t-1} \le 2\alpha^2 I$ (\eqref{delta t adjustment}), we can get
\begin{align*}
    \Vert (U_t-V_t)\Vert  
    &\le (1-\eta \sigma_r)\Vert U_{t-1}-V_{t-1}\Vert  + 2\eta\alpha^2 \cdot 2\sqrt{\sigma_1} + 4\eta\delta_{2k+1} \cdot  2\sqrt{\sigma_1}\cdot \Vert F_{t-1}G_{t-1}^\top -\Sigma \Vert\\
    &\le (1-\eta \sigma_r)\Vert U_{t-1}-V_{t-1}\Vert  + 2\eta\alpha^2 \cdot 2\sqrt{\sigma_1}+ 8\eta \delta_{2k+1} \cdot  \sqrt{\sigma_1}\cdot 4\Vert J_{t-1}K_{t-1}^\top  \Vert\\
    &\le (1-\eta \sigma_r)\Vert U_{t-1}-V_{t-1}\Vert  + 2\eta\alpha^2 \cdot 2\sqrt{\sigma_1}+ 32\eta \delta_{2k+1} \cdot  \sqrt{\sigma_1}\cdot \Vert J_{t-1}K_{t-1}^\top  \Vert
\end{align*}
Since 
$$\Vert U_{t-1}-V_{t-1}\Vert \ge \frac{8\alpha^2\sqrt{\sigma_1}+64\delta_{2k+1} \sqrt{\sigma_1}\Vert J_{t-1}K_{t-1}^\top \Vert }{\sigma_r}.$$
 for all $t$ in Phase 2, 
we can have   
\begin{align*}\|U_t-V_t\| \le (1-\eta\sigma_r/2)\|U_{t-1}-V_{t-1}\|\end{align*} during Phase 2.

Moreover, since Phase 2 terminates at round $T_2$, such that 
$$\Vert U_{T_2-1}-V_{T_2-1}\Vert \le \frac{8\alpha^2\sqrt{\sigma_1}+64\delta_{2k+1} \sqrt{\sigma_1}\Vert J_{T_2-1}K_{T_2-1}^\top \Vert }{\sigma_r},$$ it takes at most \begin{align}C_3\log(\sqrt{\sigma_r}/\alpha)/\eta\sigma_r=t_2^*\label{phase 2: time bound}\end{align} rounds for some constant $C_3$ because (a)  \eqref{ineq: phase 2 U-V decrease}, (b) and $U_t-V_t$ decreases from $\Vert U_{T_1}-V_{T_1}\Vert \le 4\sqrt{\sigma_1}$ to at most  $\|U_{T_2}-V_{T_2}\|  = \Omega(\alpha^2\sqrt{\sigma_1}/\sigma_r)$. Also, the changement of $\Delta_t$ can be bounded by 
\begin{align*}
    \|\Delta_t-\Delta_{T_1}\| &\le \sum_{t=T_1}^{T_2-1} 2(\eta^2 \cdot \|F_tG_t^\top -\Sigma\|^2 \cdot 4\sigma_1)\\&\le 2(\eta^2) \cdot 100\sigma_1^3 \cdot (T_2-T_1)\\
    &\le 2(\eta^2) \cdot 100\sigma_1^3\cdot C_3\log(\sqrt{\sigma_1}/n\alpha)(1/\eta \sigma_r)\\
    &\le 10C_3\log(\sqrt{\sigma_1}/n\alpha) (\eta \kappa\sigma_1^2)\\
    &\le \alpha^2/16.
\end{align*}
The last inequality holds by choosing $\eta \le \alpha^2/160C_3\kappa\sigma_1^2$. Then, $\lambda_{\min}(\Delta_{t}) \ge \lambda_{\min}\Delta_{T_1}-\alpha^2/16 \ge \alpha^2/4 - \alpha^2/16 = 3\alpha^2/16$ and $\|\Delta_t\| \le \|\Delta_{T_1}\|  + \alpha^2/16 \le 7\alpha^2/4 + \alpha^2/16 \le 29\alpha^2/16$. Hence, inequality \eqref{eq:keyproperty}
 still holds during Phase 2.

\subsection{Phase 3: local convergence}\label{sec: local cvg gd linear} In this phase, we show that the norm of $K_t$ will decrease at a linear rate. Denote the SVD of $U_t$ as $U_t = A_t\Sigma_tW_t$, where $\Sigma_t \in \RR^{r\times r}$, $W_t \in \RR^{r\times k}$, and define $W_{t,\perp} \in \RR^{(k-r)\times k}$ is the complement of $W_t$.

We use the induction to show that all the following hypotheses hold during Phase 3.
\begin{gather}
    \max\{\|J_t\|,\|K_t\|\} \le \cO(2\sqrt{\alpha}\sigma_1^{1/4}+\delta_{2k+1} \log(\sqrt{\sigma_1}/n\alpha) \cdot \kappa^2\sqrt{\sigma_1})\le \sqrt{\sigma_1}/2\label{phase 3: J_tK_t bound}\\
    M_t\le \frac{64L\kappa}{\sigma_r}\|J_tK_t^\top \| \le \|J_tK_t^\top \|\label{ineq:M_t bound}\\
    \|J_tK_t^\top \| \le \left(1+\frac{\eta \sigma_r^2}{128\sigma_1}\right)\|J_{t-1}K_{t-1}^\top \|\label{ineq: JtKt update phase 3}\\
    \Vert U_t-V_t\Vert \le \frac{8\alpha^2\sqrt{\sigma_1}+64\delta_{2k+1} \sqrt{\sigma_1}\Vert J_{t}K_{t}^\top \Vert }{\sigma_r}\label{ineq:U-V bound}\\
    \frac{\alpha^2}{8}\cdot I\le \Delta_t \le 2\alpha^2 I \label{Phase 3:keyproperty not change}\\
    \|K_t\|\le 2\|K_tW_{t,\perp}^\top \|\label{ineq:K_t part bound.}\\ \|K_{t+1}W_{t+1,\perp}^\top \|\le \|K_tW_{t,\perp}^\top \| \cdot \left(1-\frac{\eta\alpha^2}{8}\right)\label{ineq:K_t transform}.
\end{gather}
Assume the hypotheses above hold before round $t$, then at round $t$, by the same argument in Phase 1 and 2, the inequalities \eqref{ineq:M_t bound} and \eqref{ineq:U-V bound} still holds, then $\max\{\|U_t\|,\|V_t\|\} \le 2\sqrt{\sigma_1}$ and $\min\{\sigma_r(U),\sigma_r(V)\} \ge \sigma_r/4\sqrt{\sigma_1}$.

Last, we should prove the induction hypotheses \eqref{phase 3: J_tK_t bound}
 , \eqref{Phase 3:keyproperty not change}, \eqref{ineq:K_t part bound.} and \eqref{ineq:K_t transform}. 
 \paragraph{Proof of Eq.\eqref{ineq: JtKt update phase 3}} 
 Similar to the proof of \eqref{ineq: JtKt update phase 2} in Phase 2, we can derive \eqref{ineq: JtKt update phase 3} again.
 \paragraph{Proof of Eq.\eqref{ineq:K_t part bound.}} 
 First, to prove \eqref{ineq:K_t part bound.}, note that we can get 
\begin{align*}
    M_t\ge \Vert U_tK_t\Vert  = \Vert A_t\Sigma_tW_tK_t^\top  \Vert &= \Vert \Sigma_tW_tK_t^\top \Vert \\&\ge \sigma_r(U)\cdot \Vert K_tW_t^\top \Vert \ge \frac{\|K_tW_t^\top \|\sigma_r}{4\sqrt{\sigma_1}}\ge \frac{\Vert K_tW_t^\top \Vert\sqrt{\sigma_r}}{4\sqrt{\kappa}}.
\end{align*}
Hence, \begin{align}\Vert K_tW_t^\top \Vert \le 4\sqrt{\kappa} M/\sqrt{\sigma_r}\le \frac{64\sigma_1L \sqrt{\kappa} }{\sigma_r^{5/2}}\Vert J_tK_t^\top \Vert \le \frac{32L \kappa^{3/2} }{\sigma_r^{3/2}}\Vert K_t\Vert\cdot \sqrt{\sigma_1} \le \frac{32L \kappa^{2} }{\sigma_r}\Vert K_t\Vert.\label{ineq:KtWt and JtKt}\end{align} Thus, 
\begin{align*}
    \Vert K_t\Vert &\le \Vert K_tW_{t,\perp}^\top \Vert+ \Vert K_tW_t^\top \Vert\\&\le \Vert K_tW_{t,\perp}^\top \Vert + \frac{64L\kappa}{\sigma_r}\Vert K_t\Vert\\&\le \Vert K_tW_{t,\perp}^\top \Vert + \frac{1}{2}\Vert K_t\Vert.
\end{align*}
The last inequality uses the fact that $\delta_{2k+1} = \cO(\sigma_r^3/\sigma_1^3)$
Hence, $\Vert K_tW_{t,\perp}^\top \Vert \ge \Vert K_t\Vert/2$, and \eqref{ineq:K_t part bound.}
 holds during Phase 3.
 \paragraph{Proof of Eq.\eqref{Phase 3:keyproperty not change}}
To prove the \eqref{Phase 3:keyproperty not change}, by the induction hypothesis of Eq.\eqref{ineq:K_t transform}, note that \begin{align}\|\Delta_t-\Delta_{T_2}\|& \le 2\eta^2 \cdot \sum_{t'=T_2}^{t-1} \|F_{t'}G_{t'}^\top -\Sigma\|^2 4\sigma_1\nonumber\\&\le 2\eta^2  \sum_{t'=T_2}^{t-1} 16\sigma_1\|J_{t'}K_{t'}^\top \|^2\nonumber\\
&\le 64\sigma_1 \eta^2  \cdot \sum_{t'=T_2}^{\infty} \|J_{t'}\|^2\|K_{t'}W_{t',\perp}^\top \|^2\nonumber\\
&\le 64\sigma_1\cdot \eta^2 \left( \sigma_1 \cdot \|K_{T_2}W_{T_2,\perp}^\top \|^2\cdot \frac{8}{\eta\alpha^2}\right) \label{ineq: geometry series}\\&\le  \frac{512\eta\sigma_1^2}{\alpha^2}\cdot \|K_{T_2}\|^2\nonumber\\& \le \frac{128\eta\sigma_1^2}{\alpha^2}\cdot \sigma_1\nonumber\\&\le \alpha^2/16.\nonumber\end{align}
The Eq.\eqref{ineq: geometry series} holds by the sum of geometric series. The last inequality holds by $\eta \le \cO(\alpha^4/\sigma_1^3)$
Then, we have 
\begin{align*}
    \|\Delta_t\| &\le \|\Delta_{T_2}\| + \|\Delta_t-\Delta_{T_2}\| \le \frac{29\alpha^2}{16} + \frac{\alpha^2}{16} \le 2\alpha^2.\nonumber\\
    \lambda_{\min}(\Delta_t) &\ge \lambda_{\min}(\Delta_{T_2}) - \|\Delta_t-\Delta_{T_2}\| \ge \frac{3\alpha^2}{16}-\frac{\alpha^2}{16} = \frac{\alpha^2}{8}.\nonumber
\end{align*}
Hence, \eqref{Phase 3:keyproperty not change} holds during Phase 3.
\paragraph{Proof of Eq.\eqref{phase 3: J_tK_t bound}}
  To prove the \eqref{phase 3: J_tK_t bound}, note that 

 \begin{align}
     \|K_t\| \le 2\|K_tW_{t,\perp}^\top \| \le 2\|K_{T_2}W_{T_2,\perp}^\top \| \le 2\|K_{T_2}\| \le \cO(\delta_{2k+1} \log(\sqrt{\sigma_1}/n\alpha)\cdot \sigma_1^{3/2}/\sigma_r).\label{K_t:upper bound}
 \end{align}
 On the other hand, by $\Delta_t \le 2\alpha^2 I $, we have 
 \begin{align*}
    W_{t,\perp}J_t^\top J_tW_{t,\perp}^\top  - W_{t,\perp}K_t^\top K_tW_{t,\perp}^\top  - W_{t,\perp}V_t^\top V_tW_{t,\perp}^\top  \le 2\alpha^2\cdot  I.
\end{align*} Hence, denote $L_t = \|J_tK_t^\top \|\le \sigma_1/4,$
 \begin{align}
    &W_{t,\perp}J_t^\top J_tW_{t,\perp}^\top \le 2\alpha^2 I + W_{t,\perp}K_t^\top K_tW_{t,\perp}^\top  + W_{t,\perp}V_t^\top V_tW_{t,\perp}^\top  \nonumber\\
    &=2\alpha^2 I + W_{t,\perp}K_t^\top K_tW_{t,\perp}^\top  + W_{t,\perp}(V_t-U_t)^\top (V_t-U_t)W_{t,\perp}^\top \nonumber\\
    &\le 2\alpha^2 I + W_{t,\perp}K_t^\top K_tW_{t,\perp}^\top  + \left(\frac{8\alpha^2\sqrt{\sigma_1}+64\delta_{2k+1} \sqrt{\sigma_1}L_t}{\sigma_r}\right)^2\cdot I\nonumber\\&= W_{t,\perp}K_t^\top K_tW_{t,\perp}^\top  + \left(2\alpha + \frac{8\alpha^2\sqrt{\sigma_1}+64\delta_{2k+1} \sqrt{\sigma_1}L_t}{\sigma_r}\right)^2I \label{ineq:J_tW_(t,perp) upper bound}.
\end{align}
 Also, by inequality \eqref{ineq:J_tW_(t,perp) upper bound}, we have 
 \begin{align}
     \|J_tW_{t,\perp}^\top  \|-\| K_tW_{t,\perp}^\top \| &\le \frac{\|J_tW_{t,\perp}^\top \|^2 - \|K_tW_{t,\perp}^\top \|^2}{\|J_tW_{t,\perp}^\top  \|+\| K_tW_{t,\perp}^\top \|}\nonumber\\
     &\le \frac{\left(2\alpha + \frac{8\alpha^2\sqrt{\sigma_1}+64\delta_{2k+1} \sqrt{\sigma_1}L_t}{\sigma_r}\right)^2}{2\|K_tW_{t,\perp}^\top \| + \|J_tW_{t,\perp}^\top  \|-\| K_tW_{t,\perp}^\top \|}\nonumber\\
     &\le \frac{\left(2\alpha + \frac{8\alpha^2\sqrt{\sigma_1}+64\delta_{2k+1} \sqrt{\sigma_1}L_t}{\sigma_r}\right)^2}{ \|J_tW_{t,\perp}^\top  \|-\| K_tW_{t,\perp}^\top \|}\nonumber
 \end{align}
 Thus, by $L_t\le \sigma_1/4$, we can get 
 \begin{align}
     \|J_tW_{t,\perp}^\top  \| &\le \| K_tW_{t,\perp}^\top \|+2\alpha + \frac{8\alpha^2\sqrt{\sigma_1}+64\delta_{2k+1} \sqrt{\sigma_1}L_t}{\sigma_r}\nonumber\\
     &\le \|K_{T_2}\| + 2\alpha + \frac{8\alpha^2\sqrt{\sigma_1}+64\delta_{2k+1} \sqrt{\sigma_1}L_t}{\sigma_r}\nonumber\\
     &\le\cO(2\sqrt{\alpha}\sigma_1^{1/4}+\delta_{2k+1}\log(\sqrt{\sigma_1}/n\alpha)\kappa^2\sqrt{\sigma_1}).\nonumber
 \end{align}
 The second inequality holds by $\|K_tW_{t,\perp}^\top \| \le \|K_{T_2}W_{T_2,\perp}^\top \| \le \|K_{T_2}\|.$
 On the other hand, note that 
 \begin{align}
     \|J_t\| &\le \|J_tW_{t}^\top \| + \|J_tW_{t,\perp}^\top \|\nonumber\\&\le \|J_tU_t^\top \|/\sigma_r(U) + \|J_tW_{t,\perp}^\top \|\nonumber\\&\le \|J_tV_t\|/\sigma_r(U) + \|J_t(U_t-V_t)\|/\sigma_r(U) + \|J_tW_{t,\perp}^\top \|\nonumber\\
     &\le M_t/\sigma_r(U) + \|J_t\|\|(U_t-V_t)\|/\sigma_r(U) + \|J_tW_{t,\perp}^\top \|\nonumber\\
     &\le \frac{64L\kappa}{\sigma_r}\|J_t\|\|K_t\|\cdot \frac{4\sqrt{\sigma_1}}{\sigma_r} + \|J_t\|\frac{8\alpha^2\sqrt{\sigma_1} + 64\delta_{2k+1}\sqrt{\sigma_1}\|J_tK_t^\top \|}{\sigma_r}\cdot \frac{4\sqrt{\sigma_1}}{\sigma_r} + \|J_tW_{t,\perp}^\top \|\nonumber\\&\le \left(\frac{64\sigma_1^{3/2}L}{\sigma_r^3}\cdot \sqrt{\sigma_1}+\frac{32\alpha^2\sigma_1 + 256\delta_{2k+1}\sigma_1\cdot \sigma_1}{\sigma_r^2}\right) \|J_t\| + \|J_tW_{t,\perp}^\top \|\nonumber\\
     &\le \frac{1}{2}\|J_t\| + \|J_tW_{t,\perp}^\top \|.\label{ineq:J_t bound}
 \end{align}
 The last inequality holds because 
\begin{align*}
    \delta_{2k+1} = \cO(\kappa^{-4}\log^{-1}(\sqrt{\sigma_1}/n\alpha)),\ \ \  \alpha \le \cO(\sigma_r/\sqrt{\sigma_1}) 
\end{align*}

 Hence, by the inequality \eqref{ineq:J_t bound}, we can get 
 \begin{align}
     \|J_t\|\le 2\|J_tW_{t,\perp}^\top \| = \cO(2\sqrt{\alpha}\sigma_1^{1/4}+\delta_{2k+1}\log(\sqrt{\sigma_1}/n\alpha)\cdot \kappa^2\sqrt{\sigma_1}).
 \end{align}
 Thus, \eqref{phase 3: J_tK_t bound} holds during Phase 3.

\paragraph{Proof of Eq.\eqref{ineq:K_t transform}}
Now we prove the inequality \eqref{ineq:K_t transform}. We consider 
the changement of $K_t$. We have 
\begin{align*}
    K_{t+1} =  K_t(I-U_t^\top U_t-J_t^\top J_t) + E_3(F_tG_t^\top -\Sigma)U_t + E_4(F_tG_t^\top -\Sigma)J_t  
\end{align*}

Now consider $K_{t+1}W_{t,\perp}^\top $, we can get 
\begin{align*}
    K_{t+1}W_{t,\perp}^\top  &= K_t(I-\eta W_t^\top \Sigma^2W_t-J_t^\top J_t)W_{t,\perp}^\top  + \eta E_3(F_tG_t^\top -\Sigma)U_tW_{t,\perp}^\top  + \eta E_4(F_tG_t^\top -\Sigma)J_tW_{t,\perp}^\top \\
    &= K_tW_{t,\perp}^\top  - \eta K_tJ_t^\top J_tW_{t,\perp}^\top  + \eta E_4(F_tG_t^\top -\Sigma)J_tW_{t,\perp}^\top \\
    &=K_tW_{t,\perp}^\top  - \eta K_tW_{t,\perp}^\top W_{t,\perp}J_t^\top J_tW_{t,\perp}^\top  - \eta K_tW_{t}^\top W_{t}J_t^\top J_tW_{t,\perp}^\top  + \eta E_4(F_tG_t^\top -\Sigma)J_tW_{t,\perp}^\top 
\end{align*}
Hence, by the Eq.\eqref{ineq:KtWt and JtKt},
\begin{align*}
    \Vert K_{t+1}W_{t,\perp}^\top  \Vert&\le  \Vert K_tW_{t,\perp}^\top  (I - \eta W_{t,\perp}J_t^\top J_tW_{t,\perp}^\top )\Vert +\frac{64\eta L\kappa^{3/2}}{\sigma_r^{3/2}}\Vert J_tK_t^\top \Vert \cdot \Vert J_tW_{t,\perp}^\top \Vert\Vert J_t\Vert + 4\eta\delta_{2k+1} M_t \Vert J_tW_{t,\perp}^\top \Vert\\
    &\le \Vert K_tW_{t,\perp}^\top  (I - \eta W_{t,\perp}J_t^\top J_tW_{t,\perp}^\top )\Vert +\frac{64\eta L\kappa^{3/2}}{\sigma_r^{3/2}}\Vert J_tK_t^\top \Vert \cdot \Vert J_tW_{t,\perp}^\top \Vert\Vert J_t\Vert \\&\qquad + \frac{16\sigma_1\eta L}{\sigma_r^2}\|J_tK_t^\top \| \Vert J_tW_{t,\perp}^\top \Vert\\
    &\le \Vert K_tW_{t,\perp}^\top  (I -\eta  W_{t,\perp}J_t^\top J_tW_{t,\perp}^\top )\Vert +\frac{80\eta L\kappa^{2}}{\sigma_r}\Vert J_tK_t^\top \Vert \cdot \Vert J_tW_{t,\perp}^\top \Vert
\end{align*}
The second inequality uses the fact that $\delta_{2k+1} \le 1/16$ and \eqref{ineq:KtWt and JtKt}. 
The last inequality uses the fact that $\|J_t\| \le \sqrt{\sigma_1}.$
Note that $\lambda_{\min}(\Delta_t)\ge \alpha^2/8 \cdot I$, then multiply the $W_{t,\perp}^\top $, we can get 
\begin{align*}
    W_{t,\perp}J_t^\top J_tW_{t,\perp}^\top  - W_{t,\perp}V_t^\top V_tW_{t,\perp}^\top  - W_{t,\perp}K_t^\top K_tW_{t,\perp}^\top  \ge \frac{\alpha^2}{8}\cdot  I.
\end{align*}
Hence, 
\begin{align*}
    W_{t,\perp}J_t^\top J_tW_{t,\perp}^\top - W_{t,\perp}K_t^\top K_tW_{t,\perp}^\top  \ge \frac{\alpha^2}{8}\cdot  I.
\end{align*}
Thus, define $\phi_t = W_{t,\perp}J_t^\top J_tW_{t,\perp}^\top - W_{t,\perp}K_t^\top K_tW_{t,\perp}^\top  $, then we can get 
\begin{align*}
    \Vert K_{t+1}W_{t,\perp}^\top  \Vert&\le\Vert K_tW_{t,\perp}^\top  (I - W_{t,\perp}J_t^\top J_tW_{t,\perp}^\top )\Vert +\frac{80L\kappa^2}{\sigma_r}\Vert J_tK_t^\top \Vert \cdot \Vert J_tW_{t,\perp}^\top \Vert\\
    &\le \Vert K_tW_{t,\perp}^\top  (I - W_{t,\perp}K_t^\top K_tW_{t,\perp}^\top -\eta\phi_t)\Vert +\frac{80L\kappa^2}{\sigma_r}\Vert J_tK_t^\top \Vert \cdot \Vert J_tW_{t,\perp}^\top \Vert
\end{align*}

Define loss $L_t = \Vert J_tK_t^\top \Vert$. Note that 
\begin{align}
    L_t &= \Vert J_tK_t^\top \Vert \nonumber\\&= \Vert J_tW_{t,\perp}^\top  W_{t,\perp}K_t^\top  +  J_tW_{t}^\top  W_{t}K_t^\top \Vert \nonumber\\
    &\le \Vert J_tW_{t,\perp}^\top  W_{t,\perp}K_t^\top \Vert + \Vert J_tW_{t}^\top  W_{t}K_t^\top \Vert\nonumber \\
    &\le \Vert J_tW_{t,\perp}^\top  W_{t,\perp}K_t^\top \Vert + \sqrt{\sigma_1}\cdot \frac{64L\kappa^{3/2}}{\sigma_r^{3/2}}\Vert J_tK_t^\top \Vert\label{ineq:need explain 2} \\
    &\le \Vert J_tW_{t,\perp}^\top  W_{t,\perp}K_t^\top \Vert + \frac{L_t}{2}.\nonumber
\end{align}
The Eq.\eqref{ineq:need explain 2} holds by Eq.\eqref{ineq:KtWt and JtKt} and $\|W_t^\top \| = 1$, and the last inequality holds by $\delta_{2k+1} = \cO(\kappa^4)$.

Hence, \begin{align}\Vert J_tW_{t,\perp}^\top  W_{t,\perp}K_t^\top \Vert \ge L_t/2.\label{eq: JW ge Lt}\end{align}
Similarly, \begin{align}\Vert J_tW_{t,\perp}^\top  W_{t,\perp}K_t^\top \Vert \le 2L_t\label{eq: JW le 2Lt}\end{align}
Then, 
\begin{align*}
    \Vert K_{t+1}W_{t,\perp}^\top  \Vert&\le \Vert K_tW_{t,\perp}^\top  (I - \eta W_{t,\perp}K_t^\top K_tW_{t,\perp}^\top -\eta\phi_t)\Vert +\frac{160\eta L\kappa^2}{\sigma_r}\Vert J_tW_{t,\perp}^\top W_{t,\perp}K_t^\top \Vert \cdot \Vert J_tW_{t,\perp}^\top \Vert.
\end{align*}
If $\Vert J_tW_{t,\perp}^\top \Vert \le 10\kappa\alpha,$ we can get 

\begin{align}
    \Vert K_{t+1}W_{t,\perp}^\top  \Vert&\le \Vert K_tW_{t,\perp}^\top  (I - \eta W_{t,\perp}K_t^\top K_tW_{t,\perp}^\top -\eta\phi_t)\Vert +\frac{160\eta L\kappa^2}{\sigma_r}\Vert J_tW_{t,\perp}^\top W_{t,\perp}K_t^\top \Vert \cdot \Vert J_tW_{t,\perp}^\top \Vert\nonumber\\
    &\le \Vert K_{t}W_{t,\perp}^\top  \Vert \Vert (I -\eta  W_{t,\perp}K_t^\top K_tW_{t,\perp}^\top -\eta\phi_t)\Vert +\frac{160\eta L\kappa^2}{\sigma_r}\Vert J_tW_{t,\perp}^\top W_{t,\perp}K_t^\top \Vert \cdot \Vert J_tW_{t,\perp}^\top \Vert\nonumber\\&\le \Vert K_{t}W_{t,\perp}^\top  \Vert \left(1-\frac{\eta\alpha^2}{8}\right) +\frac{160\eta L\kappa^2}{\sigma_r}\Vert J_tW_{t,\perp}^\top \Vert \Vert W_{t,\perp}K_t^\top \Vert \cdot \Vert J_tW_{t,\perp}^\top \Vert\nonumber\\
    &\le \Vert K_tW_{t,\perp}^\top \Vert \cdot \left(1-\frac{\eta\alpha^2}{8}\right) + \frac{160\eta L\kappa^2}{\sigma_r} 100\kappa^2\alpha^2 \|K_tW_{t,\perp}^\top \|\nonumber\\ 
    &\le \Vert K_tW_{t,\perp}^\top \Vert \cdot \left(1-\frac{\eta\alpha^2}{16}\right)\label{ineq: alpha}\\
    &\le \Vert K_tW_{t,\perp}^\top \Vert \cdot \left(1-\frac{\eta\|J_tW_{t,\perp}^\top \|}{1600\kappa^2}\right)\label{ineq: jtwt}
\end{align}
 by choosing $\delta_{2k+1}\le  \cO(\kappa^{-5})$.
 Now if $\Vert J_tW_{t,\perp}^\top \Vert \ge 10\kappa\alpha$,
 
\begin{gather*}
    W_{t,\perp}J_t^\top J_tW_{t,\perp}^\top  - W_{t,\perp}K_t^\top K_tW_{t,\perp}^\top  - W_{t,\perp}V_t^\top V_tW_{t,\perp}^\top  \le 2\alpha^2\cdot  I\\
    W_{t,\perp}J_t^\top J_tW_{t,\perp}^\top  - W_{t,\perp}K_t^\top K_tW_{t,\perp}^\top   \le 2\alpha^2\cdot  I + W_{t,\perp}(U_t-V_t)^\top (U_t-V_t)W_{t,\perp}^\top 
\end{gather*}
Hence, 

If $\Vert J_tW_{t,\perp}^\top \Vert \ge 10\kappa\alpha$, then 
\begin{align*}
    \Vert J_tW_{t,\perp}\Vert^2 &=\Vert W_{t,\perp}J_t^\top J_tW_{t,\perp}^\top  \Vert \\ &\le \Vert W_{t,\perp}K_t^\top K_tW_{t,\perp}^\top \Vert +\left(2\alpha + \frac{8\alpha^2\sqrt{\sigma_1}+64\delta_{2k+1} \sqrt{\sigma_1}L_t}{\sigma_r}\right)^2\\
    &\le \Vert W_{t,\perp}K_t^\top K_tW_{t,\perp}^\top \Vert + \left(2\alpha + \frac{8\alpha^2\sqrt{\sigma_1}+64\delta_{2k+1} \sqrt{\sigma_1}\|J_tW_{t,\perp}^\top \|\cdot \sqrt{\sigma_1}}{\sigma_r}\right)^2\\
    &\le \Vert W_{t,\perp}K_t^\top K_tW_{t,\perp}^\top \Vert +(10\alpha+ 64\delta_{2k+1}\kappa \|J_tW_{t,\perp}^\top \|)^2\\
    &\le \Vert W_{t,\perp}K_t^\top K_tW_{t,\perp}^\top \Vert +(1/10\kappa+64\delta_{2k+1}\kappa)\cdot \Vert J_tW_{t,\perp}^\top \Vert^2\\
    &\le \|W_{t,\perp}K_t^\top K_tW_{t,\perp}^\top \Vert +(1/2)\cdot \Vert J_tW_{t,\perp}^\top \Vert^2.
\end{align*}
Thus, $\Vert K_tW_{t,\perp}^\top  \Vert \ge \Vert J_tW_{t,\perp}^\top \Vert/\sqrt{2}\ge \|J_tW_{t,\perp}^\top \|/2.$
\begin{align*}
    \Vert K_{t+1}W_{t,\perp}^\top  \Vert&\le \Vert K_tW_{t,\perp}^\top  (I - \eta W_{t,\perp}K_t^\top K_tW_{t,\perp}^\top -\eta\phi_t)\Vert +\frac{160\eta L\kappa^2}{\sigma_r}\Vert J_tW_{t,\perp}^\top W_{t,\perp}K_t^\top \Vert \cdot \Vert J_tW_{t,\perp}^\top \Vert\\
    &\le \Vert K_{t}W_{t,\perp}^\top  \Vert \Vert (I - \eta W_{t,\perp}K_t^\top K_tW_{t,\perp}^\top -\eta\phi_t)\Vert + \frac{160\eta L\kappa^2}{\sigma_r}\Vert J_tW_{t,\perp}^\top W_{t,\perp}K_t^\top \Vert \cdot \Vert J_tW_{t,\perp}^\top \Vert
\end{align*}
Then, if we denote $K' = K_{t}W_{t,\perp}^\top $, then we know 
$\Vert K'(1-\eta (K')^\top K')\Vert \le (1-\eta \frac{\sigma_1^2(K')}{2})\Vert K'\Vert $. 
Let $K' = A'\Sigma'W'$
\begin{align*}
    \Vert K'(1-\eta (K')^\top K') \Vert &= \Vert A'\Sigma' W'(I-\eta(W')^\top (\Sigma')^2W')\Vert \\&=\Vert \Sigma' (I-\eta (\Sigma')^2)\Vert
\end{align*}
Let $\Sigma'_{ii} = \zeta_i$ for $i\le r,$ then $\Sigma'(I-\eta(\Sigma')^2)_{ii} = \zeta_i-\eta\zeta_i^3$, then by the fact that $\zeta_1= \sigma_1(K_tW_{t,\perp}^\top )\le  1$, we can have $\zeta_1-\eta\zeta_1^3 = \max_{1\le i\le r} \zeta_i-\eta\zeta_i^3$ and then $$\Vert \Sigma (I-\eta\Sigma^2)\Vert = (1-\eta \Vert K'\Vert^2 )\Vert K'\Vert.$$
Hence, 
\begin{align}
    \Vert K_{t+1}W_{t,\perp}^\top  \Vert&\le \Vert K_tW_{t,\perp}^\top  (I -\eta  W_{t,\perp}K_t^\top K_tW_{t,\perp}^\top -\eta\phi_t)\Vert +\frac{160\eta L\kappa^2}{\sigma_r}\Vert J_tW_{t,\perp}^\top W_{t,\perp}K_t^\top \Vert \cdot \Vert J_tW_{t,\perp}^\top \Vert \nonumber\\&\le \Vert K_tW_{t,\perp}^\top  (I -\eta  W_{t,\perp}K_t^\top K_tW_{t,\perp}^\top )\Vert +\frac{160\eta L\kappa^2}{\sigma_r}\Vert J_tW_{t,\perp}^\top W_{t,\perp}K_t^\top \Vert \cdot \Vert J_tW_{t,\perp}^\top \Vert \nonumber\\&\le 
    \Vert K_tW_{t,\perp}^\top  \Vert \left(1-\eta \frac{\Vert K_tW_{t,\perp}^\top \Vert^2}{2}\right) +\frac{160\eta L\kappa^2}{\sigma_r}\Vert J_tW_{t,\perp}^\top \Vert \Vert W_{t,\perp}K_t^\top \Vert \cdot \Vert J_tW_{t,\perp}^\top \Vert \nonumber\\
    &\le \Vert K_tW_{t,\perp}^\top  \Vert \left(1-\eta \frac{\Vert J_tW_{t,\perp}^\top \Vert^2}{8}\right) +\frac{160\eta L\kappa^2}{\sigma_r}\Vert J_tW_{t,\perp}^\top \Vert \Vert W_{t,\perp}K_t^\top \Vert \cdot \Vert J_tW_{t,\perp}^\top \Vert \nonumber\\&\le \Vert K_tW_{t,\perp}^\top  \Vert \left(1-\eta \frac{\Vert J_tW_{t,\perp}^\top \Vert^2}{16}\right)\label{ineq: jtwt 2}\\
    &\le \Vert K_tW_{t,\perp}^\top  \Vert \left(1-4\eta \kappa^2\alpha^2\right).\label{ineq: alpha 2}
\end{align}
The fifth inequality is because $\delta_{2k+1} = O(\kappa^{-4})$.
Thus, for all cases, by Eq.\eqref{ineq: alpha}, \eqref{ineq: jtwt}, \eqref{ineq: alpha 2} and \eqref{ineq: jtwt 2}, we have 
\begin{align}
    \Vert K_{t+1}W_{t,\perp}^\top \Vert &\le \Vert K_tW_{t,\perp}^\top \Vert \cdot \min\left\{\left(1-\frac{\eta\alpha^2}{4}\right), \left(1-\frac{\eta\|J_tW_{t,\perp}^\top \|^2}{1600\kappa^2}\right)\right\}\nonumber\\&\le \Vert K_tW_{t,\perp}^\top \Vert \cdot \left(1-\frac{\eta\alpha^2}{8}\right) \cdot \left(1-\frac{\eta\|J_tW_{t,\perp}^\top \|^2}{3200\kappa^2}\right),\label{ineq:K_(t+1)W_(t,perp)^T to K_tW_(t,perp)^T }
\end{align}
where we use the inequality $\max\{a,b\} \le \sqrt{ab}$.
Now we prove the following claim: 
\begin{align}
    \Vert K_{t+1}W_{t+1,\perp}^\top \Vert \le \Vert K_{t+1}W_{t,\perp}^\top \Vert \cdot \left(1+\cO(\eta\delta_{2k+1} \|J_tW_{t,\perp}^\top \|^2/\sigma_r^{3/2})\right).
\end{align}
First consider the situation that $\Vert J_tW_{t,\perp}^\top  \Vert \le 10\kappa \alpha$. We start at these two equalities:
\begin{align*}
    K_{t+1} &= K_{t+1}W_{t,\perp}^\top W_{t,\perp} + K_{t+1}W_t^\top W_t\\
    K_{t+1} &= K_{t+1}W_{t+1,\perp}^\top W_{t+1,\perp} + K_{t+1}W_{t+1}^\top W_{t+1}.
\end{align*}
Thus, we have 
\begin{align*}
    K_{t+1}W_{t,\perp}^\top W_{t,\perp}W_{t+1,\perp}^\top  + K_{t+1}W_t^\top W_tW_{t+1,\perp}^\top  = K_{t+1}W_{t+1,\perp}^\top 
\end{align*}
Consider 
\begin{align*}
    \Vert W_tW_{t+1,\perp}^\top \Vert &= \Vert W_{t+1,\perp}W_t^\top \Vert \\
    &=\Vert W_{t+1,\perp}U_t^\top  (U_tU_t^\top )^{-1/2}\Vert \\
    &= \Vert W_{t+1,\perp} U_t^\top \Vert \Vert (U_tU_t^\top )^{-1/2}\Vert \\
    & \le  \Vert W_{t+1,\perp}\Vert \Vert U_{t+1}-U_t\Vert \cdot \sigma_r(U)^{-1}\\
    &\le \frac{4\sqrt{\sigma_1}}{\sigma_r}\cdot\eta \cdot  (2\sqrt{\sigma_1}\cdot M_t + 2\delta_{2k+1} \cdot (L_t+3M_t)\cdot 2\sqrt{\sigma_1})\\
    &\le \frac{4\sqrt{\sigma_1}}{\sigma_r}\cdot\eta \cdot (3\sqrt{\sigma_1}\cdot M_t + 2\delta_{2k+1}\cdot L_t)\\
    &\le \frac{4\sqrt{\sigma_1}}{\sigma_r}\cdot\eta (\frac{48 L\kappa\sqrt{\sigma_1}}{\sigma_r}\|J_tK_t^\top \| +2\sqrt{\sigma_1}\delta_{2k+1}\cdot L_t)\\
    &\le C\eta (\delta_{2k+1}\kappa^4+\alpha^2\kappa^2/\sigma_r)\|J_tK_t^\top \|.
\end{align*}
for some constant $C.$
Also, note that $\|F_tG_t^\top -\Sigma\| \le L_t + 3M_t \le 4L_t,$
\begin{align*}
    \Vert K_{t+1}W_{t}^\top \Vert &= \Vert (K_{t+1}-K_t)W_{t}^\top  \Vert  + \|K_tW_{t}^\top \|\\&\le \Vert \eta K_t(U_t^\top U_t+J_t^\top J_t)W_{t}^\top \Vert +\eta\delta_{2k+1}\cdot (4L_t)\cdot 2\sqrt{\sigma_1} + \|K_tW_{t}^\top \|\\
    &\le \Vert \eta K_tJ_t^\top J_tW_{t}^\top \Vert +8\sqrt{\sigma_1}\eta\delta_{2k+1}\cdot L_t + \frac{64L\kappa^{3/2}}{\sigma_r^{3/2}}L_t \\
    &\le \eta L_t \Vert J_tW_{t}^\top \Vert + 8\sqrt{\sigma_1}\eta\delta_{2k+1}\cdot L_t  + \frac{64L\kappa^{3/2}}{\sigma_r^{3/2}}L_t\\
    &\le L_t\cdot (\eta \cdot \sqrt{\sigma_1} + 8\sqrt{\sigma_1}\eta \delta_{2k+1} + \frac{64L\kappa^{3/2}}{\sigma_r^{3/2}})\\
    &\le \frac{1}{4\sqrt{\sigma_1}}L_t\\
    &\le \frac{1}{4}\|K_t\|
\end{align*}
and 
\begin{align*}
    \|K_{t+1}W_{t,\perp}^\top \| &\ge\|K_tW_{t,\perp}^\top \|-\|(K_{t+1}-K_t)W_{t,\perp}^\top \|\\&\ge  \frac{1}{2}\|K_t\| - \eta \|K_t(U_t^\top U_t+J_t^\top J_t)W_{t}^\top \Vert -8\sqrt{\sigma_1}\eta\delta_{2k+1}\cdot L_t \\&\ge\frac{1}{2}\|K_t\| -\eta L_t\|J_tW_t^\top \| - 8\sqrt{\sigma_1}\eta\delta_{2k+1}\cdot L_t  \\
    &\ge \|K_t\|(\frac{1}{2}-\eta \|J_t\|\cdot \|J_tW_{t}^\top \|-8\sqrt{\sigma_1}\eta \delta_{2k+1}\cdot \|J_t\|) \\
    &\ge \|K_t\|(\frac{1}{2}-\eta \sigma_1-8\eta \delta_{2k+1}\sigma_1)\\
    &\ge \frac{1}{4}\|K_t\|\\&\ge \|K_{t+1}W_{t}^\top \|
\end{align*}
Here, we use the fact that $\eta \le 1/\sigma_1$, $\delta_{2k+1}\le 1/32$ and $\|J_t\| \le \sqrt{\sigma_1}.$
Hence, we have 
\begin{align*}
    \Vert K_{t+1}W_{t+1,\perp}^\top \Vert &\le \Vert K_{t+1}W_{t,\perp}^\top \Vert \Vert W_{t,\perp}W_{t+1,\perp}^\top \Vert + \Vert K_{t+1}W_{t}^\top \Vert\Vert W_tW_{t+1,\perp}^\top \Vert \\&\le \Vert K_{t+1}W_{t,\perp}^\top \Vert  + \Vert K_{t+1}W_{t,\perp}^\top \Vert \cdot C\eta(\delta_{2k+1}\kappa^4+\alpha^2\kappa^2/\sigma_r)L_t\\
    &\le \left(1+C\eta(\delta_{2k+1}\kappa^4+\alpha^2\kappa^2/\sigma_r)L_t\right)\Vert K_{t+1}W_{t,\perp}^\top \Vert \\
    &\le \left(1+2C\eta(\delta_{2k+1}\kappa^4+\alpha^2\kappa^2/\sigma_r)\Vert J_tW_{t,\perp}^\top W_{t,\perp}K_t^\top \Vert \right)\Vert K_{t+1}W_{t,\perp}^\top \Vert\\&\le \left(1+2C\eta(\delta_{2k+1}\kappa^4+\alpha^2\kappa^2/\sigma_r)\Vert J_tW_{t,\perp}^\top \|\|W_{t,\perp}K_t^\top \Vert \right)\Vert K_{t+1}W_{t,\perp}^\top \Vert
\end{align*}
The inequality on the fourth line is because Eq.\eqref{eq: JW ge Lt}.

Note that 
\begin{align*}
    W_{t,\perp}J_t^\top J_tW_{t,\perp}^\top - W_{t,\perp}K_t^\top K_tW_{t,\perp}^\top  \ge \frac{\alpha^2}{8}\cdot  I.
\end{align*}
Thus, $\Vert K_tW_{t,\perp}^\top \Vert \le \Vert J_tW_{t,\perp}^\top \Vert $ and 
\begin{align}
    \Vert K_{t+1}W_{t+1,\perp}^\top \Vert &\le
     \left(1+2C\eta(\delta_{2k+1}\kappa^4+\alpha^2\kappa^2/\sigma_r)\Vert J_tW_{t,\perp}^\top \|\|W_{t,\perp}K_t^\top \Vert\right)\Vert K_{t+1}W_{t,\perp}^\top \Vert\nonumber\\
    &\le  \left(1+2C\eta(\delta_{2k+1}\kappa^4+\alpha^2\kappa^2/\sigma_r)\Vert J_tW_{t,\perp}^\top \|^2\right)\Vert K_{t+1}W_{t,\perp}^\top \Vert\label{ineq: K_(t+1)W_(t+1,perp) to K_(t+1)W_(t,perp)}
\end{align}
By inequalities \eqref{ineq:K_(t+1)W_(t,perp)^T to K_tW_(t,perp)^T } and \eqref{ineq: K_(t+1)W_(t+1,perp) to K_(t+1)W_(t,perp)}, we can get 
\begin{align*}
    & ~~~\|K_{t+1}W_{t+1,\perp}^\top \| \\
    &\le \left(1+2C\eta(\delta_{2k+1}\kappa^4+\alpha^2\kappa^2/\sigma_r)\Vert J_tW_{t,\perp}^\top \|^2\right)\Vert K_{t+1}W_{t,\perp}^\top \Vert \\&\le \left(1+2C\eta(\delta_{2k+1}\kappa^4+\alpha^2\kappa^2/\sigma_r)\Vert J_tW_{t,\perp}^\top \|^2\right) \cdot \left(1-\frac{\eta\alpha^2}{8}\right) \cdot \left(1-\frac{\eta\|J_tW_{t,\perp}^\top \|^2}{3200\kappa^2}\right) \|K_tW_{t,\perp}^\top \|\\
    &\le \left(1-\frac{\eta\alpha^2}{8}\right) \|K_tW_{t,\perp}^\top \|.
\end{align*}
The last inequality is because 
\begin{align*}
    2C\eta(\delta_{2k+1}\kappa^4+\alpha^2\kappa^2/\sigma_r)\Vert J_tW_{t,\perp}^\top \|^2\le \frac{\eta\|J_tW_{t,\perp}^\top \|^2}{3200\kappa^2}
\end{align*}
by choosing 
\begin{align}\delta_{2k+1} = \cO(\kappa^{-6})\label{delta 1}\end{align} and \begin{align}\alpha = \cO(\kappa^{-2}\cdot \sqrt{\sigma_r}).\label{alpha 1}\end{align} 
Thus, we can prove $\Vert K_tW_{t,\perp}^\top \Vert$ decreases at a linear rate. 

Now we have completed all the proofs of the induction hypotheses. Hence, 
\begin{align}
    \|F_tG_t^\top -\Sigma\| &\le 2\|J_tK_t^\top \| \nonumber\\&\le 4\|K_t^\top \|\cdot \sqrt{\sigma_1} \nonumber\\&\le 4\|K_tW_{t,\perp}^\top \|\sqrt{\sigma_1} \nonumber\\&\le 4\|K_tW_{T_2,\perp}^\top \|\cdot \sqrt{\sigma_1}\left(1-\frac{\eta\alpha^2}{8}\right)^{t-T_2}\nonumber\\
    &\le 4\|K_{T_2}\|\cdot \sqrt{\sigma_1}\left(1-\frac{\eta\alpha^2}{8}\right)^{t-T_2}\nonumber\\
    &\le 2\sigma_1\left(1-\frac{\eta\alpha^2}{8}\right)^{t-T_2}\label{phase 3: loss}
\end{align}

Now combining three phases \eqref{phase 1 loss}, \eqref{phase 2: time bound} and \eqref{phase 3: loss}, if we denote $t_2^* + T_0 = T' = \widetilde{\cO}(1/\eta \sigma_r)$, then for any round $T\ge 4T'$,  Phase 1 and Phase 3 will take totally at least $T-T'$ rounds. 
Now we consider two situations.
\paragraph{Situation 1:} Phase 1 takes at least $\frac{3(T-T')}{4}$ rounds. Then, by \eqref{phase 1 loss}, suppose Phase 1 starts at $T_0$ rounds and terminates at $T_1$ rounds, we will have 
\begin{align}
   \|F_{T_1}G_{T_1}^\top - \Sigma\| &\le \frac{\sigma_r^2}{128\alpha^2\kappa}\left(1-\frac{\eta \sigma_r^2}{64\sigma_1}\right)^{T_1-T_0}\nonumber\\& \le 
    \frac{\sigma_r^2}{128\alpha^2\kappa}\left(1-\frac{\eta \sigma_r^2}{64\sigma_1}\right)^{T/2}.\label{situation 1 loss}
\end{align}
The last inequality uses the fact that $T\ge 4T'$ and
$$T_1-T_0 \ge \frac{3(T-T')}{4}\ge T/2$$
Then, by \eqref{ineq: JtKt update phase 2}, \eqref{phase 2 M_t bound}, \eqref{ineq:M_t bound} and \eqref{ineq: JtKt update phase 3}, we know that 
\begin{align}
    \|F_{T}G_T^\top - \Sigma\| &\le 4\|J_TK_T^\top \|\nonumber\\
    &\le 4\|J_{T_1}K_{T_1}^\top - \Sigma \|\cdot \left(1+\frac{\eta\sigma_r^2}{128\sigma_1}\right) ^{T-T_1}\nonumber \\
    &\le 4\|F_{T_1}G_{T_1}^\top - \Sigma\|\cdot \left(1+\frac{\eta\sigma_r^2}{128\sigma_1}\right) ^{T-T_1}\nonumber\\
    &\le 4\|F_{T_1}G_{T_1}^\top - \Sigma\|\cdot \left(1+\frac{\eta\sigma_r^2}{128\sigma_1}\right) ^{T/2}
\end{align}
The last inequality uses the fact that $T_1-T_0\ge \frac{3(T-T')}{4}\ge \frac{T}{2}$, which implies that $\frac{T}{2}\ge T-T_1$
Then, combining with \eqref{situation 1 loss}, we can get 
\begin{align}
    \|F_{T}G_T^\top - \Sigma\| &\le \frac{\sigma_r^2}{128\alpha^2\kappa}\left(1-\frac{\eta \sigma_r^2}{64\sigma_1}\right)^{T/2} \cdot \left(1+\frac{\eta\sigma_r^2}{128\sigma_1}\right) ^{T/2}\nonumber\\
    &\le \frac{\sigma_r^2}{128\alpha^2\kappa}\left(1-\frac{\eta \sigma_r^2}{128\sigma_1}\right)^{T/2}\label{eq: explain 1 final phase}\\
    &\le \frac{\sigma_r^2}{128\alpha^2\kappa}\left(1-\frac{\eta\alpha^2}{8}\right)^{T/2}.\label{eq:explain 2 final phase}
\end{align}
\eqref{eq: explain 1 final phase} uses the basic inequality $(1-2x) (1+x) \le (1-x)$, 
and \eqref{eq:explain 2 final phase} uses the fact that $\alpha = \cO(\kappa^{-2}\sqrt{\sigma_r}) =\cO(\sqrt{\kappa \sigma_r})$.
\paragraph{Situation 2:} Phase 3 takes at least $\frac{T-T'}{4}$ rounds. Then, by \eqref{phase 3: loss}, suppose Phase 3 starts at round $T_2$, we have 
\begin{align}
    \|F_{T}G_{T}^\top - \Sigma\|
    &\le 2\sigma_1\left(1-\frac{\eta\alpha^2}{8}\right)^{t-T_2} \nonumber\\
    &\le 2\sigma_1\left(1-\frac{\eta\alpha^2}{8}\right)^{(T-T')/4}\nonumber\\
    &\le \frac{\sigma_r^2}{128\alpha^2\kappa}\left(1-\frac{\eta\alpha^2}{8}\right)^{T/8}.
\end{align}
The last inequality uses the fact that $\alpha = \cO(\kappa^{-2}\sqrt{\sigma_r}) = \cO(\kappa^{-1}\sqrt{\sigma_r})$ and $\frac{T-T'}{4}\ge \frac{T-T/4}{4} \ge T/8.$
Thus, by $\|F_TG_T^\top - \Sigma\|^2 \le  n \cdot \|F_TG_T^\top - \Sigma\|^2$, 
we complete the proof by choosing $4T' = T^{(1)}$ and $c_7 = 1/128^2$.

\section{Proof of Theorem \ref{theorem: exact asymmetric}}\label{sec: appendix_proofThmAsym}
 By the convergence result in \citep{soltanolkotabi2023implicit}, the following three conditions hold for $t = T_0.$ 
\begin{gather}
    \max\{\|J_t\|,\|K_t\|\}\le \cO\left(2\alpha + \frac{\delta_{2k+1}\sigma_1^{3/2}\log(\sqrt{\sigma_1}/n\alpha)}{\sigma_r}\right)\label{exact: condition 1}\\
    \max\{\|U_t\|,\|V_t\|\} \le 2\sqrt{\sigma_1}\label{exact: condition 2}
\end{gather}
and 
\begin{align}
    \|F_tG_t^T-\Sigma\| \le \alpha^{1/2}\sigma_1^{3/4}\le \sigma_r/2.\label{exact: condition 3}
\end{align}

Then, we define $M_t = \max\{\|U_tV_t^\top - \Sigma\|, \|U_tK_t^\top\|, \|J_tV_t^\top\|\},$  by the same techniques in Section \ref{sec: phase 1 gd linear}, if we have 
\begin{align}
    \sigma_r^2M_{t-1}/64\sigma_1 \ge (17 \sigma_1\delta_{2k+1}+\alpha^2)\|J_{t-1}K_{t-1}^\top \|,\label{exact: Mt cond}
\end{align}
we can prove that 
\begin{align}\label{ineq: Mt exact}
    M_t\le \left(1-\frac{\eta\sigma_r^2}{64\sigma_1}\right)M_{t-1}.
\end{align}
and 
\begin{gather*}
\max\{\Vert J_t \Vert, \Vert K_t \Vert\}\le 2\sqrt{\alpha}\sigma_1^{1/4}+2C_2\log(\sqrt{\sigma_1}/n\alpha) (\delta_{2k+1} \cdot \kappa^2\sqrt{\sigma_1})\le \sqrt{\sigma_1}\\
\|F_tG_t^\top - \Sigma\| \le \sigma_r/2\\
\max\{\|U_t\|, \|V_t\|\} \le 2\sqrt{\sigma_1}.
\end{gather*}
Now note that 
\begin{align}
    \|U_{t-1}K_{t-1}^\top \| \ge \lambda_{\min}(U_{t-1}) \cdot \|K_{t-1}^\top \| = \sigma_r(U_{t-1}) \cdot \|K_{t-1}^\top \| \ge \frac{\sigma_r}{4\sqrt{\sigma_1}}\cdot \|K_{t-1} \|, \label{lower bound of UK: exact}
\end{align}
Now since $\delta_{2k+1} =\cO(\kappa^{-3})$ and $\alpha =\cO(\kappa^{-1}\sqrt{\sigma_r})$ are small parameters, we can derive the $M_t$'s lower bound by 
\begin{align}
    M_{t-1} &\ge \|U_{t-1}K_{t-1}^\top \| \nonumber\\&\ge \frac{\sigma_r}{4\sqrt{\sigma_1}}\cdot \|K_{t-1} \|\nonumber\\& \ge \frac{\sigma_r}{4\sqrt{\sigma_1}}\|K_{t-1}\|\cdot \frac{\|J_{t-1}\|}{\sqrt{\sigma_1}}\\&\ge 64\sigma_1 \cdot \frac{17\sigma_1\delta_{2k+1} + \alpha^2}{\sigma_r^2}\|J_{t-1}K_{t-1}^\top \|.
\end{align}
Hence, \eqref{exact: Mt cond} always holds for $t \ge T_0$, and then by \eqref{ineq: Mt exact}, we will have 
\begin{align*}
    M_t &\le \left(1-\frac{\eta\sigma_r^2}{16\sigma_1}\right)^{t-T_0}M_{T_0}\\&\le \left(1-\frac{\eta\sigma_r^2}{16\sigma_1}\right)^{t-T_0}\|F_{T_0}G_{T_0}^T\|\\
    &\le \frac{\sigma_r}{2}\cdot \left(1-\frac{\eta\sigma_r^2}{16\sigma_1}\right)^{t-T_0} .
\end{align*}

Thus, we can bound the loss by 
\begin{align}
    \|F_tG_t^\top-\Sigma\| &\le \|U_tV_t^\top - \Sigma\| + \|J_tV_t^\top \| + \| U_t K_t^\top \| + \|J_tK_t^\top \|\nonumber\\&\le 3M_t + \|J_tK_t^\top \|\nonumber\\&\le 3M_t + \cO(2\alpha + \delta_{2k+1}\kappa \sqrt{\sigma_1}\log(\sqrt{\sigma_1}/n\alpha) \cdot \frac{4\sqrt{\sigma_1}}{\sigma_r}M_t\nonumber\\
    &\le 4M_t\label{ineq: explain exact} \\&\le 2\sigma_r\cdot \left(1-\frac{\eta\sigma_r^2}{64\sigma_1}\right)^{t-T_0}.\nonumber
\end{align}
where Eq.\eqref{ineq: explain exact} uses the fact that $\delta_{2k+1} \le \cO(\kappa^{-2}\log^{-1}(\sqrt{\sigma_1}/n\alpha))$ and $\alpha \le \cO(\sigma_r/\sqrt{\sigma_1}).$ Now we can choose $T^{(2)} = 2T_0$, and then by $t-T_0 \ge t/2$ for all $t\ge T^{(2)}$, we have 
\begin{align}
    \|F_tG_t^T-\Sigma \|_F^2 \le n \|F_tG_t^T-\Sigma \|^2 \le 2n\sigma_r\cdot \left(1-\frac{\eta\sigma_r^2}{64\sigma_1}\right)^{t-T_0} \le 2n\sigma_r\cdot \left(1-\frac{\eta\sigma_r^2}{64\sigma_1}\right)^{t/2}.
\end{align}
We complete the proof.

\section{Proof of Theorem \ref{thm:fast method}}\label{appendix: proof of fast}
During the proof of Theorem \ref{thm:fast method}, we assume $\beta$ satisfy that 
\begin{align}
    \label{eq: beta condition}
    \max\{c_7\gamma^{1/6}\sigma_1^{1/3}, c\delta_{2k+1}^{1/6} \kappa^{1/6}\sigma_1^{5/12}\}\le \beta \le c_8\sqrt{\sigma_r}
\end{align}
for some large constants $c_7, c$ and small constant $c_8$. In particular, this requirement means that $\gamma \le \sigma_r/4$. Then, since $\|\cA^*\cA(\tilde{F}_{T^{(3)}}\tilde{G}_{T^{(3)}}^\top-\Sigma)\| \ge \frac{1}{2}\|\tilde{F}_{T^{(3)}}\tilde{G}_{T^{(3)}}^\top-\Sigma\|$ by RIP property and $\delta_{2k+1} \le 1/2$, we can further derive $\|F_{T^{(3)}}G_{T^{(3)}}^\top-\Sigma\|=\|\tilde{F}_{T^{(3)}}\tilde{G}_{T^{(3)}}^\top-\Sigma\| \le \sigma_r/2.$ 

To guarantee \eqref{eq: beta condition}, we can use choose $\gamma$ to be small enough, i.e., $\gamma \ll \sigma_1\kappa^{-2}$, so that \eqref{eq: beta condition} holds easily. In the following, we denote $\delta_{2k+1} = \sqrt{2k+1}\delta$.

\subsection{Proof Sketch of Theorem \ref{thm:fast method}}
First, suppose we modify the matrix $\widetilde{F}_{T^{(3)}}, \widetilde{G}_{T^{(3)}}$ to $F_{T^{(3)}}$ and $G_{T^{(3)}}$ at $t =T^{(3)},$ then $\|F_{T^{(3)}}\|^2 = \lambda_{\max}((F_{T^{(3)}})^\top F_{T^{(3)}})=\beta^2$ and $\|U_{T^{(3)}}\|^2 \le \beta^2.$ 
Also, by $\|\widetilde{F}_{T^{(3)}}\|\le 2\sqrt{\sigma_1}$, we can get that $\|G_{T^{(3)}}\| \le \|\widetilde{G}_{T^{(3)}}\|\cdot \frac{\|\widetilde{F}_{T^{(3)}}\|}{\beta}\le \|\widetilde{G}_{T^{(3)}}\|\cdot \frac{2\sqrt{\sigma_1}}{\beta}$ is still bounded. Similarly, $\|V_{T^{(3)}}\|\le \|\widetilde{V}_{T^{(3)}}\|\cdot \frac{2\sqrt{\sigma_1}}{\beta}$ and $\|K_{T^{(3)}}\| \le \|\widetilde{K}_{T^{(3)}}\| \cdot \frac{2\sqrt{\sigma_1}}{\beta}$ is still bounded.  With these conditions, define $S_t= \max\{\|U_tK_t^\top \|, \|J_tK_t^\top \|\}$ and $P_t = \max\{\|J_tV_t^\top \|, \|U_tV_t^\top -\Sigma\|\}$. 
For $\|K_{t+1}\|$, since we can prove $\lambda_{\min}(F_t^\top F_t)\ge \beta^2/2$ for all $t \ge T^{(3)}$ using induction, with the updating rule, we can bound $\|K_{t+1}|$ as the following
\begin{align}
    \|K_{t+1}\|&\le \|K_t\|\|1-\eta F_t^\top F_t\| + 2\eta \delta_{2k+1}\cdot \|F_tG_t^\top -\Sigma\|\max\{\|U_t\|,\|J_t\|\}\\
    &\le \|K_t\|\cdot \left(1-\frac{\eta \beta^2}{2}\right) + \left(4\eta \delta_{2k+1}\beta\cdot P_t + 4\beta^2\eta \delta_{2k+1}\|K_t\|\right).\label{update rule of Kt main text fast}
\end{align}
The first term of \eqref{update rule of Kt main text fast} ensures the linear convergence, and the second term represents the perturbation term. To control the perturbation term, for $P_t$, with more calculation 
(see details in the rest of the section), we have
\begin{align}
    P_{t+1}\le \left(1-\eta\sigma_r^2/8\beta^2\right) P_t + \eta \|K_t\|\cdot \widetilde{\cO}\left(\left(\delta_{2k+1}\sigma_1+\sqrt{\alpha \sigma_1^{7/4}}\right)/\beta\right).\label{update rule of Pt main}
\end{align}
The last inequality uses the fact that $S_t \le \|K_t\|\cdot \max\{\|U_t\|,\|J_t\|\}\le \|K_t\|\cdot \|F_t\| \le \sqrt{2}\beta\cdot \|K_t\|.$

Combining \eqref{update rule of Pt main} and \eqref{update rule of Kt main text fast}, we can show that $P_t + \sqrt{\sigma_1}\|K_t\|$ converges at a linear rate $(1-\cO(\eta \beta^2)),$ since the second term of Eq. \eqref{update rule of Pt main} and Eq.\eqref{update rule of Kt main text fast} contain $\delta_{2k+1}$ or $\alpha$, which is relatively small and can be canceled by the first term. Hence, $\|F_tG_t^\top -\Sigma\| \le 2P_t + 2S_t \le 2P_t + \sqrt{2}\beta\|K_t\|$ converges at a linear rate.

\subsection{Proof of Theorem \ref{thm:fast method}}
\label{sec: proof.fast_method}
At time $t\ge T^{(3)}$, we have $\sigma_{\min}(U_{T^{(3)}}V_{T^{(3)}}) \ge \sigma_{\min}(\Sigma) - \| U_{T^{(3)}}V_{T^{(3)}}^\top  - \Sigma\| \ge \sigma_r-\alpha^{1/2}\cdot \sigma_1^{3/4} \ge \sigma_r/2. $ The last inequality holds because $\alpha = O(\kappa^{-3/2}\cdot \sqrt{\sigma_r})$. 
Then, given that $\|F_{T^{(3)}}\|^2 = \lambda_{\max}((F_{T^{(3)}})^\top F_{T^{(3)}}) = \beta^2 $, we have $\|U_{T^{(3)}}\|^2 \le \beta^2$. Hence, by $\sigma_1(U) \cdot \sigma_r(V) \ge \sigma_r(UV^\top )$, we have $$\sigma_r(V_{T^{(3)}}) \ge \frac{\sigma_r(U_{T^{(3)}}V_{T^{(3)}})}{\sigma_1(U_{T^{(3)}})}\ge \frac{\sigma_r}{2\beta}.$$

Also, by $\sigma_1'=\|\widetilde{F}_{T^{(3)}}\|\le 2\sqrt{\sigma_1},$ we can get $$\|G_{T^{(3)}}\| \le \|\widetilde{G}_{T^{(3)}}\|\|B\Sigma_{inv}^{-1}\| \le \|\widetilde{G}_{T^{(3)}}\| \cdot \frac{\sigma_1'}{\beta}\le\|\widetilde{G}_{T^{(3)}}\| \cdot \frac{2\sqrt{\sigma_1}}{\beta}.$$ Similarly, $\|V_{T^{(3)}}\| \le  \|\widetilde{V}_{T^{(3)}}\| \cdot \frac{2\sqrt{\sigma_1}}{\beta}$ and $\|K_{T^{(3)}}\| \le  \|\widetilde{K}_{T^{(3)}}\| \cdot \frac{2\sqrt{\sigma_1}}{\beta}$.

Denote $S_t = \max\{\|U_tK_t^\top \|, \|J_tK_t^\top \|\}, P_t = \max\{\|J_tV_t^\top \|, \|U_tV_t^\top -\Sigma\|\}$. Now we prove the following statements by induction:
\begin{gather}
    P_{t+1} \le \left(1-\frac{\eta\sigma_r^2}{8\beta^2}\right)P_t + \eta S_t \cdot \cO \left(\frac{\log(\sqrt{\sigma_1}/n\alpha)\delta_{2k+1}\kappa^2 \sigma_1^2+\sqrt{\alpha}\sigma_1^{7/4}}{\beta^2}\right)\label{fast: condition 2}\\
    \|F_{t+1}G_{t+1}^\top -\Sigma\| \le \frac{\beta^6}{\sigma_1^2}\left(1-\frac{\eta \beta^2}{2}\right)^{t+1-T^{(3)}}\le \sigma_r/2\label{fast: condition 3}\\
    \max\{\|F_{t+1}\|,\|G_{t+1}\|\} \le 4 \sigma_1/\beta \label{fast: condition 1}
    \\ \frac{\beta^2}{2} I \le F_{t+1}^\top F_{t+1} \le 2\beta^2 I \label{fast: condition 4}\\
    \|K_{t}\|\le \cO(2\sqrt{\alpha}\sigma_1^{1/4}+\delta_{2k+1}\log(\sqrt{\sigma_1}/n\alpha)\cdot \kappa^2\sqrt{\sigma_1}) \cdot \frac{2\sqrt{\sigma_1}}{\beta} \label{fast: condition 5}
\end{gather}

\paragraph{Proof of Eq.\eqref{fast: condition 2}}

First, since $\|F_t\|^2 = \lambda_{\max}((F_t)^\top F_t) \le 2\beta^2 $, we have $\|U_{t}\|^2 \le 2\beta^2$. Then, because  $\sigma_{\min}(U_tV_t)\ge \sigma_{\min}(\Sigma) - \|U_tV_t^\top -\Sigma\| \ge \sigma_r/2$, by $\sigma_1(U) \cdot \sigma_r(V) \ge \sigma_r(UV^\top )$, we have $$\sigma_r(V_{t}) \ge \frac{\sigma_r(U_{t}V_{t})}{\sigma_1(U_{t})}\ge \frac{\sigma_r}{2\beta}.$$

we write down the updating rule as 
\begin{align*}
    &\quad U_{t+1}V_{t+1}^\top -\Sigma 
    \\&= (1-\eta U_tU_t^\top )(U_tV_t^\top -\Sigma)(1-\eta V_tV_t^\top )  -\eta U_tK_t^\top K_tV_t^\top -\eta U_tJ_t^\top J_tV_t^\top  + B_t
\end{align*}
where $B_t$ contains the $\cO(\eta^2)$ terms and $\cO(E_i(F_tG_t^\top -\Sigma))$ terms 
\begin{align*}
    \|B_t\| &\le 4\eta \delta_{2k+1} (F_tG_t^\top -\Sigma) \max\{\|F_t\|^2,\|G_t\|^2\} + \cO(\eta^2\|F_tG_t^\top -\Sigma\|^2 \max\{\|F_t\|^2,\|G_t\|^2\}) 
\end{align*}
Hence, we have 
\begin{align}
    &\quad \|U_{t+1}V_{t+1}^\top -\Sigma\| \nonumber\\&\le (1-\frac{\eta\sigma_r^2}{4\beta^2})\|U_tV_t^\top -\Sigma\| + \eta \|U_tK_t^\top \|\|K_tV_t^\top \| + \eta \|J_tV_t^\top \|\|J_t^\top U_t^\top \| + \|B_t\|\nonumber\\&\le (1-\frac{\eta\sigma_r^2}{4\beta^2})P_t + \eta S_t \|K_t\| \|V_t\| + \eta P_t \| J_t\|\|U_t\| + \|B_t\|\nonumber\\&\le (1-\frac{\eta\sigma_r^2}{4\beta^2})P_t + \eta S_t \cdot \frac{4\sigma_1}{\beta^2}\cdot \cO\left(2\sqrt{\alpha}\sigma_1^{1/4}+\delta_{2k+1}\log(\sqrt{\sigma_1}/n\alpha)\cdot \kappa^2\sqrt{\sigma_1}\right)\cdot 2\sqrt{\sigma_1} + \eta P_t \beta\cdot \beta \nonumber\\&\qquad   + 4\eta \delta_{2k+1}\cdot 2(P_t+S_t)\cdot 4\sigma_1\cdot \frac{4\sigma_1}{\beta^2} + \cO(\eta^2(P_t+S_t)^2\cdot 4\sigma_1\cdot \frac{4\sigma_1}{\beta^2})\nonumber\\
    &\le  \left(1-\frac{\eta\sigma_r^2}{8\beta^2}\right)P_t + \eta S_t \cdot \cO \left(\frac{\log(\sqrt{\sigma_1}/n\alpha)\delta_{2k+1}\kappa^2 \sigma_1^2+\sqrt{\alpha}\sigma_1^{7/4}}{\beta^2}\right)\label{ineq: PtSt recursion}
\end{align}
The last inequality uses the fact that 
\begin{gather}
    \beta^2 = \cO(\sigma_r^{1/2})\nonumber\\
    \delta_{2k+1} = \cO(\kappa^{-2})\nonumber\\
    P_t + S_t \le 2\|F_tG_t^\top -\Sigma\| \le \cO(\sigma_1^2/\beta^2)\le 1/\eta.\nonumber
\end{gather}
Similarly, we have 
\begin{align*}
    &\quad \|J_{t+1}V_{t+1}^\top \|\\&\le \left(1-\eta J^\top J\right) JV^\top  (1-\eta V^\top V) -\eta JK^\top KV^\top  - \eta JU^\top (UV^\top -\Sigma) +C_t  
\end{align*}
where $C_t$ satisfies that 
\begin{align*}
    \|C_t\| &\le 4\eta \delta_{2k+1} (F_tG_t^\top -\Sigma)\max\{\|F_t\|^2, \|G_t\|^2\}  + \cO(\eta^2\|F_tG_t^\top -\Sigma\|\max\{\|F_t\|^2, \|G_t\|^2\}) \\
    &\le 4\eta \delta_{2k+1}\cdot 2(P_t+S_t)\cdot \frac{16\sigma_1^2}{\beta^2} + \cO(\eta^2(P_t+S_t)\cdot \sigma_1\cdot \frac{\sigma_1}{\beta^2}).
\end{align*}

Thus, similar to Eq.\eqref{ineq: PtSt recursion}, we have 
\begin{align*}
    \|J_{t+1}V_{t+1}^\top  \| 
    \le \left(1-\frac{\eta\sigma_r^2}{8\beta^2}\right)P_t + \eta S_t \cdot \cO \left(\frac{\log(\sqrt{\sigma_1}/n\alpha)\delta_{2k+1}\kappa^2 \sigma_1^2+\sqrt{\alpha}\sigma_1^{7/4}}{\beta^2}\right).
\end{align*}
Hence, we have 
\begin{align*}
    P_{t+1} \le \left(1-\frac{\eta\sigma_r^2}{8\beta^2}\right)P_t + \eta S_t \cdot \cO \left(\frac{\log(\sqrt{\sigma_1}/n\alpha)\delta_{2k+1}\kappa^2 \sigma_1^2+\sqrt{\alpha}\sigma_1^{7/4}}{\beta^2}\right).
\end{align*}
\paragraph{Proof of Eq.\eqref{fast: condition 3}}

We have
$S_t \le \|K_t\| \cdot \max\{\|U_t\|, \|J_t\|\}\le \|K_t\| \cdot \|F_t\|\le \sqrt{2}\beta\cdot \|K_t\|$. So the inequality above can be rewritten as 
\begin{align*}
    P_{t+1} &\le \left(1-\frac{\eta\sigma_r^2}{8\beta^2}\right)P_t + \eta  \sqrt{2}\beta \cdot\|K_t\|\cdot  \cO \left(\frac{\log(\sqrt{\sigma_1}/n\alpha)\delta_{2k+1}\kappa^2 \sigma_1^2+\sqrt{\alpha}\sigma_1^{7/4}}{\beta^2}\right) \\&= \left(1-\frac{\eta\sigma_r^2}{8\beta^2}\right)P_t + \eta \|K_t\| \cdot \cO \left(\frac{\log(\sqrt{\sigma_1}/n\alpha)\delta_{2k+1}\kappa^2 \sigma_1^2+\sqrt{\alpha}\sigma_1^{7/4}}{\beta}\right)
\end{align*}
Also, for $K_{t+1}$, we have 
\begin{align*}
    \|K_{t+1}\| &= \|K_t\|\|(1-\eta F_t^\top F_t)\| + 2\delta_{2k+1}\cdot \|F_tG_t^\top -\Sigma\|\max\{\|U_t\|, \|J_t\|\}\\
    &\le \|K_t\|(1-\frac{\eta \beta^2}{2}) + 2\eta \delta_{2k+1}\cdot (P_t+S_t)\cdot \sqrt{2}\beta \\
    &\le \|K_t\|(1-\frac{\eta \beta^2}{2}) +2\eta \delta_{2k+1}\cdot P_t\cdot \sqrt{2}\beta  + 2\eta \delta_{2k+1}\cdot \sqrt{2}\beta  \|K_t\| \cdot\sqrt{2}\beta \\
    &=\|K_t\|(1-\frac{\eta \beta^2}{2}) +4\eta \delta_{2k+1}\cdot \beta P_t  + 4\beta^2\eta \delta_{2k+1}\cdot  \|K_t\|
\end{align*}
Thus, we can get 
\begin{align*}
   &\quad  P_{t+1} + \sqrt{\sigma_1}\|K_{t+1}\| \\&\le \max\{1-\frac{\eta\sigma_r^2}{8\beta^2}, 1-\frac{\eta \beta^2}{2}\}(P_t + \|K_t\|)  \\&\qquad +\eta\max\left\{\cO \left(\frac{\log(\sqrt{\sigma_1}/n\alpha)\delta_{2k+1}\kappa^2 \sigma_1^{3/2}+\sqrt{\alpha}\sigma_1^{5/4}}{c}\right)+ 4\beta^2 \delta_{2k+1}, 4\beta \sqrt{\sigma_1}\delta_{2k+1}\right\}\\& \qquad \qquad \cdot (P_t + \sqrt{\sigma_1}\|K_t\|)\\
    &\le (1-\frac{\eta \beta^2}{4})(P_t + \sqrt{\sigma_1}\|K_t\|).
\end{align*}
The last inequality uses the fact that $\beta\le \cO(\sigma_r^{1/2})$ and 
\begin{align}
\delta_{2k+1} \le \cO(\beta/\sqrt{\sigma_1}\log(\sqrt{\sigma_1}/n\alpha)).
\end{align}
Hence, \begin{align*}
    \|K_{t}\| &\le (P_{T^{(3)}}/\sqrt{\sigma_1}+\|K_{T^{(3)}}\|)\cdot \left(1-\frac{\eta \beta^2}{2}\right)^{t-T^{(3)}}\\
    &\le \|K_{T^{(3)}}\|+\|F_tG_t^\top -\Sigma\|/\sqrt{\sigma_1}\\
    &\le \cO(\sqrt{\alpha}\sigma_1^{1/4}+\delta_{2k+1}\log(\sqrt{\sigma_1}/n\alpha)\cdot \kappa^2\sqrt{\sigma_1})+\alpha^{1/2}\cdot \sigma_1^{1/4}\\
    &= \cO(\sqrt{\alpha}\sigma_1^{1/4}+\delta_{2k+1}\log(\sqrt{\sigma_1}/n\alpha)\cdot \kappa^2\sqrt{\sigma_1})
\end{align*}
Hence, $P_t + \sqrt{\sigma_1}\|K_t\|$ is linear convergence. Hence, by $\beta\le \sqrt{\sigma_1}$, \begin{align*}\quad \|F_{t+1}G_{t+1}^\top -\Sigma\|& \le 2P_{t+1} + 2S_{t+1} \\&\le 2P_{t+1} + \sqrt{2}\beta\|K_{t+1}\|\\&\le (2+\sqrt{2}\beta/\sqrt{\sigma_1})(P_{t+1} + \sqrt{\sigma_1}\|K_{t+1}\|)\\&\le 4(P_{T^{(3)}} + \sqrt{\sigma_1}\|K_{T^{(3)}}\|)\cdot  \left(1-\frac{\eta \beta^2}{2}\right)^{t+1-T^{(3)}}\end{align*}
Last, 
note that by $\beta\ge c_7(\gamma^{1/6}\sigma_1^{1/3})$ and 
$\beta \ge c\delta_{2k+1}^{1/6}\kappa^{1/6}\sigma_1^{5/12}\log(\sqrt{\sigma_1}/n\alpha)^{1/6},$ by choosing 
 for some constants $c_7$ and $c$, by choosing large $c'$ and $c_7 = 2^6$, we can get 
$$\gamma \le \frac{\beta^6}{2\sigma_1^2}, \quad \sqrt{\sigma_1}\cdot\cO(\log(\sqrt{\sigma_1}/n\sqrt{\alpha})\delta_{2k+1}\cdot \sigma_1^{3/2}/\sigma_r) \cdot (2\sqrt{\sigma_1}/\beta) \le \frac{\beta^6}{2\sigma_1^2}$$ and 
$$P_{T^{(3)}} + \sqrt{\sigma_1}\|K_{T^{(3)}}\| \le \gamma + \sqrt{\sigma_1}\cdot\cO(\log(\sqrt{\sigma_1}/n\sqrt{\alpha})\delta_{2k+1}\cdot \sigma_1^{3/2}/\sigma_r) \cdot (2\sqrt{\sigma_1}/\beta)\le \beta^6/\sigma_1^2 $$
we have 
\begin{align}
    \|F_{t+1}G_{t+1}^\top -\Sigma\| \le \left(\frac{\beta^6}{\sigma_1^2}\right) \left(1-\frac{\eta \beta^2}{2}\right)^{t+1-T^{(3)}}
\end{align}
\paragraph{Proof of Eq.\eqref{fast: condition 1}}

Note that we have  $\max\{\|F_{T^{(3)}}\|,\|G_{T^{(3)}}\|\} \le 4\sqrt{\sigma_1}\cdot \sqrt{\sigma_1}/\beta = 4\sigma_1/\beta$. Now suppose  $\max\{\|F_{t'}\|,\|G_{t'}\|\} \le 4\sqrt{\sigma_1}\cdot \sqrt{\sigma_1}/\beta = 4\sigma_1/\beta$ for all $t' \in [T^{(3)}, t]$, 
then the changement of $F_{t+1}$ and $G_{t+1}$ can be bounded by 
\begin{align*}
    \|F_{t+1}-F_{T^{(3)}}\| &\le\eta  \sum_{t'=T^{(3)}}^{t}2\| F_{t'}G_{t'}-\Sigma\|\|G_{t'}\|\le \eta\cdot 2\cdot \left(\frac{\beta^6}{\sigma_1^2}+\frac{\sigma_r}{2}\right)\cdot \frac{2}{\eta \beta^2}\frac{4\sigma_1}{\beta} \le \frac{16\beta^3}{\sigma_1} + \frac{8\sigma_1^2}{\beta^3}\\
    \|G_t-G_{T^{(3)}}\| &\le \eta\sum_{t'=T^{(3)}}^{t-1}2\| F_{t'}G_{t'}-\Sigma\|\|F_{t'}\|\le \frac{16\beta^3}{\sigma_1}+\frac{8\sigma_1^2}{\beta^3}
\end{align*}

Then, by the fact that $\beta\le \cO(\sigma_1^{-1/2})$, we can show that 
\begin{gather*}
    \|F_{t+1}\| \le \|F_{T^{(3)}}\| + \|F_{t+1}-F_{T^{(3)}}\| \le \frac{2\sigma_1}{\beta} + \frac{16\beta^3}{\sigma_1}+\frac{8\sigma_1^2}{\beta^3}\le \frac{4\sigma_1}{\beta},\\
    \|G_{t+1}\| \le \|G_{T^{(3)}}\| + \|G_{t+1}-G_{T^{(3)}}\| \le \frac{2\sigma_1}{\beta} + \frac{16c^3}{\sigma_1}+\frac{8\sigma_1^2}{\beta^3}\le \frac{4\sigma_1}{\beta}.
\end{gather*}
\paragraph{Proof of Eq.\eqref{fast: condition 4}}

Moreover, we have 
\begin{align*}
    \sigma_{k}(F_{t+1}) &\ge \sigma_{k}(F_{T^{(3)}})-\sigma_{\max}(F_{t+1}-F_{T^{(3)}})\\&=\sigma_{k}(F_{T^{(3)}})-\|F_{t+1}-F_{T^{(3)}}\| \\&\ge \beta - \frac{16\beta^3}{\sigma_1}\\
    &\ge \beta/\sqrt{2},
\end{align*}
and 
\begin{align*}
    \|F_t\| \le \|F_{T^{(3)}}\| + \|F_t-F_{T^{(3)}}\|\le \beta +\frac{16\beta^3}{\sigma_1} \le \sqrt{2}\beta.
\end{align*}
The last inequality is because $\beta\le \cO(\sigma_1^{-1/2})$.
Hence, since $F_{t+1} \in \RR^{n\times k}$, we have 
\begin{align}
    \frac{\beta^2}{2}I\le F_{t+1}^\top F_{t+1}\le 2\beta^2I
\end{align}
Thus, we complete the proof.

\section{Technical Lemma}\label{appendix: technical lemma}
\subsection{Proof of Lemma \ref{lemma:initialangle}}\label{sec: proof of lemma initial angle}
\begin{proof}
    We only need to prove with high probability, 
    \begin{align}
        \max_{i,j \in [n]}\cos^2 \theta_{x_j,x_k} \le \frac{c}{\log^2(r\sqrt{\sigma_1}/\alpha)(r\kappa)^2}. 
    \end{align}
    In fact, since $\cos^2\theta_{x_j,x_k} =\sin^2(\frac{\pi}{2}-\theta_{x_j,x_k})\le (\pi/2-\theta_{x_j,x_k})^2$, we have
    \begin{align}
        \PP\left[|\pi/2-\theta_{x_j,x_k}|> \cO\left(\frac{\sqrt{c}}{\log(r\sqrt{\sigma_1}/\alpha)r\kappa }\right)\right]\ge \PP\left[\cos^2 \theta_{x_j,x_k}> \cO\left(\frac{c}{\log^2(r\sqrt{\sigma_1}/\alpha)(r\kappa)^2}\right)\right].
    \end{align}
    Moreover, for any $m>0$, by Lemma \ref{lemma:angle},  
    \begin{align}
        \PP\left[|\pi/2-\theta_{x_j,x_k}|> m\right] \le \cO\left(\frac{(\sin(\frac{\pi}{2}-m))^{k-2}}{1/\sqrt{k-2}}\right) &= \cO\left(\sqrt{k-2}(\cos m)^{k-2}\right)\\&\le \cO\left(\sqrt{k}(1-m^2/4)^{k-2}\right)\\&\le \cO\left(\sqrt{k}\exp\left(-\frac{4k}{m^2}\right)\right).
    \end{align}
    The second inequality uses the fact that $\cos x \le 1-x^2/4.$ Then, if we choose $$m = \frac{\sqrt{c}}{\log(r\sqrt{\sigma_1}/\alpha)r\kappa}$$ and let $k \ge 16/m^4= \frac{16\log^4(r\sqrt{\sigma_1}/\alpha)(r\kappa)^4}{c^2}$, we can have 
    \begin{align}
        \PP\left[\cos^2 \theta_{x_j,x_k}> m^2\right]&\le 
 \PP\left[|\pi/2-\theta_{x_j,x_k}|> m\right]  \\&\le  \cO\left(k\exp\left(-\frac{m^2k}{4}\right)\right)\\& \le \cO\left(k\exp\left(-\sqrt{k}\right)\right)
    \end{align}
    Thus, by taking the union bound over $j,k \in [n]$, there is a constant $c_2$ such that, with probability at least $1-c_4n^2k\exp(-\sqrt{k}), $ we have 
    \begin{align}
        \theta_0 \le \frac{c}{\log^2(r\sqrt{\sigma_1}/\alpha)(r\kappa)^2}.
    \end{align}
\end{proof}
\subsection{Proof of Lemma \ref{lemma:initial length}}\label{proof of lemma initial legnth}
\begin{proof}
    Since $x_i = \alpha/\sqrt{k}\cdot \tilde{x}_i$, where each element in $\tilde{x}_i$ is sampled from $\cN(0,1)$. By Theorem 3.1 in \cite{vershynin2018high}, there is a constant $c$ such that
    \begin{align}
        \PP\left[|\|\tilde{x}_i^0\|_2^2-k|\ge t\right]\le 2\exp(-ct)
    \end{align}
    Hence, choosing $t= (1-\frac{1}{\sqrt{2}})k$, we have
    \begin{align*}
        \PP[\|\tilde{x}_i^0\|_2^2 \in [k/\sqrt{2},\sqrt{2}k]]\le\PP[|\|\tilde{x}_i^0\|_2^2-k|\ge t]\le  2\exp(-ct) \le 2\exp(-ck/4)
    \end{align*}
    Hence, 
    \begin{align}
        \PP\Big[\|x_i^0\|^2 \in [\alpha^2/2,2\alpha^2]\Big] = \PP\Big[\|\tilde{x}_i^0\|^2 \in [k/\sqrt{2},\sqrt{2}k]\Big] \le 2\exp(-ck/4).
    \end{align}
    By taking the union bound over $i \in [n]$, we complete the proof.
\end{proof}

\begin{lemma}\label{lemma:angle}
    Assume $x,y \in \RR^n$ are two random vectors such that each element is independent and sampled from $\cN(0,1)$, then define $\theta$ as the angle between $x,y$, we have
    \begin{align}
        \PP\Big(\Big|\theta-\frac{\pi}{2}\Big| \le m\Big) \le \frac{3\pi\sqrt{n-2}(\sin (\pi/2-m))^{n-2}}{4\sqrt{2}}.
    \end{align}
\end{lemma}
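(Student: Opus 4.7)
The plan is to reduce the statement to a direct integration against the explicit density of the angle between two independent isotropic Gaussians. As a preliminary remark, the statement as written has $\PP(|\theta - \pi/2|\le m)\le \ldots$, but the right side exceeds $1$ when $m$ is small and goes to $0$ when $m\to\pi/2$, while the left side does the opposite; the intended direction (and the one actually invoked in the proof of Lemma~\ref{lemma:initialangle}) is $\PP(|\theta - \pi/2|\ge m)$, so I will prove that version.

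First I would note that by rotational invariance of the Gaussian law, the direction $y/\|y\|$ is uniform on $S^{n-1}$ and is independent of $x$; conditioning on $x$ and applying the standard surface-area computation, the angle $\theta\in[0,\pi]$ has density
\[
p(\theta) \;=\; \frac{\Gamma(n/2)}{\sqrt{\pi}\,\Gamma((n-1)/2)}\,\sin^{n-2}(\theta).
\]
Because $\sin^{n-2}$ is symmetric about $\pi/2$, the tail satisfies $\PP(|\theta-\pi/2|\ge m)=2\,\PP(\theta\le \pi/2-m)$. On the interval $[0,\pi/2-m]$ the function $\sin$ is increasing and bounded by $\sin(\pi/2-m)$, so the elementary estimate ``integrand $\le$ max $\times$ length'' gives
\[
\PP(\theta\le \pi/2-m) \;\le\; \frac{\Gamma(n/2)}{\sqrt{\pi}\,\Gamma((n-1)/2)}\,\cdot\,\frac{\pi}{2}\,\cdot\,\sin^{n-2}\!\bigl(\pi/2-m\bigr).
\]
Doubling and simplifying yields $\PP(|\theta-\pi/2|\ge m)\le \sqrt{\pi}\cdot\frac{\Gamma(n/2)}{\Gamma((n-1)/2)}\sin^{n-2}(\pi/2-m)$.

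The remaining step is to control the Gamma-function ratio. Using a standard bound of Wendel/Gautschi type, $\Gamma(n/2)/\Gamma((n-1)/2)\le \sqrt{(n-1)/2}$, the prefactor becomes $\sqrt{\pi}\sqrt{(n-1)/2}=\sqrt{\pi(n-1)/2}$. Since $\sqrt{\pi/2}\approx 1.253$ is strictly smaller than the target constant $\tfrac{3\pi}{4\sqrt{2}}\approx 1.667$, and since $\sqrt{n-1}\le\sqrt{n-2}\cdot(1+o(1))$ with the slack absorbed by the constant gap, the claimed bound
\[
\PP\!\left(|\theta-\pi/2|\ge m\right)\;\le\; \frac{3\pi\sqrt{n-2}\,(\sin(\pi/2-m))^{n-2}}{4\sqrt{2}}
\]
follows immediately for all $n\ge 3$; a tiny residual case $n=2$ can be handled separately since then $\sqrt{n-2}=0$ forces the right side to be $0$, but the left side is also $0$ because $\theta$ has a $1$-dimensional distribution concentrated on $\{0,\pi\}$.

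The main (and only real) obstacle is the constant in the Gamma-function ratio: a crude Stirling estimate gives the right asymptotic but may pollute the constant. The cleanest route is to quote a sharp monotonicity inequality (e.g.\ Wendel's $\Gamma(x+s)/\Gamma(x)\le x^s$ for $0<s\le 1$) applied at $x=(n-1)/2$, $s=1/2$, which gives $\Gamma(n/2)/\Gamma((n-1)/2)\le\sqrt{(n-1)/2}$ without any $\sqrt{n}$-slack; everything else is bookkeeping.
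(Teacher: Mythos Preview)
Your approach is essentially the paper's: use the explicit density $p(\theta)\propto\sin^{n-2}\theta$, then bound the tail integral by $\text{(max integrand)}\times\text{(interval length)}$. The only substantive difference is in how the normalization is handled: the paper writes the probability as a ratio of integrals and lower-bounds the denominator $\int_0^{\pi/2}\cos^{n-2}\theta\,d\theta$ via the elementary inequality $\cos\theta\ge 1-\theta^2/2$, obtaining $\int_0^{\pi/2}\cos^{n-2}\theta\,d\theta\ge\frac{2\sqrt{2}}{3\sqrt{n-2}}$; you instead bound the Gamma ratio directly by Wendel's inequality. Both are equivalent routes to the same estimate.

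Two small issues, neither fatal for the application. First, your Wendel bound gives prefactor $\sqrt{\pi(n-1)/2}$, and at $n=3$ this equals $\sqrt{\pi}\approx 1.772$, which exceeds the stated constant $\frac{3\pi}{4\sqrt{2}}\approx 1.666$; the ``slack absorbed by the constant gap'' line fails precisely there (it does work for $n\ge 4$). The paper's integral lower bound is tight at $n=3$, which is why their constant goes through. Second, your $n=2$ remark is incorrect: in $\RR^2$ the angle between two independent Gaussians is uniform on $[0,\pi]$, not concentrated on $\{0,\pi\}$, so the left side is positive while the right side vanishes; the inequality simply does not hold at $n=2$ under either approach. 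Since the lemma is only invoked in dimension $k$, which the paper takes to be large, both points are cosmetic.
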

\begin{proof}

First, it is known that $\frac{x}{\|x\|}$ and $\frac{y}{\|y\|}$ are independent and uniformly distributed over the sphere $\mathbb{S}^{n-1}.$ Thus, without loss of generality, we can assume $x$ and $y$ are independent and uniformly distributed over the sphere.

    Note that $\theta \in [0,\pi]$, and the CDF of $\theta$ is 
    \begin{align}
        f(\theta) = \frac{\Gamma(n/2)\sin^{n-2}(\theta)}{\sqrt{\pi}\Gamma(\frac{n-1}{2})}
    \end{align}
    Then, we have 
    \begin{align}
        \PP\Big(\Big|\theta-\frac{\pi}{2}\Big| > m\Big) &= 1-\frac{\int_{\pi/2-m}^{\pi/2+m}\sin^{n-2}\theta d\theta }{\int_{0}^\pi \sin^{n-2}\theta d\theta}=\frac{\int_0^{\pi/2-m}\sin^{n-2}\theta d\theta}{\int_0^{\pi/2}\sin^{n-2}\theta d\theta}\\&\le \frac{(\pi/2)\cdot \sin^{n-2}(\pi/2-m)}{\int_0^{\pi/2} \cos^{n-2}\theta d\theta}\\&\le \frac{(\pi/2 \cdot (\pi/2-m)^{n-2})}{\int_0^{\sqrt{2}}(1-t^2/2)^{n-2}dt}\\
        &\le \frac{(\pi/2)\cdot (\pi/2-m)^{n-2}}{\frac{2\sqrt{2}}{3\sqrt{n-2}}}\\
        &=\frac{3\pi\sqrt{n-2}(\sin (\pi/2-m))^{n-2}}{4\sqrt{2}}.
    \end{align}
\end{proof}

\begin{lemma}[Lemma 7.3 (1) in \cite{stoger2021small}]\label{lemma: rip matrix}
Let $\cA$ be a linear measurement operator that satisfies the RIP property of order $2k+1$ with constant $\delta,$ then we have for all matrices with rank no more than $2k$ 
\begin{align}
    \|(I-\cA^*\cA)(X)\|\le \sqrt{2k}\cdot \delta \|X\|.
\end{align}
\end{lemma}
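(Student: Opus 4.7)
The plan is to reduce the operator-norm bound to the standard ``RIP inner-product'' inequality, by writing the spectral norm as a supremum over rank-one test matrices. Concretely, I would proceed as follows.

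First, I would derive the inner-product version of RIP from Definition \ref{def: RIP}: if $M_1, M_2$ are matrices with $\mathrm{rank}(M_1) + \mathrm{rank}(M_2) \le 2k+1$, then
\[
|\langle (\mathcal{I} - \mathcal{A}^*\mathcal{A})(M_1),\, M_2\rangle| \le \delta \, \|M_1\|_F \, \|M_2\|_F.
\]
This is a standard polarization argument: apply the RIP definition to $M_1/\|M_1\|_F \pm M_2/\|M_2\|_F$ (both of rank at most $2k+1$) and subtract, using self-adjointness of $\mathcal{I}-\mathcal{A}^*\mathcal{A}$.

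Next, I would use the variational characterization of the spectral norm: for any matrix $M$,
\[
\|M\| = \sup_{\|u\|=\|v\|=1} u^\top M v = \sup_{\|u\|=\|v\|=1} \langle M,\, u v^\top\rangle.
\]
Setting $M = (\mathcal{I}-\mathcal{A}^*\mathcal{A})(X)$ and noting that $uv^\top$ has rank one while $X$ has rank at most $2k$, the total rank is at most $2k+1$, so the inner-product RIP bound gives
\[
|\langle (\mathcal{I}-\mathcal{A}^*\mathcal{A})(X),\, uv^\top\rangle| \le \delta \, \|X\|_F \, \|uv^\top\|_F = \delta \, \|X\|_F.
\]

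Finally, I would convert Frobenius norm to spectral norm using rank: for any matrix $X$ of rank at most $2k$, $\|X\|_F \le \sqrt{\mathrm{rank}(X)}\,\|X\| \le \sqrt{2k}\,\|X\|$. Combining gives $\|(\mathcal{I}-\mathcal{A}^*\mathcal{A})(X)\| \le \sqrt{2k}\,\delta\,\|X\|$, as desired. The only mildly subtle step is the polarization derivation of the inner-product RIP, which is classical; everything else is bookkeeping.
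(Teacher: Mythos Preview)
Your proposal is correct and is the standard argument for this RIP consequence. The paper itself does not give a proof of this lemma---it simply cites it as Lemma~7.3(1) of \cite{stoger2021small}---and your polarization-plus-rank-one-testing argument is essentially the proof found in that reference.
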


\begin{lemma}[\cite{soltanolkotabi2023implicit}]\label{lemma: mahdi complete}
    There exist parameters $\zeta_0$, $\delta_0, \alpha_0, \eta_0$ such that, if we choose $\alpha\le \alpha_0$, $F_0 = \alpha\cdot \tilde{F}_0, G_0 = (\alpha/3) \cdot \tilde{G}_0$, where the elements of $\tilde{F}_0, \tilde{G}_0$ is $\cN(0,1/n)$,\footnote{Note that in \cite{soltanolkotabi2023implicit}, the initialization is $F_0 = \alpha \cdot \tilde{F_0}$ and $G_0 = \alpha \cdot \tilde{G_0}$, while Lemma \ref{lemma: mahdi complete} uses a slightly imbalance initialization. It is easy to show that their techniques also hold with this imbalance initialization.} and suppose that the operator $\cA$ defined in Eq.\eqref{eq: observation_model} satisfies the restricted isometry property of order $2r+1$ with constant $\delta \le \delta_{0}$, then the gradient descent with step size $\eta \le \eta_0$ will achieve 
    \begin{align}
        \|F_tG_t^\top -\Sigma\| \le \alpha^{3/5}\cdot \sigma_1^{7/10}
    \end{align}
    within $T = \widetilde{\cO}(1/\eta \sigma_r)$ rounds with probability at least $1-\zeta_0$, where $\zeta_0 = c_1\exp (-c_2k) + (c_3\upsilon)^{k-r+1}$ is a small constant. Moreover, during $T$ rounds, we always have 
    \begin{align}
        \max\{\|F_t\|,\|G_t\|\} \le 2\sqrt{\sigma_1}.
    \end{align}

    The parameters $\alpha_0, \delta_0$ and $\eta_0$ are selected by 
    \begin{gather}
        \alpha_0 = \cO\left(\frac{\sqrt{\sigma_1}}{k^5\max\{2n,k\}^2}\right)\cdot \left(\frac{\sqrt{k}-\sqrt{r-1}}{\kappa^2\sqrt{\max\{2n,k\}}}\right)^{C\kappa}\\
        \delta_0 \le \cO\left(\frac{1}{\kappa^3\sqrt{r}}\right)\\
        \eta \le \cO\left(\frac{1}{k^5\sigma_1}\cdot \frac{1}{\log\left(\frac{2\sqrt{2\sigma_1}}{\upsilon\alpha (\sqrt{k}-\sqrt{r-1}}\right)}\right)
    \end{gather}
\end{lemma}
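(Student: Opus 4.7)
The plan is to obtain this lemma essentially by invoking the convergence theorem of \cite{soltanolkotabi2023implicit} and checking that the mild asymmetry in the initialization scale ($G_0 = (\alpha/3)\tilde G_0$ rather than $\alpha \tilde G_0$) does not disturb their argument. Since the statement is attributed to prior work, the emphasis is on identifying the few places in that analysis where the symmetric-initialization assumption is actually used and verifying that only constants change.

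First I would set up the initialization estimates. Using standard Gaussian singular-value concentration (for example \cite[Corollary~7.3.3]{vershynin2018high}), with probability at least $1-c_1\exp(-c_2 k)$ one has $\sigma_{\max}(\tilde F_0),\sigma_{\max}(\tilde G_0)\le 2$ and sharp lower bounds on $\sigma_r(\tilde F_0),\sigma_r(\tilde G_0)$, giving $\|F_0\|,\|G_0\|=\Theta(\alpha)$ after the rescaling. The factor $(c_3\upsilon)^{k-r+1}$ enters when one additionally demands a quantitative lower bound on the $(k-r+1)$-st singular value of a Gaussian projection onto the orthogonal complement of the signal subspace; this is exactly the seeding condition needed to drive the spectral-learning analysis and is unaffected by the $1/3$ rescaling.

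Next I would recall the main dynamical argument of \cite{soltanolkotabi2023implicit}. Writing $F_t=U_t\oplus J_t$ and $G_t=V_t\oplus K_t$ as in Section~\ref{sec: Preliminaries_asym_ms}, one tracks the signal block $U_tV_t^\top$ and the residual blocks $(J_t,K_t)$. Under RIP of order $2k+1$ with $\delta\le\delta_0=\mathcal O(1/(\kappa^3\sqrt r))$ and step size $\eta\le \eta_0$, the signal block grows geometrically with rate $1+\Theta(\eta\sigma_r)$ until it reaches the $\Theta(\sigma_1)$ scale, while the residual block remains of size $\alpha\cdot\mathrm{poly}(\kappa,n,\log(1/\alpha))$. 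After $T=\widetilde{\mathcal O}(1/\eta\sigma_r)$ iterations the signal error satisfies the target bound $\|F_TG_T^\top-\Sigma\|\le \alpha^{3/5}\sigma_1^{7/10}$, because the residual contribution and the cross-terms $U_tK_t^\top$, $J_tV_t^\top$, $J_tK_t^\top$ are all subleading to $\alpha^{3/5}\sigma_1^{7/10}$ for $\alpha\le \alpha_0$. The boundedness statement $\max\{\|F_t\|,\|G_t\|\}\le 2\sqrt{\sigma_1}$ follows by an inductive argument: as long as the bound holds up to step $t$, the gradient step is a small perturbation and the factor norms cannot grow past $2\sqrt{\sigma_1}$ within the relevant time window.

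The main obstacle, and the reason the result cannot simply be quoted verbatim, is that the analysis of \cite{soltanolkotabi2023implicit} is written for the balanced initialization $F_0=\alpha\tilde F_0$, $G_0=\alpha\tilde G_0$. Under our imbalanced choice the quantity $\Delta_0=F_0^\top F_0-G_0^\top G_0$ is of order $\alpha^2$ (positive definite), whereas in the balanced case it is of order $\eta\alpha^2\cdot\mathrm{poly}$ after a few steps. The delicate checks are therefore: (i) the lower bound $\lambda_{\min}(\Delta_0)=\Omega(\alpha^2)$ does not break any of the induction invariants in the original proof (it only helps, by providing an $\Omega(\alpha^2)$ slack in the balancing estimate \eqref{eq: Delta_t change little}); (ii) the upper bound $\|\Delta_0\|=\mathcal O(\alpha^2)$ is still consistent with their perturbation bounds, since they only require $\|\Delta_t\|$ be much smaller than $\sigma_r$. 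Both are straightforward to verify because the constant $1/3$ enters only through $\mathcal O(1)$ multiplicative factors everywhere in the spectral-learning bookkeeping, and none of those factors are sharp. Once this is checked, the stated rates, thresholds, and probability $1-\zeta_0$ follow from \cite[Theorem~3.3 and its proof]{soltanolkotabi2023implicit} with the parameters $\alpha_0,\delta_0,\eta_0$ as quoted.
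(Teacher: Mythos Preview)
Your proposal is correct and matches the paper's treatment: the paper does not prove this lemma at all but simply states it as a citation of \cite{soltanolkotabi2023implicit}, with only a footnote asserting that the imbalanced initialization $G_0=(\alpha/3)\tilde G_0$ does not disturb their argument. Your sketch of why the $1/3$ factor only perturbs constants in the spectral-learning bookkeeping is more detailed than what the paper provides, but is exactly the kind of verification the footnote is gesturing at.
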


\section{Experiment Details}\label{sec:exp}
In this section, we provide experimental results to corroborate our theoretical observations.  

\textbf{Symmetric Lower Bound}
In the first experiment, we choose $n=50, r=2$,  three different $k=5,3,2$ and learning rate $\eta = 0.01$ for the symmetric matrix factorization problem. The results are shown in Figure \ref{figure:symmetric}, which matches our $\Omega(1/T^2)$ lower bound result in Theorem \ref{thm:symmetric lower bound} for the over-parameterized setting, and previous linear convergence results for exact-parameterized setting. 

\textbf{Asymmetric Matrix Sensing}
In the second experiment, we choose configuration $n=50, k=4, r=2$, sample number $m = 700 \approx nk^2$   and learning rate $\eta = 0.2$ for the asymmetric matrix sensing problem. 
To demonstrate the direct relationship between convergence speed and initialization scale, we conducted multiple trials employing distinct initialization scales $\alpha = 0.5, 0.2, 0.05$. The experimental results in Figure \ref{figure:asymmetric} offer compelling evidence supporting three key findings:

$\bullet$ The loss exhibits a linear convergence pattern.

$\bullet$ A larger value of $\alpha$ results in faster convergence under the over-parameterization setting

$\bullet$ The convergence rate is not dependent on the initialization scale under the exact-parameterization setting.

These observations highlight the influence of the initialization scale on the algorithm's performance. 

In the last experiment, we run our new method with the same $n$ and $r$ but two different $k=3,4$. Unlike the vanilla gradient descent, at the midway point of the episode, we applied a transformation to the matrices $F_t$ and $G_t$ as specified by Eq. \eqref{eq: main_matrix change}. As illustrated in Figure \ref{figure:algfast}, it is evident that the rate of loss reduction accelerates after the halfway mark. This compelling observation serves as empirical evidence attesting to the efficacy of our algorithm.

\section{Additional Experiments}

In this section, we provide some additional experiments to further corroborate our theoretical findings.

\subsection{Comparisons between Asymmetric and Symmetric Matrix Sensing}

We run both asymmetric and symmetric matrix sensing with $n=50, n=4, r=2$ with sample $m = 1200$ and learning rate $ \eta = 0.2$. We run the experiment for three different initialization scales $\alpha = 0.5, 0.2, 0.05$. The experiment results in Figure \ref{fig:comp_asym_sym} show that asymmetric matrix sensing converges faster than symmetric matrix sensing under different initialization scales.
\begin{figure}[H]
    \centering
    \includegraphics[scale=0.6]{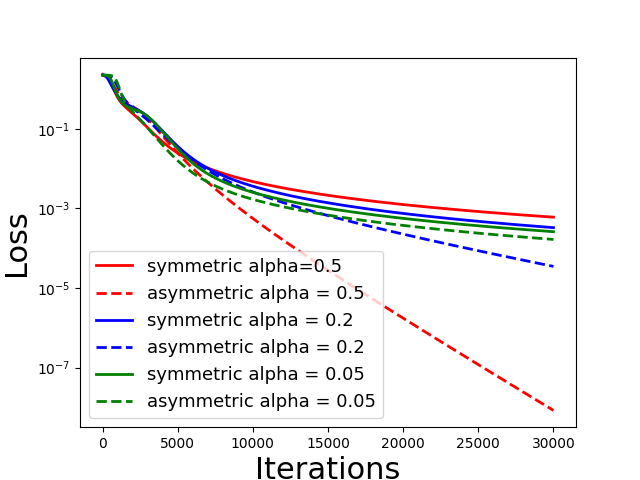}
    \caption{Comparisons between asymmetric and symmetric matrix sensing with different initialization scales. The dashed line represents the asymmetric matrix sensing, and the solid line represents the symmetric matrix sensing. Different color represents the different initialization scales.} 
    \label{fig:comp_asym_sym}
\end{figure}

\subsection{Well-Conditioned Case and Ill-Conditioned Case}

We run experiments with different conditional numbers of the ground-truth matrix. The conditional number $\kappa$ is selected as $\kappa = 1.5, 3$ and $10$. The minimum eigenvalue is selected by $0.66, 0.33$ and $0.1$ respectively. The experiment results are shown in Figure \ref{fig:different_kappa}

\begin{figure}[H]
    \centering
    \includegraphics[scale=0.6]{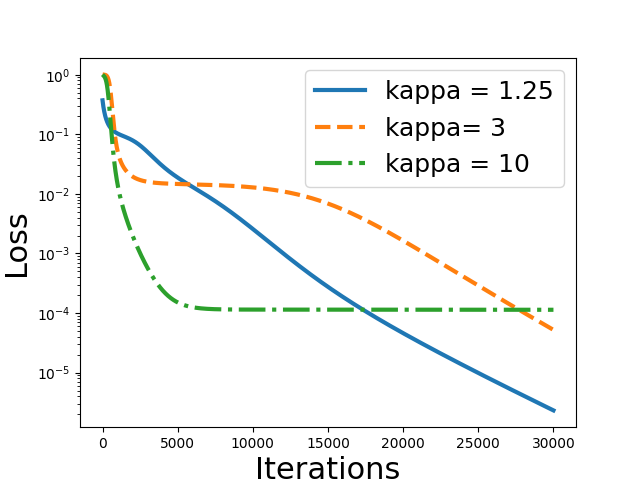}
    \caption{Comparisons between different conditional numbers}
    \label{fig:different_kappa}
\end{figure}

From the experiment results, we can see two phenomena:

$\bullet$ When the minimum eigenvalue is smaller, the gradient descent will converge to a smaller error at a linear rate. We call this phase the local convergence phase.

$\bullet$ After the local convergence phase, the curve first remains flat and then starts to converge at a linear rate again. We can see that the curve remains flat for a longer time when the matrix is ill-conditioned, i.e. $\kappa$ is larger. 

This phenomenon has been theoretically identified by the previous work for the incremental learning \citep{jiang2022algorithmic, jin2023understanding}, in which GD is shown to sequentially recover singular components of the ground truth from the largest singular value to the smallest singular value.

\subsection{Larger Initialization Scale}

We also run experiments with a larger initialization scale $\alpha.$
The experiment results are shown in Figure \ref{fig:dif_large_alpha}. 
We find that if $\alpha$ is overly large, i.e. $\alpha = 3$ and $5$, the algorithm actually converges slower and even fails to converge. This is reasonable since there is an upper bound requirement Eq. \eqref{eq: alpha_eta_requirement} for $\alpha$ in Theorem \ref{theorem:main theorem assymmetric}.

\begin{figure}[H]
    \centering
    \includegraphics[scale=0.6]{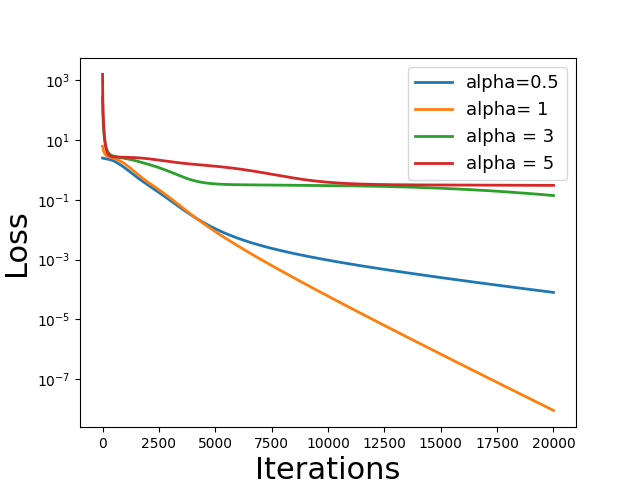}
    \caption{Comparisons between different large initialization scales}
    \label{fig:dif_large_alpha}
\end{figure}

\subsection{Larger True Rank and Over-Parameterized Rank}

We run experiments with larger configurations $n=50, k=10$ and $r=5$. We use $m=2000$ samples. The experiment results are shown in Figure \ref{figure:large}. We show that similar phenomena of symmetric and asymmetric cases also hold for a larger rank of the true matrix and a larger over-parameterized rank. Moreover, our new method also performs well in this setting.
\begin{figure}[t]\label{figure:large}
    \centering
    \subfigure[Symmetric case]{\label{figure:loss exact scale different_large}\includegraphics[scale = 0.28]{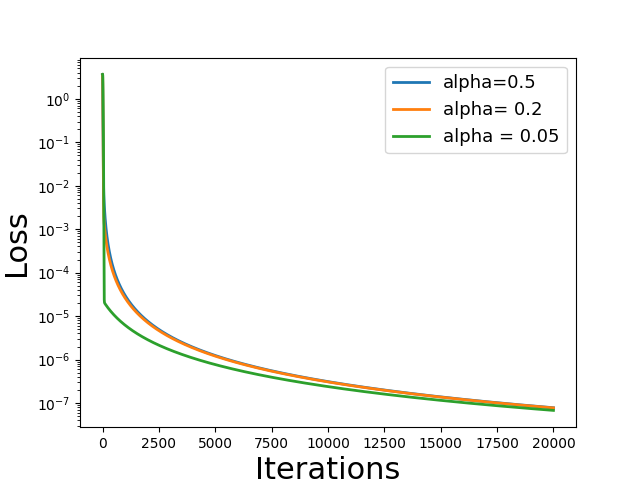}}
    \subfigure[Asymmetric Case]{\label{figure:loss asymmetric_large}\includegraphics[scale = 0.28]{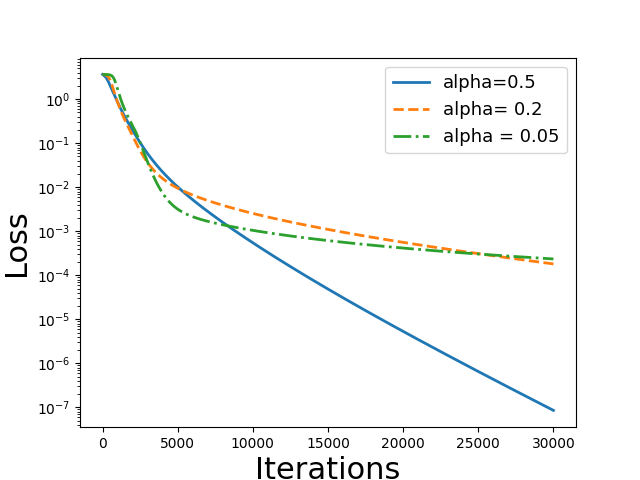}
    }
    \subfigure[Our new method]{\label{figure:algfast_large}\includegraphics[scale=0.28]{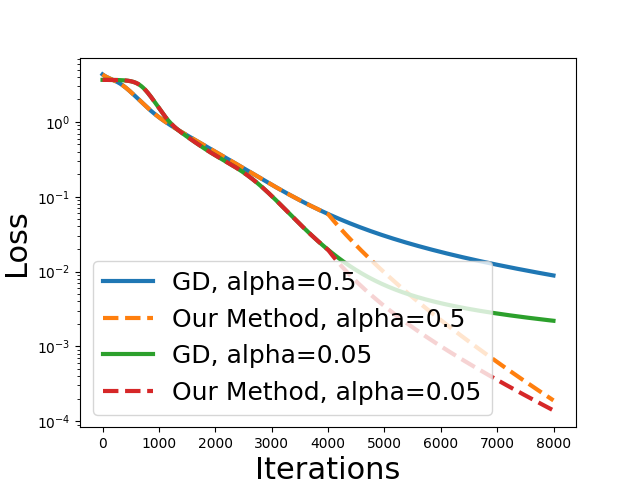}}
    \vspace{-0.4cm}
    \caption{Experiment Results of larger true rank $r=5$ and over-parameterized rank $k=10$.} 
    \vspace{-0.6cm}
\end{figure}

\subsection{Initialization Phase}

If we use GD with small initialization, GD always goes through an initialization phase where the loss is relatively flat, and then converges rapidly to a small error. In this subsection, we plot the first 5000 episodes of Figure \ref{figure:loss asymmetric}. After zooming into the first 5000 iterations, we find the existence of the initialization phase. That is, the loss is rather flat during this phase. We can also see that the initialization phase is longer when $\alpha$ is smaller. The experiment results are shown in Figure \ref{fig:short_asymmetric}.
\begin{figure}[H]
    \centering
    \includegraphics[scale=0.6]{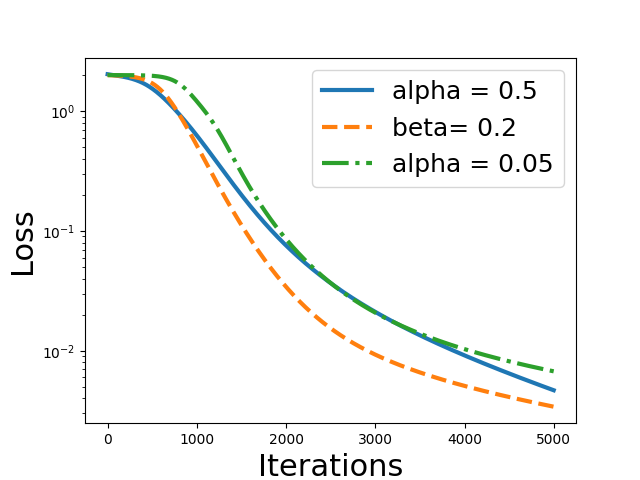}
    \caption{First 5000 episodes of Figure \ref{figure:loss asymmetric}}
    \label{fig:short_asymmetric}
\end{figure}

\end{document}